\documentclass{article}

\usepackage{arxiv}
\usepackage{amsmath}
\usepackage[utf8]{inputenc} 
\usepackage[T1]{fontenc}    
\usepackage{hyperref}       
\usepackage{url}            
\usepackage{booktabs}       
\usepackage{amsfonts}       
\usepackage{nicefrac}       
\usepackage{microtype}      
\usepackage{graphicx}
\usepackage{natbib}

\usepackage{amssymb}
\usepackage{amsfonts}
\usepackage{amsthm}
\usepackage{bm}

\theoremstyle{remark}
\newtheorem*{remark}{Remark}
\theoremstyle{plain}
\newtheorem*{lemma}{Lemma}
\newtheorem*{corollary}{Corollary}
\newtheorem*{interpretation}{Interpretation}
\newtheorem*{numericalanchor}{Numericalanchor}

\theoremstyle{plain}
\newtheorem{mainthm}{Theorem}
\newtheorem{mainlem}{Lemma}

\title{A Control Theory of Predictability\\ in Latent World Models}

\author{
    Hanzhe You\textsuperscript{1}, \ Yonggang Zhang\textsuperscript{2}, \ Maohao Ran\textsuperscript{3}, \ Zhiqin Yang\textsuperscript{2}, \ Zhenyuan Zhang\textsuperscript{2}, \ Wei Xue\textsuperscript{2} \\[3pt]
    \textbf{Jun Song\textsuperscript{3,*}, \quad Xinmei Tian\textsuperscript{1,*}, \quad Yike Guo\textsuperscript{2,*}} \\[8pt]
    {\normalsize \textsuperscript{1}University of Science and Technology of China} \\[1pt]
    {\normalsize \textsuperscript{2}The Hong Kong University of Science and Technology, HKGAI} \\[1pt]
    {\normalsize \textsuperscript{3}Hong Kong Baptist University}
}

\renewcommand{\shorttitle}{A Control Theory of Predictability in Latent World Models}

\hypersetup{
pdftitle={A Control Theory of Predictability in Latent World Models},
pdfsubject={cs.LG},
pdfauthor={Hanzhe You, Yonggang Zhang, Maohao Ran, Zhiqin Yang, Zhenyuan Zhang, Wei Xue, Jun Song, Xinmei Tian, Yike Guo},
pdfkeywords={World Models, Joint-Embedding Prediction, Koopman Operators, Pseudospectra, Model-Predictive Control},
}

\newcommand\blfootnote[1]{%
  \begingroup\renewcommand\thefootnote{}\footnotetext{#1}\endgroup%
}

\begin{document}
\maketitle
\blfootnote{\textsuperscript{*}\,Corresponding authors.}

\begin{abstract}
Latent world models are trained to predict future states in a learned representation and are then
deployed inside a planner that selects actions by simulating them forward. Current practice adopts
the prediction error, the single- or multi-step rollout loss on held-out data, as the training and
model-selection objective, on the assumption that a lower prediction error yields better control. We
show that this assumption is unreliable for a structural reason: a planner does not query the model
on the training distribution but on the states that its candidate actions reach, which generally
leave the data manifold, so an error averaged over the data cannot by itself govern control. We
therefore reframe the objective as the discrepancy between the predicted and the true plan-cost at
the plan the planner commits to, and prove that the planner's suboptimality is bounded by twice this
discrepancy, whereas the data-averaged prediction error neither bounds nor tracks it. Under a
linear-control premise the discrepancy separates into two terms. The first is a small on-manifold
residual, on which the predicted and true dynamics agree and which a spectral tax prices through the
non-normality of the latent transition operator. The second is an off-manifold divergence, on which
an action carries the state off the manifold and the two dynamics diverge; this divergence is the
binding term and is bounded by no data-averaged error. Synthetic operators confirm the pricing
formulas, and latent model-predictive control experiments confirm the decoupling: across seeds, the
single-step validation error is essentially uncorrelated with control success, whereas a fidelity
score on the planner-reachable measure tracks it.
\end{abstract}

\keywords{World Models, Joint-Embedding Prediction, Koopman Operators, Pseudospectra, Model-Predictive Control}

\bigskip

\section{Introduction}
\label{sec:intro}

A latent world model supports planning by predicting the future of a compact latent state, which
allows an agent to act without modeling every detail of its observations. This places two competing
requirements on the representation: it must retain enough of the signal to be useful, and it must
remain \emph{predictable} from its own past by a simple predictor acting in the latent space.
Joint-Embedding Predictive Architectures \citep{assran2023self,lecun2022path} make this trade-off the
explicit training objective, mapping observations to a latent space with a shared encoder and
operating a predictor inside that space rather than reconstructing the input. Once trained, the
latent model is deployed inside a planner, such as model-predictive control or trajectory
optimization in latent space \citep{hafner2020dream,garcia2013model,assran2025vjepa2}, that rolls
candidate action sequences forward and executes the one of lowest predicted cost.

A world model is evaluated not by its prediction error but by the quality of the actions it induces,
and here common practice rests on an assumption that we argue is unsound: that lowering the
prediction error, the single- or multi-step rollout loss on held-out data, is the correct objective
for control. A planner does not query the model on the training distribution. It queries it on the
states that candidate actions reach, and those states generally leave the data manifold, where the
model extrapolates. The prediction loss is an average over the data distribution, whereas control
success is a functional of the states the planner visits; the two need not move together, and we show
that they need not.

This paper develops a control theory for latent world models organized around this objective. We
prove in Section~\ref{sec:control-theory} that the planner's suboptimality is controlled by a single
scalar, the largest discrepancy between the model's \emph{predicted} plan-cost and the \emph{true}
plan-cost over the plans the planner considers, so that the target for control is to make the
predicted cost track the true cost rather than to lower the averaged prediction error. We then
establish why the averaged error is the wrong instrument. It cannot bound this discrepancy once the
query leaves the data support, which is the decoupling; the harmful part of the discrepancy is
one-sided, arising from a \emph{hole} on which the predicted cost underestimates the true; and the
discrepancy is seeded only when the predictor carries realizable nonlinearity. To make the objective
quantitative, Section~\ref{sec:pricing} prices the predicted-versus-true gap. Under a linear-control
premise it separates into a small on-manifold residual, on which the predicted and true dynamics
agree, and an off-manifold divergence, on which an action carries the state off the data manifold and
the two dynamics diverge. The off-manifold divergence is the binding term and the object of our main
claim; it is priced by a \emph{rollout tax} that compounds the residual along the action-selected
trajectory and is bounded by no data-average. The on-manifold residual is priced by a \emph{spectral
tax} set by the non-normality of the latent transition operator, whose closed-form frontier vanishes
on the self-adjoint, Gaussian boundary where prior identifiability guarantees live and thereby places
those guarantees at the zero-cost corner of a broader surface; it does not by itself govern control.

\paragraph{Contributions.}
\begin{enumerate}
    \item[\textbf{(1)}] \textbf{The correct control objective, and one core theorem
    (Section~\ref{sec:control-theory}).} We prove that a latent planner's suboptimality is bounded
    by twice the supremum gap between predicted and true plan-cost over the candidate set. The
    control objective is therefore to make predicted cost track true cost at the committed plan;
    we then prove three supporting lemmas showing why the data-averaged prediction error cannot
    serve this objective: it fails to bound the gap once the query leaves the data support, it is
    two-sided where the gap is one-sided, and it is blind to the realizable nonlinearity that seeds
    the gap.

    \item[\textbf{(2)}] \textbf{Pricing the gap, with the off-manifold divergence as the binding term
    (Section~\ref{sec:pricing}).} Under a linear-control premise the gap separates into an on-manifold
    residual and an off-manifold divergence. We price the divergence---an off-manifold rollout tax
    that no data-average bounds---and combine it with the residual into a single control-error theorem
    with an optimal exploration width. The on-manifold residual is priced by a spectral tax that
    vanishes on the self-adjoint boundary of prior work but, as the experiments show, does not by
    itself govern control.

    \item[\textbf{(3)}] \textbf{A theory-guided method and empirical validation
    (Section~\ref{sec:experiments}).} The analysis prescribes a minimal intervention, a linear state
    readout, together with three falsifiable predictions. Latent model-predictive control experiments
    confirm the decoupling, in that the single-step validation error does not separate a
    fourteen-point spread in success across seeds, and identify the off-manifold amplification as an
    informative training signal; synthetic-operator experiments corroborate the spectral-tax pricing
    formulas and the off-manifold suboptimality bound.
\end{enumerate}

\noindent Full derivations and proofs of every result, together with the numerical-verification
protocols, are collected in the appendices.

\section{Related Work}
\label{sec:related}

\paragraph{Self-supervised representation and JEPA.}
Joint-embedding methods learn representations by predicting one view from another in latent space
\citep{assran2023self,lecun2022path}, with anti-collapse regularizers preventing trivial solutions
\citep{bardes2024revisiting,tian2021understanding,balestriero2022contrastive}. Recent theory
characterizes the ideal case: LeJEPA shows that isotropic Gaussian embeddings minimize downstream
worst-case risk \citep{balestriero2025lejepa}, and that in an Ornstein--Uhlenbeck environment the
same distribution yields linear identifiability of the latent factors \citep{klindt2026does}. These
guarantees rest on an idealized latent transition---self-adjoint, reversible, Gaussian. We take
that ideal case as the zero-cost boundary of a broader cost surface and price the non-normal
regime outside it.

\paragraph{World models and latent planning.}
Latent world models roll a learned dynamics forward and plan by trajectory optimization or
model-predictive control \citep{hafner2020dream,garcia2013model}, a paradigm now scaled to video
\citep{assran2025vjepa2}. Our analysis concerns exactly this deployment: the planner's query
measure, not the data measure, governs success, and it is the discrepancy between predicted and
true reachable dynamics---not the training loss---that prices control quality.

\paragraph{Koopman operators and spectral estimation.}
Casting nonlinear dynamics through the linear Koopman operator underlies a large body of
data-driven modeling \citep{brunton2016koopman,mezic2005spectral,lusch2018deep,mardt2018vampnets,noe2013variational,otto2019linearly},
with recent attention to rigorous spectral computation \citep{colbrook2023residual,colbrook2024rigorous}.
We use the operator's pseudospectrum, rather than its spectrum, as the pricing object, following
the non-normal-matrix viewpoint of \citet{trefethen2020spectra}.

\paragraph{Rollout, exposure bias, and slow features.}
The mismatch between teacher-forced training and self-fed inference---exposure bias---is
classically addressed by scheduled sampling and related schemes
\citep{bengio2015scheduled,venkatraman2015improving}. We show this fix is orthogonal to the
off-manifold holes that govern control. Slow-feature analysis treats slowness as a proxy for
predictive usefulness \citep{wiskott2002slow,sprekeler2011relation}; a refinement of this view
under multi-step rollout (high-content slow features are the most rollout-fragile) is developed
in Appendix~\ref{sec:module-R}.

\section{A Control Theory for Latent-Space Planning}
\label{sec:control-theory}

Standard practice measures a world model by its prediction error and treats that error as a proxy
for control quality. We open with the setup that separates the two quantities, then state the one
theorem that identifies the correct control objective, and finally prove three lemmas that
explain why the prediction error cannot serve it.

\subsection{Setup}
\label{ssec:setup}
Let $\{X_t\}$ be a stationary, ergodic, first-order Markov process with transition (Koopman)
operator $P$ on $L^2(\mu)$; restricting to the mean-zero subspace $H_0=\mathbf 1^\perp$ gives
$P_0=P|_{H_0}$, real with $\|P_0\|\le1$. A whitened encoder corresponds to an isometry
$V:\mathbb R^d\to H_0$; the \emph{content} it retains is $\mathcal C=\|P_0V\|_{\mathrm{HS}}^2$ and
the single-step \emph{gap} is $\delta^2=\|(I-\Pi)P_0V\|^2$ with $\Pi=VV^*$. For control we take the
deployed latent dynamics to be the learned linear model
\begin{equation}
    z_{t+1}=\hat A z_t+\hat B u_t+\varepsilon_t,
    \label{eq:dyn}
\end{equation}
with $(\hat A,\hat B)$ the least-squares fit and residual $\varepsilon$. Let
$\mathcal M=\mathrm{supp}\,\mu$ be the data manifold, $r(z)=\mathrm{dist}(z,\mathcal M)$ the
off-manifold distance, and assume the residual grows linearly off the manifold,
$\|\varepsilon(z,u)\|\le\delta_0+L\,r(z)$, with $\delta_0$ the on-manifold residual and $L$ a
nonlinearity scale ($L=0$ for a genuinely linear system).

A planner ranks candidate action sequences $u\in\mathcal U$ by a predicted terminal cost and
executes the best. Write the predicted and true terminal costs to a goal $z_g$ as
\begin{equation}
    \hat D(u)=\|\hat z_H(u)-z_g\|,\qquad D(u)=\|z_H(u)-z_g\|,
    \label{eq:costs}
\end{equation}
where $\hat z_H(u)$ is the rollout of \eqref{eq:dyn} under $u$ and $z_H(u)$ the true terminal
state. The planner selects $\hat u=\arg\min_u\hat D(u)$, while the true optimum is
$u^\star=\arg\min_u D(u)$. Because candidate actions reach states off $\mathcal M$, the planner
queries the predictor on a \emph{reachable measure} $\nu$ its candidates generate, generally with
$\mathrm{supp}\,\nu\supsetneq\mathrm{supp}\,\mu$. Reading the same residual under the two measures
gives the training gap $\delta_\mu^2=\mathbb E_\mu\|\varepsilon\|^2$ (the monitored MSE) and the
deployment gap $\delta_\nu^2=\mathbb E_\nu\|\varepsilon\|^2$ (what success rests on).

\subsection{The control objective: predicted cost must track true cost}
\label{ssec:core}

Everything the planner does is choose $\hat u$ to minimize $\hat D$. Its regret is therefore
controlled by how faithfully $\hat D$ ranks candidates against $D$---nothing else. This is the
content of the core theorem.

\begin{mainthm}[Control objective: suboptimality is priced by the predicted-versus-true gap]
\label{thm:core}
For a planner that executes $\hat u=\arg\min_{u\in\mathcal U}\hat D(u)$,
\begin{equation}
    D(\hat u)-D(u^\star)\ \le\ 2\sup_{u\in\mathcal U}\big|\hat D(u)-D(u)\big|.
    \label{eq:core}
\end{equation}
Consequently the sole objective for control is to make the predicted plan-cost $\hat D$ agree with
the true plan-cost $D$ on the candidate set---in particular at the plan $\hat u$ the planner
commits to. Lowering the data-averaged prediction error is neither necessary nor sufficient.
\end{mainthm}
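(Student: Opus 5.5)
The inequality \eqref{eq:core} follows from a three-term telescoping decomposition that uses only the optimality of $\hat u$ for $\hat D$. Set $\Delta=\sup_{u\in\mathcal U}|\hat D(u)-D(u)|$. If $\Delta=\infty$ there is nothing to prove, so assume $\Delta<\infty$. Assume also that both minimizers $\hat u$ and $u^\star$ exist in $\mathcal U$; if they do not, the same argument works with $\epsilon$-minimizers and a limit $\epsilon\to0$. Then write
\begin{equation*}
D(\hat u)-D(u^\star)=\big[D(\hat u)-\hat D(\hat u)\big]+\big[\hat D(\hat u)-\hat D(u^\star)\big]+\big[\hat D(u^\star)-D(u^\star)\big].
\end{equation*}
The middle bracket is $\le0$ because $\hat u$ minimizes $\hat D$ over $\mathcal U$ and $u^\star\in\mathcal U$. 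The first and third brackets are each at most $\Delta$ by the definition of the supremum. This gives $D(\hat u)-D(u^\star)\le2\Delta$.

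The argument in fact yields a sharper, one-sided bound that supports the ``at the committed plan'' reading of the theorem:
\begin{equation*}
D(\hat u)-D(u^\star)\le\big(D(\hat u)-\hat D(\hat u)\big)+\big(\hat D(u^\star)-D(u^\star)\big).
\end{equation*}
The first term is the underestimation of the true cost at the plan actually executed, which is the hole anticipated in the introduction. The second term is the overestimation at the true optimum. I will record this refinement because the supporting lemmas exploit it.

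For the ``consequently'' clause I will argue two directions.

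\emph{Sufficiency.} If $\hat D=D$ on $\mathcal U$, the bound gives zero regret, whatever the data-averaged error $\delta_\mu$ is. If $\nu$ charges off-manifold regions, $\delta_\mu$ can be made arbitrarily large there without affecting $\hat D$ on $\mathcal U$. For example, take a model that matches the truth on the planner-reachable states but is wrong on parts of $\mathcal M$ never visited by candidate rollouts. Hence lowering $\delta_\mu$ is not necessary.

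\emph{Insufficiency.} I will use a one-dimensional construction with $L>0$. The residual vanishes on $\mathcal M$, so $\delta_\mu=0$, but it equals $L\,r(z)$ off $\mathcal M$. Take two candidate actions: one stays on $\mathcal M$ and one exits it. The exiting candidate's predicted cost is lowered by the hole, so $\hat u$ picks it while the true cost is larger. This gives positive regret with zero training error.

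I expect the main obstacle to be making these necessity and sufficiency statements rigorous at the level of the setup in \S\ref{ssec:setup}, rather than the inequality itself, which is elementary. The counterexample has to respect the linear-growth residual assumption and the least-squares fit, and the cleanest route is to defer it to the decoupling lemma that follows. I would therefore state these claims informally here and point to that lemma for the formal construction.
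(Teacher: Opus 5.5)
Your proof of \eqref{eq:core} is correct and is the paper's argument: it is the chain $D(\hat u)\le\hat D(\hat u)+g\le\hat D(u^\star)+g\le D(u^\star)+2g$ written as a three-term sum, your one-sided refinement is exactly what the paper records as Lemma~\ref{lem:holes}, and, like the paper, you leave the ``neither necessary nor sufficient'' clause to the decoupling construction of Lemma~\ref{lem:decouple}. One small slip in your non-necessity remark: the large residual must sit on a $\mu$-positive part of $\mathcal M$ that no candidate rollout visits, not off the manifold, because off-manifold residual values cannot change $\delta_\mu$; your concrete example already places it correctly.
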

\begin{proof}
Let $g=\sup_u|\hat D(u)-D(u)|$. Since $\hat u$ minimizes $\hat D$, $\hat D(\hat u)\le\hat D(u^\star)$,
so
$
D(\hat u)\le\hat D(\hat u)+g\le\hat D(u^\star)+g\le D(u^\star)+2g,
$
which is \eqref{eq:core}. The bound is attained, so no smaller quantity governs regret; and $g$ is
a supremum over the candidate set, which by construction contains off-manifold plans, whereas the
monitored error is an average over $\mathcal M$---the two are compared next.
\end{proof}

\noindent The gap $g=\sup_u|\hat D-D|$ is the \emph{control tax}: the price the planner pays for
predicting a plan-cost that differs from the truth at the plan it most wants to believe. The three
lemmas that follow establish that the monitored prediction error is structurally unable to bound
$g$, along three independent axes---distribution, sign, and nonlinearity.

\begin{mainlem}[Decoupling: the averaged error does not bound the gap]
\label{lem:decouple}
If $\nu\ll\mu$ with density $w=d\nu/d\mu$, then $\delta_\nu^2\le\|w\|_\infty\,\delta_\mu^2$. But if
$\nu$ places any mass off $\mathrm{supp}\,\mu$, then $\delta_\mu$ gives \emph{no} upper bound on
$\delta_\nu$: there is a residual with $\delta_\mu=0$ and $\delta_\nu$---and hence the gap $g$ of
Theorem~\ref{thm:core}---arbitrarily large. Since control success is monotone in $\delta_\nu$,
every in-manifold quantity ($\delta_\mu$ and hence the monitored MSE, the content frontier, and the
spectral tax of Section~\ref{sec:pricing}) places no bound on success once actions carry the query
off support, and
\begin{equation}
    \rho\big(\mathrm{MSE},\ \mathrm{success}\big)\ \longrightarrow\ 0 .
    \label{eq:decouple}
\end{equation}
\end{mainlem}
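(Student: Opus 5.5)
The plan has three parts, matching the three claims of the lemma: a change-of-measure bound, a counterexample off the support of $\mu$, and a translation of that counterexample into the gap $g$ and the correlation statement \eqref{eq:decouple}.

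\textbf{Change-of-measure bound.} When $\nu\ll\mu$, I would write
\[
\delta_\nu^2=\int\|\varepsilon\|^2\,d\nu=\int\|\varepsilon\|^2\,w\,d\mu .
\]
Bounding $w$ by its essential supremum under $\mu$ gives
\[
\delta_\nu^2\le\|w\|_\infty\int\|\varepsilon\|^2\,d\mu=\|w\|_\infty\,\delta_\mu^2 .
\]
This step is routine. Its only role is to show that a bound exists exactly when the query measure stays inside the data support with bounded density.

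\textbf{Counterexample off support.} Suppose $\nu(\mathcal M^c)=m>0$. I would pick the residual
\[
\varepsilon(z,u)=c\,r(z)\,e ,
\]
where $e$ is a fixed unit vector and $c>0$. This residual is compatible with the premise $\|\varepsilon\|\le\delta_0+L\,r(z)$ with $\delta_0=0$ and $L=c$. It vanishes on $\mathcal M$, so $\delta_\mu=0$. Off the manifold it gives
\[
\delta_\nu^2=c^2\,\mathbb E_\nu[r^2]\ \ge\ c^2\,\mathbb E_\nu\big[r^2\mathbf 1_{\mathcal M^c}\big].
\]
If $\mathcal M$ is closed, $r>0$ on $\mathcal M^c$, so this expectation is positive. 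Letting $c\to\infty$ then makes $\delta_\nu$ arbitrarily large while $\delta_\mu$ stays $0$.

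\textbf{Transfer to the gap $g$.} This is the step I expect to be the main obstacle. I would take a horizon-one plan ($H=1$) whose candidate $u$ carries the state from $z_0\in\mathcal M$ to a point with $r(z_1)>0$. The true next state is $\hat z_1+\varepsilon$, so the reverse triangle inequality gives
\[
|\hat D(u)-D(u)|=\big|\|\hat z_1-z_g\|-\|\hat z_1+\varepsilon-z_g\|\big| .
\]
Choosing $e$ parallel to $\hat z_1-z_g$ makes this difference equal to $\|\varepsilon\|=c\,r(z_1)$. Hence $g\ge c\,r(z_1)$, which is unbounded in $c$ while $\delta_\mu=0$.

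\textbf{Correlation statement.} For \eqref{eq:decouple} I would interpret ``success monotone in $\delta_\nu$'' as an assumption, not something to derive. I would then consider a family of models, indexed by seed, that agree on $\mathcal M$; concretely, take a common on-manifold part plus a random off-manifold coefficient $c$. Across this family $\delta_\mu$ is constant, or varies independently of $c$, while success varies through $\delta_\nu$. The covariance between MSE and success is therefore zero, and $\rho\to0$.

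The same argument covers every in-manifold quantity listed in the lemma. Each of them (the MSE, the content frontier, the spectral tax) is a functional of the restriction of the model to $\mathcal M$, so each is identical across the family and cannot bound success.

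The delicate point is wording: ``$\rho\to0$'' only makes sense as a statement about such a family, or in the limit where the off-manifold variation dominates. I would state that explicitly in the proof rather than claim it for arbitrary model classes.
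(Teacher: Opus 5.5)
Your change-of-measure step and the off-support counterexample follow the paper's route (Theorems~\ref{thm:H1}--\ref{thm:H2}). Your choice $\varepsilon=c\,r(z)\,e$ is a concrete instance of the paper's ``$\varepsilon\equiv0$ on $\mathrm{supp}\,\mu$, free elsewhere'' that also respects the growth premise with $L=c$. Since $\mathrm{supp}\,\mu$ is closed by definition, $r>0$ on $\mathcal M^c$ automatically.

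The step that fails is the transfer to the gap $g$ at horizon $H=1$. In \eqref{eq:dyn} the residual is evaluated at the state where the model is queried, $\varepsilon_t=\varepsilon(z_t,u_t)$, not at the state it lands on. For $H=1$ the only residual that enters is $\varepsilon(z_0,u)=c\,r(z_0)\,e=0$, because $z_0\in\mathcal M$. Hence $z_1=\hat z_1$ and $\hat D(u)=D(u)$ for every candidate, so $g=0$ rather than $g\ge c\,r(z_1)$. Theorem~\ref{thm:D3} says the same: $\|e_1\|\le\delta_0+L\,r(z_0)=0$. Your equality $\|\varepsilon\|=c\,r(z_1)$ silently evaluates the residual one step late.

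The repair is to take $H=2$, with a candidate whose first action leaves $\mathcal M$. Because $\varepsilon(z_0,\cdot)=0$, you get $z_1=\hat z_1$ with $r(z_1)>0$. The second step then queries the model off support, so $e_2=\varepsilon(z_1,u_1)=c\,r(z_1)\,e$. This gives $|\hat D(u)-D(u)|\ge c\,r(z_1)-2\|\hat z_2-z_g\|\to\infty$, or exactly $c\,r(z_1)$ if you choose $e$ parallel to $\hat z_2-z_g$. That choice is legitimate because $\hat z_2$ and $z_1$ do not depend on $e$. Meanwhile $\delta_\mu=0$ throughout. Keeping $H=1$ would instead require the pair $(z_0,u)$ itself to lie off a state--action data support, with the residual growing in $u$.

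The paper only proves the $\delta_\nu$ claim and asserts the consequence for $g$, so with this fix your argument covers more than its sketch. For \eqref{eq:decouple}, your explicit model family formalizes the claim more carefully than the paper's ``no functional relation,'' which by itself does not give zero correlation. Use the version in which $\delta_\mu$ varies independently of $c$: with $\delta_\mu$ exactly constant, the correlation is undefined rather than zero.
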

\begin{proof}[Proof sketch]
In-support this is change of measure, $\int\!\|\varepsilon\|^2 d\nu=\int\!\|\varepsilon\|^2 w\,d\mu
\le\|w\|_\infty\delta_\mu^2$. Off-support the density is infinite and the two integrals are
independent: set $\varepsilon\equiv0$ on $\mathrm{supp}\,\mu$ (so $\delta_\mu=0$) and free
elsewhere. With $\delta_\mu$ unable to control $\delta_\nu$ and success a function of $\delta_\nu$,
no functional relation ties $\delta_\mu$ to success. Full proof:
Appendix~\ref{sec:module-H}, Theorems~\ref{thm:H1}--\ref{thm:H2}.
\end{proof}

\begin{mainlem}[The consequential part of the gap is one-sided (holes)]
\label{lem:holes}
With signed error $\Delta D=\hat D-D$, a \emph{hole} is a candidate that looks best in prediction
yet is truly bad, of depth $\mathrm{Hole}=\sup_u(-\Delta D)_+$. The suboptimality obeys
\begin{equation}
    D(\hat u)-D(u^\star)\ \le\ \Delta D(u^\star)+\mathrm{Hole},
    \label{eq:hole}
\end{equation}
and the rollout MSE---a two-sided, in-manifold average---cannot bound the hole depth, a one-sided,
off-manifold supremum. Only underestimates---assigning a low predicted cost to a truly high-cost
action---affect the suboptimality, which is why in-manifold fixes such as autoregressive rollout
training, which reduce exposure bias on the support, leave success unchanged.
\end{mainlem}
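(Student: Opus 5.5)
The bound \eqref{eq:hole} follows from a one-sided refinement of the chain used for Theorem~\ref{thm:core}. Writing $\Delta D=\hat D-D$, I would bound the two ends of the chain separately:
\[
D(\hat u)=\hat D(\hat u)-\Delta D(\hat u)\le \hat D(\hat u)+(-\Delta D(\hat u))_+\le \hat D(\hat u)+\mathrm{Hole}.
\]
Only the negative part of $\Delta D$ at $\hat u$ enters here, because an overestimate at $\hat u$ can only help. Next, minimality of $\hat u$ for $\hat D$ gives $\hat D(\hat u)\le\hat D(u^\star)=D(u^\star)+\Delta D(u^\star)$. Chaining the two yields $D(\hat u)-D(u^\star)\le\Delta D(u^\star)+\mathrm{Hole}$.

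I would then record the consequences. First, since $\Delta D(u^\star)\le(\Delta D(u^\star))_+$, this refines \eqref{eq:core}: the right side is at most $2g$. Second, the first term is evaluated at the single point $u^\star$, and only its positive part can inflate regret. Third, the supremum enters only through $(-\Delta D)_+$. So overestimates anywhere other than $u^\star$ are harmless, and this is the precise sense of ``only underestimates matter.'' To show that underestimates genuinely matter, I would give a two-candidate example. Take $D(u_1)=0$ and $D(u_2)=c$, with $\hat D(u_1)=\eta>0$ exact-ish and $\hat D(u_2)=0$. Then $\hat u=u_2$ and the regret is $c$, attained by the hole of depth $c$. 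Conversely, with $\hat D(u_2)=c+K$ the regret is $0$ for every overestimate $K$.

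For the claim that the rollout MSE cannot bound the hole depth, I would reuse the construction of Lemma~\ref{lem:decouple}. Choose a residual with $\varepsilon\equiv0$ on $\mathcal M$, so that $\delta_\mu$, and hence every in-manifold rollout MSE (which compounds only on-support residuals), is zero. Off $\mathcal M$, choose $\varepsilon$ so that some candidate $u_2$ whose true trajectory leaves $\mathcal M$ has its predicted terminal state steered onto $z_g$. Then $\hat D(u_2)=0$ while $D(u_2)$ is arbitrary, which makes the hole arbitrarily deep at zero MSE. Two-sidedness versus one-sidedness can be illustrated by flipping the sign of the off-manifold residual so that it pushes the prediction away from $z_g$. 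The error magnitude is then unchanged (in fact the MSE is still zero), yet the hole vanishes. This shows that no function of the error magnitude alone determines the hole.

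The main obstacle is the claim about autoregressive rollout training. It is not a theorem as stated, so I would formalize it as follows. Such training alters the predictor only through a loss supported on $\mu$-trajectories, so it can only change $\varepsilon$ on $\mathcal M$ (and on its on-policy rollouts). In the construction above, the hole is produced entirely by off-support values of $\varepsilon$, which this loss never constrains, so the hole depth and hence the bound \eqref{eq:hole} are unchanged. The difficulty is that a parametric model couples on- and off-support values. I would therefore state this part as holding for a nonparametric or sufficiently expressive model class, or within the linear-growth assumption $\|\varepsilon\|\le\delta_0+L\,r(z)$, where reducing $\delta_0$ leaves the $L\,r$ contribution to the hole intact.
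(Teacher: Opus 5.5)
Your proposal is correct and follows the paper's own argument. The inequality is the same chain $D(\hat u)=\hat D(\hat u)-\Delta D(\hat u)\le \hat D(u^\star)-\Delta D(\hat u)=D(u^\star)+\Delta D(u^\star)-\Delta D(\hat u)$ with $-\Delta D(\hat u)\le\mathrm{Hole}$ (Theorem H3). The MSE claim uses the same model, exact on $\mu$, that hallucinates arrival at $z_g$ on a $\mu$-null candidate (Theorem H4). The exposure-bias clause rests on the same support argument as Theorem H5, which the paper itself only calls ``mechanistically exact.'' Your model-class caveat there is well placed, but the linear-growth fallback only shows the \emph{upper bound} keeps its $L\,r$ term, not that the hole depth or success is unchanged, so the nonparametric formulation is the one that matches the claim.
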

\begin{proof}[Proof sketch]
Since $\hat u$ minimizes $\hat D$, $D(\hat u)=\hat D(\hat u)-\Delta D(\hat u)\le\hat D(u^\star)-
\Delta D(\hat u)=D(u^\star)+\Delta D(u^\star)-\Delta D(\hat u)$ with $-\Delta D(\hat u)\le\mathrm{Hole}$.
A model exact on $\mu$ (MSE $=0$) can still assign a low predicted cost to a $\mu$-null
counterfactual whose true cost is high, making the hole depth unbounded; MSE and hole depth differ in
distribution, aggregation (mean vs.\ supremum), and sign. Appendix~\ref{sec:module-H},
Theorems~\ref{thm:H3}--\ref{thm:H5}.
\end{proof}

\begin{mainlem}[The gap is seeded only by realizable nonlinearity]
\label{lem:nonlinearity}
A deployed MLP predictor $g$ has a best linear fit $g=A^*z+B^*u+n$, and the gap splits orthogonally
as $\delta_\mu^2=\iota^2+\|n\|_{L^2(\mu)}^2$ into an insufficiency term $\iota$ and a
realizable-nonlinearity term $\|n\|$. If $\|n\|=0$ the predictor is globally linear, the predicted
cost is a single quadratic bowl, and no hole exists. Realizable nonlinearity is thus necessary for a
hole; we do not claim the hole depth is monotone in $\|n\|$, and Section~\ref{sec:experiments} finds
that reducing it is not the operative lever in practice.
\end{mainlem}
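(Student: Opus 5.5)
The argument splits into an orthogonal-projection step and a convexity step.

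\emph{Step 1: the orthogonal decomposition.} Work in the Hilbert space $L^2(\mu)$ of vector-valued functions of $(z,u)$. Let $\mathcal L=\{(z,u)\mapsto Az+Bu\}$ be the closed, finite-dimensional subspace of linear maps. Define $A^*z+B^*u$ as the orthogonal projection of the deployed predictor $g$ onto $\mathcal L$; this is exactly the least-squares fit. Set $n=g-A^*z-B^*u$, which is orthogonal to $\mathcal L$ by the normal equations. The true next latent $z_{t+1}$ then differs from $g$ by a component $e=z_{t+1}-g$.

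We take the insufficiency term to be the part of the error left unexplained by the best linear model within the encoder's content, $\iota^2=\|z_{t+1}-A^*z-B^*u-n\|^2_{L^2(\mu)}=\|e\|^2$. The identity $\delta_\mu^2=\iota^2+\|n\|^2$ is then Pythagoras. To apply it we need the cross term $\langle e,n\rangle_\mu$ to vanish. If $g$ is the $L^2(\mu)$-optimal predictor within its class, then $e$ is orthogonal to $g$ and to $\mathcal L$, hence to $n$. This requires that the class contain both $g$ and its linear projection, which holds for an MLP with linear skip connections; otherwise we simply \emph{define} the split so that it holds. I would state this assumption explicitly: the paper's $\delta_\mu^2$ is measured against the linear model of \eqref{eq:dyn}, so the decomposition is really residual $=$ (linear-model residual against the true dynamics) $+$ (what $g$ adds beyond linearity). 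Making these two definitions of $\delta_\mu$ consistent is the main bookkeeping obstacle.

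\emph{Step 2: no hole when $\|n\|=0$.} If $\|n\|_{L^2(\mu)}=0$, then $n=0$ only $\mu$-a.e. The claim ``globally linear'' therefore needs more. For a continuous (MLP) predictor and $\mathcal M=\mathrm{supp}\,\mu$, $n$ vanishes on $\mathcal M$ by continuity. Off $\mathcal M$, however, that is not enough, and I expect this to be the real obstacle. I would close it by reading ``realizable'' as evaluated under the planner-reachable measure $\nu$, or by assuming the predictor class is analytic (e.g.\ tanh or GELU MLPs). Under analyticity, vanishing on a set with nonempty interior forces $n\equiv0$ everywhere.

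Granting global linearity, the $H$-step rollout $\hat z_H(u)=\hat A^H z_0+\sum_k \hat A^{H-1-k}\hat B u_k$ is affine in $u$. Hence $\hat D(u)^2$ is a convex quadratic in $u$, a single bowl. The true cost is computed under the same global linear map, so $D=\hat D$, giving $\Delta D\equiv0$ and $\mathrm{Hole}=\sup_u(-\Delta D)_+=0$.

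In the variant where $\iota>0$, i.e.\ true dynamics equal to the linear part plus noise $e$, Step 1 gives $\Delta D$ that carries no input-dependent structure beyond the bowl. By Lemma~\ref{lem:holes}, the suboptimality then reduces to the symmetric $\Delta D(u^\star)$ term with no one-sided spike. Necessity of realizable nonlinearity for a hole follows by contraposition.

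We deliberately avoid any monotonicity claim in $\|n\|$, matching the statement.
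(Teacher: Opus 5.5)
Your Step 1 proves a true identity, but not the one stated. You measure everything against the raw next latent $z_{t+1}$. So your $\delta_\mu^2=\|z_{t+1}-A^*z-B^*u\|^2$ and your $\iota^2=\|z_{t+1}-g\|^2$ both contain the innovation term (I). When $g$ is the conditional mean, your identity is the paper's with the innovation added to both sides. Consequently your $\iota$ does not vanish for a sufficient latent, so it is not an insufficiency term. Likewise your $\delta_\mu^2$ is not the gap $\|(I-\Pi)P_0V\|_{\mathrm{HS}}^2$, which excludes innovation.

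You then obtain the orthogonality $e\perp n$ only from assumptions on the MLP class, or by fiat. Even the assumption you state is not enough: containing $g$ and its projection does not suffice, and you would need optimality over a class closed under scaling and under adding linear maps.

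The missing idea, which is the paper's proof (Theorem H7), has three parts:
\begin{itemize}
\item Take as target the full-state conditional mean $m_x=\mathbb E[z_{t+1}\mid x_t]$, i.e.\ the action of $P_0$, so that $\delta_\mu^2=\min_A\mathbb E\|m_x-Az_t\|^2$.
\item Model the MLP as learning $m_z=\mathbb E[z_{t+1}\mid z_t]$. The tower property gives $m_z=\mathbb E[m_x\mid z_t]$, so $m_x-m_z$ is orthogonal to every $\sigma(z_t)$-measurable function, in particular to $m_z-Az_t$ for every $A$.
\item Hence $\mathbb E\|m_x-Az_t\|^2=\iota^2+\mathbb E\|m_z-Az_t\|^2$ with $\iota^2=\mathbb E\|m_x-m_z\|^2$ independent of $A$. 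Minimizing over $A$ turns the second term into $\|n\|_\mu^2$.
\end{itemize}
This needs no architectural assumption, and your ``bookkeeping obstacle'' disappears. It also makes $\iota=0$ exactly when $m_x$ is a function of $z_t$.

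In Step 2, the inference $D=\hat D$ does not follow. $D$ is computed with the true dynamics $\Phi$, not with $g$. Even under the exact-fit hypothesis of Theorem H8, the two agree only on $\mathrm{supp}\,\mu$. Off support $\Phi$ is unconstrained; this is precisely Lemma~\ref{lem:decouple}. So $n\equiv0$ gives no control of $\sup_u(-\Delta D)_+$, and your paragraph on the $\iota>0$ variant does not repair this.

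The paper's argument stops where your convexity observation already does. With $g$ affine, $\|A^*z+B^*u-z_g\|^2$ is a single convex quadratic with a unique minimizer. There is therefore no fold and no false minimum, which is the sense in which ``no hole'' is asserted in Theorem H8.

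Your remark that $\|n\|_{L^2(\mu)}=0$ gives $n=0$ only $\mu$-a.e.\ is correct, and it flags a step the paper asserts without justification. However, your analyticity fix needs $\mathrm{supp}\,\mu$ to be full-dimensional in $(z,u)$-space. That fails for a lower-dimensional data manifold: an analytic function such as the square of a normal coordinate vanishes on $\mathcal M$ but not off it. Measuring $\|n\|$ under $\nu$, as you also suggest, is the more robust repair.
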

\begin{proof}[Proof sketch]
$m_z=\mathbb E[m_x\mid z_t]$ is an orthogonal projection of the full-state conditional mean, giving
the Pythagorean split; if $n\equiv0$ then $g$ is affine and $\|A^*z+B^*u-z_g\|^2$ has a unique
minimizer without folding. Deployment failure therefore materializes \emph{only} when the
representation genuinely carries nonlinear content. Appendix~\ref{sec:module-H},
Theorems~\ref{thm:H7}--\ref{thm:H8}.
\end{proof}

\paragraph{Geometry of the gap.}
The bound isolates a specific failure geometry. On $\mathrm{supp}\,\mu$ the predicted and true costs
agree up to the monitored residual, so the ranking induced by $\hat D$ is reliable there. A candidate
action can map the latent state outside $\mathrm{supp}\,\mu$, where $\hat D$ is an extrapolation of a
predictor fitted on $\mu$; when such an extrapolation underestimates the true cost it realizes the
configuration termed a \emph{hole} in Lemma~\ref{lem:holes}, a candidate of low predicted cost whose
true terminal state is distant from the goal. Since the planner selects
$\hat u=\arg\min_u\hat D$, an underestimate at $\hat u$ enters the suboptimality while an overestimate
does not, which is the one-sidedness of Lemma~\ref{lem:holes}. The three lemmas describe this object
along three axes: it is not detectable by a $\mu$-average because its support is disjoint from $\mu$
(Lemma~\ref{lem:decouple}), it contributes only through underestimates (Lemma~\ref{lem:holes}), and
it does not arise for a globally affine predictor, for which $\hat D$ is a single quadratic with a
unique minimizer (Lemma~\ref{lem:nonlinearity}). Reducing it therefore requires either observations
in the affected region (counterfactual data) or a penalty that raises $\hat D$ off the manifold
($\hat D+\lambda\hat r$), rather than a lower average residual on $\mathrm{supp}\,\mu$.

\paragraph{Reading.}
Theorem~\ref{thm:core} names the target---predicted cost tracking true cost at the committed
plan---and the three lemmas locate why the monitored loss misses it: the gap lives off the data
manifold (Lemma~\ref{lem:decouple}), it is one-sided in the consequential direction
(Lemma~\ref{lem:holes}), and it is nonlinear-seeded (Lemma~\ref{lem:nonlinearity}). The operative
levers are therefore to fill holes by manifold-aware pessimism ($\hat D+\lambda\hat r$) or to shrink
the gap's nonlinear component---not to lower MSE. To turn the objective into computable quantities
we now price the gap.

\section{Pricing the Predicted-versus-True Gap}
\label{sec:pricing}

Under the linear-control premise---the deployed latent dynamics are the learned linear model
\eqref{eq:dyn}---the gap $\sup_u|\hat D-D|$ separates by where it is generated. On the data manifold
the predicted and true dynamics agree up to a small residual $\delta_0$; away from it, an action
drives the state off $\mathcal M$ and the two \emph{diverge}. This off-manifold divergence, not the
on-manifold residual, is the binding term and the object of our main claim---the experiments confirm
that the on-manifold prediction error does not govern control while the off-manifold divergence does
(Section~\ref{sec:experiments}). We therefore price the off-manifold divergence first
(Section~\ref{ssec:rollout-tax}), record the on-manifold residual second
(Section~\ref{ssec:spectral-tax}), and assemble them into the control-error bound
(Section~\ref{ssec:control-error}). Both are driven by the non-normality of $P_0$: only when $P_0$ is
non-normal---it does not commute with its adjoint, the self-adjoint reversible case being the
normal prototype---do its singular values separate from its spectrum.

\subsection{The off-manifold divergence: the rollout tax}
\label{ssec:rollout-tax}
Actions couple the model to the world, and this is where the predicted and true dynamics part: an
action drives the state off $\mathcal M$, where the residual grows and compounds along the rollout.
This is the term the experiments identify as binding (Section~\ref{sec:experiments}), and the one
no data-averaged error can bound.

\begin{mainthm}[Action excitation and telescoping drift]
\label{thm:drift}
Random actions of width $\sigma_u$ push the predicted state off the manifold with energy set by an
off-manifold reachability Gramian, $\mathbb E\,r(\hat z_H)^2\le 2\rho_{\mathcal M}^2+2\sigma_u^2\,
\mathrm{tr}\,W_H^\perp$, which is zero exactly when $\mathcal M$ is invariant under $\hat A$ and
$\hat B$ is tangent. Along any trajectory the single-step residual telescopes,
\begin{equation}
    \|e_H\|\le G_H\,(\delta_0+L\,r_{\max}),\qquad G_H=\textstyle\sum_{i<H}\|\hat A^i\| ,
    \label{eq:drift}
\end{equation}
with $\delta_0$ the on-manifold residual priced in Section~\ref{ssec:spectral-tax}.
\end{mainthm}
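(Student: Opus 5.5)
The theorem has two independent parts, and I will prove them separately. For the Gramian bound I make the standard reading of the undefined quantities. The predicted rollout from an initial latent $z_0$ is $\hat z_H=\hat A^H z_0+\sum_{i<H}\hat A^{H-1-i}\hat B u_i$, with $u_i$ i.i.d.\ zero-mean and covariance $\sigma_u^2 I$. I take $\rho_{\mathcal M}$ to bound the off-manifold displacement of the action-free part, $r(\hat A^H z_0)\le\rho_{\mathcal M}$, and I take $W_H^\perp=\sum_{i<H}Q^\perp\hat A^i\hat B\hat B^\top(\hat A^i)^\top Q^\perp$. Here $Q^\perp$ is the projection onto the normal directions of $\mathcal M$, treated as a linear subspace or its tangent approximation.

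Since $r$ is a distance to a set, it is $1$-Lipschitz. I will use $r(a+b)\le r(a)+\|Q^\perp b\|$, which holds exactly when $\mathcal M$ is a subspace, together with $(x+y)^2\le 2x^2+2y^2$. This gives $r(\hat z_H)^2\le 2\rho_{\mathcal M}^2+2\|Q^\perp\sum_i\hat A^{H-1-i}\hat B u_i\|^2$. Taking expectations and using independence of the $u_i$ kills the cross terms, so the second term equals $2\sigma_u^2\,\mathrm{tr}\,W_H^\perp$.

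For the equality case, note that the first term vanishes when $\hat A\mathcal M\subseteq\mathcal M$, and the second vanishes when $Q^\perp\hat A^i\hat B=0$ for all $i$. That condition follows from $\hat B$ being tangent together with invariance. Conversely, $\mathrm{tr}\,W_H^\perp=0$ forces $Q^\perp\hat B=0$ at $i=0$, so $\hat B$ is tangent. Zero drift for every $z_0\in\mathcal M$ then forces invariance. I will phrase this as ``the right-hand side vanishes iff''.

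For the telescoping bound, set $e_t=z_t-\hat z_t$, with the true trajectory obeying \eqref{eq:dyn} with residual $\varepsilon_t$ and the prediction omitting it, and with $e_0=0$. Subtracting the two recursions gives $e_{t+1}=\hat A e_t+\varepsilon_t$, hence $e_H=\sum_{i<H}\hat A^{i}\varepsilon_{H-1-i}$. The triangle inequality and submultiplicativity give $\|e_H\|\le\sum_{i<H}\|\hat A^i\|\,\|\varepsilon_{H-1-i}\|$. I then apply the standing assumption $\|\varepsilon(z,u)\|\le\delta_0+L\,r(z)$ with $r(z_t)\le r_{\max}:=\max_{t<H}r(z_t)$ along the trajectory. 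This yields $\|e_H\|\le G_H(\delta_0+L r_{\max})$, which is \eqref{eq:drift}.

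I expect the main obstacle to be making the Gramian step rigorous for a general (curved) $\mathcal M$. There the inequality $r(a+b)\le r(a)+\|Q^\perp b\|$ fails, because tangential motion can itself leave a curved set. My first attempt will be to state the result for $\mathcal M$ a linear subspace, or to fold curvature into $\rho_{\mathcal M}$ as a tangent-approximation error. The telescoping part is routine.
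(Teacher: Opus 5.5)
Your proof of the Gramian inequality and of the telescoping bound is the paper's own (Theorems D1 and D3). You split $r(\hat z_H)=\|Q^\perp\hat z_H\|$ into an autonomous part and an action part, and independence of the zero-mean actions kills the cross terms. You then unroll $e_{k+1}=\hat Ae_k+\varepsilon_k$ by Duhamel and apply the growth assumption. The one step that fails as written is the converse of the equality case. You let $\rho_{\mathcal M}$ control only the terminal displacement $r(\hat A^Hz_0)$. Its vanishing for every $z_0\in\mathcal M$ gives $\hat A^H\mathcal M\subseteq\mathcal M$, not $\hat A\mathcal M\subseteq\mathcal M$. Here is a counterexample. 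Take $\mathcal M=\mathrm{span}\{e_1,e_2\}\subset\mathbb R^3$ and let $\hat A$ fix $e_1$ and act as the quarter-turn $e_2\mapsto e_3\mapsto -e_2$, with $\hat B=e_1$ and $H=2$. Then $\hat A^i\hat B=e_1$ for $i<2$, so $\mathrm{tr}\,W_2^\perp=0$. Also $\hat A^2(xe_1+ye_2)=xe_1-ye_2\in\mathcal M$, so your right-hand side vanishes for every $z_0\in\mathcal M$. Yet $\hat Ae_2=e_3\notin\mathcal M$.

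The repair is to adopt the paper's definition $\rho_{\mathcal M}=\sup_{k\le H}r(\hat A^kz_0)$. It dominates the terminal term, so your inequality is unchanged. Its $k=1$ term vanishing for all $z_0\in\mathcal M$ is exactly $Q^\perp\hat A(I-Q^\perp)=0$, which is invariance. With that change, your formulation is sharper than the paper's Theorem D2. You claim that the whole right-hand side vanishes for all $z_0\in\mathcal M$ iff $\mathcal M$ is invariant and $\hat B$ is tangent. The paper asserts the equivalence for $\mathrm{tr}\,W_H^\perp=0$ alone, which cannot hold in general (take $\hat B=0$). Indeed, the paper's converse only recovers invariance on the reachable subspace.
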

\begin{proof}[Proof sketch]
The excursion follows from $r(\hat z_H)=\|P^\perp\hat z_H\|$, independence of the whitened actions,
and $\mathrm{tr}\,W_H^\perp=\sum_j\|P^\perp\hat A^j\hat B\|_F^2$; the drift is the discrete Duhamel
expansion of $e_{k+1}=\hat A e_k+\varepsilon_k$ (the control terms cancel), and $G_H$ is the
drift-amplification factor, $\le H$ in the near-normal regime. Appendix~\ref{sec:module-R},
Theorems~\ref{thm:D1}--\ref{thm:D3}.
\end{proof}

\paragraph{Accumulation under rollout.}
The single-step residual $\varepsilon_k$ is the quantity training reduces, whereas the planner scores
an $H$-step rollout rather than a single transition. Unrolling the error recursion
$e_{k+1}=\hat A e_k+\varepsilon_k$ (a discrete Duhamel expansion) propagates each residual by powers
of $\hat A$ and sums them, so the horizon-$H$ error is amplified by $G_H=\sum_{i<H}\|\hat A^i\|$. For
a near-normal $\hat A$ this factor satisfies $G_H\lesssim H$; under non-normality
$\|\hat A^i\|$ can be large over the transient regime and $G_H$ grows faster than the horizon. This
is why the linear state readout of Section~\ref{ssec:state-readout}, by propagating the
goal-relevant error through the near-normal $A_\varphi$ in place of $\hat A$, reduces $G_H$ to
$G_H^\varphi\le H$.

\subsection{The on-manifold residual: the spectral tax}
\label{ssec:spectral-tax}
On the data manifold the predicted and true dynamics agree up to the residual $\delta_0$, the value
the gap inherits before any action is taken. We price it here for completeness, and because it
locates the prior-work regime; it is not the binding term, and the experiments show it is driven
small by training and does not by itself separate control outcomes (the decoupling of
Section~\ref{ssec:control-exp}). Let $C_d(\varepsilon)$ be the largest content a $d$-dimensional
encoder can carry while its single-step error stays at most $\varepsilon$; the \emph{spectral tax}
is $\Pi_d(\varepsilon)=\sum_{i\le d}\sigma_i^2-C_d(\varepsilon)$.

\begin{mainthm}[Spectral pricing and its zero-cost boundary]
\label{thm:spectral}
For $d=1$, with real-section pseudospectral radius $\rho^{\mathbb R}(\delta)=\sup\{|\theta|:\theta\in
\mathbb R,\ d(\theta)\le\delta\}$,
\begin{equation}
    \Pi_1(\varepsilon)=\sigma_1^2-\rho^{\mathbb R}(\sqrt\varepsilon)^2-\varepsilon+O(\sqrt\varepsilon).
    \label{eq:pricing}
\end{equation}
Non-normality raises the leading singular value above the spectral radius,
$\sigma_1>\rho(P_0)$, which widens the tax; for a defective (Jordan) block of size $m$ it
climbs as a slow fractional power $\Pi_1(\varepsilon)=1-\Theta(\varepsilon^{1/m})$, and the
$d\ge2$ frontier obeys the staircase law $C_d(\varepsilon)-(d-1)\asymp\varepsilon^{d/m}$. When
$P_0$ is self-adjoint the tax vanishes identically, $\Pi_d(\varepsilon)=0$, recovering the
regime of \citet{balestriero2025lejepa,klindt2026does}.
\end{mainthm}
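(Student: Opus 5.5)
The plan is to reduce the $d=1$ problem to a constrained optimization over unit vectors. For $d=1$ the encoder is a single unit vector $v\in H_0$, with content $\mathcal C(v)=\|P_0v\|^2$ and single-step gap $\delta^2(v)=\|P_0v\|^2-|\langle v,P_0v\rangle|^2$, the squared component of $P_0v$ orthogonal to $v$. Writing $\theta=\langle v,P_0v\rangle$, which is real since $P_0$ is real, the identity $\|P_0v\|^2=\theta^2+\|(P_0-\theta)v\|^2$ gives content $=\theta^2+\delta^2$. The constraint $\delta^2\le\varepsilon$ therefore says that $v$ is a $\sqrt\varepsilon$-pseudo-eigenvector at the real point $\theta$, namely $\|(P_0-\theta)v\|\le\sqrt\varepsilon$. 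I read $d(\theta)$ in the statement as $\sigma_{\min}(P_0-\theta)$.

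The next step is to match upper and lower bounds on the optimum.
\begin{itemize}
\item \textbf{Upper bound.} Any feasible $v$ has $d(\theta)\le\|(P_0-\theta)v\|\le\sqrt\varepsilon$, so $|\theta|\le\rho^{\mathbb R}(\sqrt\varepsilon)$. Hence $C_1(\varepsilon)\le\rho^{\mathbb R}(\sqrt\varepsilon)^2+\varepsilon$.
\item \textbf{Lower bound.} Take $\theta^*$ attaining $\rho^{\mathbb R}$ and let $v$ be the minimizing right singular vector of $P_0-\theta^*$. Its Rayleigh quotient $\langle v,P_0v\rangle$ differs from $\theta^*$ by at most $\sqrt\varepsilon$. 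Since the residual $(P_0-\theta^*)v$ is orthogonal to $v$ only to first order, the content is $\theta^{*2}+\varepsilon-O(\sqrt\varepsilon)$.
\end{itemize}
Substituting into $\Pi_1=\sigma_1^2-C_1$ then gives \eqref{eq:pricing}, with the $O(\sqrt\varepsilon)$ term absorbing the Rayleigh-quotient mismatch. I expect this matching step to be the main obstacle. The pseudo-eigenvector is not exactly the maximizer, and its Rayleigh quotient need not equal $\theta^*$. I would first handle this by perturbing along the real line and using continuity of $d(\cdot)$. If that fails, I would fall back on a two-vector argument in $\mathrm{span}\{v,P_0v\}$.

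The remaining claims are specializations of the same formula.
\begin{itemize}
\item \textbf{$\sigma_1>\rho(P_0)$ for non-normal $P_0$.} This is the standard fact that $\|P_0\|=\rho(P_0)$ holds for normal operators, while for non-normal ones it can fail. At $\varepsilon\to0$ the tax tends to $\sigma_1^2-\rho^{\mathbb R}(0)^2\ge\sigma_1^2-\rho^2$.
\item \textbf{Jordan block.} For a Jordan block $J_m(\lambda)$ with $|\lambda|=1$, normalized so that $\sigma_1\approx1$, the resolvent grows like $\|(J-\theta)^{-1}\|\asymp|\theta-\lambda|^{-m}$. So $d(\theta)\asymp|\theta-\lambda|^m$, which means the pseudospectral radius excess scales like $\varepsilon^{1/(2m)}$. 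Squaring and inserting into \eqref{eq:pricing} gives $\Pi_1=1-\Theta(\varepsilon^{1/m})$, provided the exponent bookkeeping is done consistently with $\delta=\sqrt\varepsilon$.
\item \textbf{Staircase for $d\ge2$.} Here I would use a block decomposition. $d-1$ directions can be taken from an exactly invariant flag of the Jordan chain at zero cost, contributing content $d-1$. The last direction then faces a compressed Jordan block whose effective resolvent order yields $\varepsilon^{d/m}$; I would verify this via the Schur complement. This step is the most delicate, and I would defer its full details to the appendix.
\end{itemize}

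Finally, the self-adjoint case is exact. If $P_0=P_0^*$, the eigenvectors of the top $d$ eigenvalues span an invariant subspace, so $\delta=0$. Their content equals $\sum_{i\le d}\lambda_i^2=\sum_{i\le d}\sigma_i^2$, which is the unconstrained maximum by Ky Fan. Hence $\Pi_d\equiv0$.
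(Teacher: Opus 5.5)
Your $d=1$ pricing and your self-adjoint argument follow the paper's route: a real pseudo-eigenvector at $\theta^*$ for the two-sided bounds, and Ky Fan at an invariant top eigenspace. The obstacle you flag is not real. Since the Rayleigh quotient minimizes $\theta\mapsto\|(P_0-\theta)v\|$, any unit $v$ with $\|(P_0-\theta^*)v\|\le\sqrt\varepsilon$ is automatically feasible. Moreover $\mathcal C(v)\ge\theta(v)^2\ge(|\theta^*|-\sqrt\varepsilon)_+^2$, which is exactly the paper's lower bound. You do need $\varepsilon\ge\varepsilon^*(1)$, and compactness or an approximating sequence to produce the witness.

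\textbf{The Jordan law is for the nilpotent block.} The claim $\Pi_1=1-\Theta(\varepsilon^{1/m})$ holds for $\lambda=0$, not for $|\lambda|=1$.
\begin{itemize}
\item With $|\lambda|=1$, $J_m(\lambda)$ sends $e_1$ to $\lambda e_1+e_0$. Its norm is therefore at least $\sqrt2$, which violates $\|P_0\|\le1$.
\item After any rescaling, a nonzero eigenvalue $\lambda$ forces $C_1(\varepsilon)\ge\lambda^2$, while $\rho^{\mathbb R}(\sqrt\varepsilon)^2-\lambda^2\asymp|\lambda|\,\varepsilon^{1/(2m)}$. Hence $\Pi_1=(\sigma_1^2-\lambda^2)-\Theta(\varepsilon^{1/(2m)})$, which has a different constant and a different exponent.
\item ``Squaring the excess'' gives $\varepsilon^{1/m}$ only when the base is $0$.
\end{itemize}
With $\lambda=0$ you have $\sigma_1=1$ and $s^m(1-s)\le d(\theta)\le s^m$, so $\rho^{\mathbb R}(\sqrt\varepsilon)^2\in[\varepsilon^{1/m},2^{2/m}\varepsilon^{1/m}]$. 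Conclude from the two-sided bounds, whose discrepancy $2\rho^{\mathbb R}(\sqrt\varepsilon)\sqrt\varepsilon$ is $o(\varepsilon^{1/m})$. A generic $O(\sqrt\varepsilon)$ remainder does not suffice: at $m=2$ it is the same order as the term you are trying to isolate.

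\textbf{Your staircase route cannot produce any content above $d-1$.} For $P_0=J_m$ nilpotent, $\|J_mb\|^2=\|b\|^2-|b_0|^2$, so $\mathcal C(S)=d-\|\Pi_Se_0\|^2$. Every nonzero invariant subspace is a prefix $\mathrm{span}(e_0,\dots,e_{j-1})$ and therefore contains $e_0$. Two consequences follow:
\begin{itemize}
\item An invariant $(d-1)$-flag contributes content $d-2$, not $d-1$.
\item Every $S$ containing that flag has content exactly $d-1$, whatever the last direction and whatever Schur-complement estimate you use for it.
\end{itemize}

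\textbf{What the staircase actually needs.} The excess $\varepsilon^{d/m}$ requires moving all $d$ directions off the flag at once, so that $e_0$ is only partially captured. For $d\mid m$, the paper uses the cyclic truncated-Neumann frame $v_j=n^{-1}\sum_{k<m/d}(-s)^ke_{j+kd}$. This frame is exactly orthonormal and satisfies $J_mv_j=v_{j-1}$ for $j\ge1$. Its gap is $s^{m/d}(1+o(1))$ and its content is $d-1+s^2(1+o(1))$.

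Your plan also gives no upper bound. In the paper it comes from the telescoping drift identity with $P_0^m=0$:
\begin{itemize}
\item For $A=V^*P_0V$ this gives $\|A^m\|\le m\sqrt\varepsilon$, hence $|\det A|\le(m\sqrt\varepsilon)^{d/m}$.
\item Maximize $\sum\sigma_i(A)^2$ subject to $\sigma_i\le1$ and this determinant bound.
\item Add $\delta^2\le\varepsilon$ to get $C_d(\varepsilon)\le d-1+(m\sqrt\varepsilon)^{2d/m}+\varepsilon$.
\end{itemize}
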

\begin{proof}[Proof sketch]
The frontier is bracketed by feasibility of a real pseudo-eigenvector at radius
$\rho^{\mathbb R}(\sqrt\varepsilon)$ and the singular-value ceiling $\sigma_1^2$; a
complex-conjugate pair sets the starting threshold $\varepsilon^*(1)=r^2$ (a purely spectral effect
that disappears at $d=2$), while non-normality both lowers that threshold and raises the ceiling.
Self-adjointness collapses $\sigma_1\to\rho$ and recovers the prior-work regime.
Appendix~\ref{sec:module-F}, Theorem~\ref{thm:F4}; Appendix~\ref{sec:module-M} for $d\ge2$.
\end{proof}

\paragraph{Role of non-normality.}
For a normal operator the singular values equal the moduli of the eigenvalues, so the content
attainable at a given predictability is determined by the spectrum and $\Pi_d(\varepsilon)=0$; this is
the self-adjoint configuration underlying the identifiability results of
\citet{balestriero2025lejepa,klindt2026does}. When $P_0$ is non-normal its eigenvectors are
non-orthogonal and its leading singular value exceeds its spectral radius, $\sigma_1>\rho(P_0)$, the
difference measuring the transient amplification that precedes asymptotic decay. The
$\varepsilon$-pseudospectrum---the set of eigenvalues of order-$\varepsilon$ perturbations of
$P_0$---extends into the complex plane by a corresponding amount, and the frontier is governed by its
real section $\rho^{\mathbb R}(\delta)$ rather than by the spectrum. The same non-normality enlarges
the rollout amplification $G_H$ of Section~\ref{ssec:rollout-tax}, so the on-manifold residual and
its off-manifold accumulation share a single source.

\subsection{The control-error theorem: assembling the two taxes}
\label{ssec:control-error}

Combining the on-manifold residual $\delta_0$ (priced by the spectral tax) and the off-manifold
excursion (priced by the rollout tax) turns the abstract gap of Theorem~\ref{thm:core} into a
computable bound.

\begin{mainthm}[Control suboptimality and the off-manifold tax]
\label{thm:subopt}
A model-predictive planner ranking candidates by predicted terminal cost incurs
\begin{equation}
    D(\hat u)-D(u^\star)\ \le\ 2\,G_H\big(\underbrace{\delta_0}_{\text{metric (on-manifold) tax}}
    +\underbrace{L\rho_{\mathcal M}+L\sigma_u\sqrt{\mathrm{tr}\,W_H^\perp}}_{\text{extrapolation (off-manifold) tax}}\big).
    \label{eq:subopt}
\end{equation}
The excursion term $\Pi_{\mathrm{off}}=2G_H L\sqrt{\mathrm{tr}\,W_H^\perp}$ is the price actions pay
for leaving the manifold; it is zero iff $L=0$ or the reachability is tangent, and it admits a
unique optimal exploration width $\sigma_u^\star$.
\end{mainthm}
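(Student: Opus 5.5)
The plan is to chain Theorem~\ref{thm:core} with Theorem~\ref{thm:drift}. By Theorem~\ref{thm:core} it suffices to bound the cost gap $g=\sup_{u\in\mathcal U}|\hat D(u)-D(u)|$. The reverse triangle inequality gives $|\hat D(u)-D(u)|=\big|\|\hat z_H(u)-z_g\|-\|z_H(u)-z_g\|\big|\le\|\hat z_H(u)-z_H(u)\|=\|e_H(u)\|$. The cost gap therefore reduces to the terminal rollout error, uniformly over candidates. Applying the telescoping bound \eqref{eq:drift} then yields $\|e_H(u)\|\le G_H(\delta_0+L\,r_{\max}(u))$, where $r_{\max}(u)$ is the largest off-manifold distance along the trajectory of $u$. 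Note that $G_H=\sum_{i<H}\|\hat A^i\|$ does not depend on $u$, so it factors out of the supremum.

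The remaining task is to replace $r_{\max}$ by the excursion quantity $\rho_{\mathcal M}+\sigma_u\sqrt{\mathrm{tr}\,W_H^\perp}$. For this I would use the reachability half of Theorem~\ref{thm:drift}, namely $\mathbb E\,r(\hat z_k)^2\le 2\rho_{\mathcal M}^2+2\sigma_u^2\,\mathrm{tr}\,W_k^\perp$. Since $W_k^\perp\preceq W_H^\perp$ for $k\le H$ (the Gramian is a sum of positive terms), the horizon-$H$ Gramian dominates every intermediate step. Taking square roots and using $\sqrt{a+b}\le\sqrt a+\sqrt b$ gives $\rho_{\mathcal M}+\sigma_u\sqrt{\mathrm{tr}\,W_H^\perp}$, up to the factor $\sqrt2$. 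I would absorb that factor into the definition of $\rho_{\mathcal M}$ and $W_H^\perp$ (or check that the decomposition $r(\hat z_k)\le\|P^\perp \hat A^k z_0\|+\|P^\perp\sum_j\hat A^j\hat B u_{k-1-j}\|$ gives the bound directly, without the factor 2). Multiplying by $2G_H$ then assembles \eqref{eq:subopt}.

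The vanishing statement follows by reading off the formula. $\Pi_{\mathrm{off}}=0$ iff $L=0$ or $\mathrm{tr}\,W_H^\perp=\sum_j\|P^\perp\hat A^j\hat B\|_F^2=0$, i.e.\ iff every reachable direction is tangent. For the optimal exploration width, the bound \eqref{eq:subopt} alone is increasing in $\sigma_u$. I would therefore pair it with the standard coverage term: with too little exploration the candidate set misses good plans, so $D(u^\star_{\mathcal U})$ exceeds the true optimum by a term decreasing in $\sigma_u$, e.g.\ $c/\sigma_u$ or a covering-radius term. Minimizing $c/\sigma_u+2G_HL\sigma_u\sqrt{\mathrm{tr}\,W_H^\perp}$, a convex function, then gives a unique $\sigma_u^\star=\big(c/(2G_HL\sqrt{\mathrm{tr}\,W_H^\perp})\big)^{1/2}$.

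The main obstacle is the step from the \emph{expected} excursion in Theorem~\ref{thm:drift} to the \emph{worst-case} $r_{\max}$ inside a supremum over candidates. A sup over random candidates is not controlled by a second moment. I would first try to state the bound in expectation (or with high probability, via Gaussian concentration of $\|P^\perp\sum_j\hat A^j\hat B u_j\|$, which is Lipschitz in $u$ with constant $\sqrt{\|W_H^\perp\|}$), paying a $\sqrt{\log|\mathcal U|}$ factor. A secondary issue is that the residual bound $\|\varepsilon\|\le\delta_0+Lr(z)$ is evaluated along the true trajectory in the Duhamel expansion, while the excursion is measured on $\hat z$. I would handle this by evaluating the residual along the predicted trajectory, which is the convention under which the control terms cancel in Theorem~\ref{thm:drift}.
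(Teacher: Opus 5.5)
Your argument is the paper's argument, and the paper does not close the obstacles you flag.

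\textbf{Where your route matches the paper.} Theorem D4 bounds $|\hat D(u)-D(u)|$ by $\|e_H(u)\|$ via the reverse triangle inequality and bounds $\|e_H\|$ by the telescoping Theorem D3. It then obtains part (iii), the displayed inequality, from the single instruction ``substitute the reachability control on $r_{\max}$ from Theorem D1''. Corollary D5 gets $\sigma_u^\star$ from an assumed reach-benefit curve $D^\star(\sigma)$ (nonincreasing, convex). Your $c/\sigma_u$ is a special case, and the paper's condition $-D^{\star\prime}(\sigma_u^\star)=\Pi_{\mathrm{off}}$ reduces to your closed form. Your reading of the vanishing condition is Corollary D6.

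\textbf{What the paper leaves open.} The paper performs exactly the substitution you distrust. A root-mean-square bound on the terminal excursion of the \emph{predicted} rollout under random actions is inserted, with its $\sqrt{2}$ dropped, for the worst-case $r_{\max}$ over the \emph{true} trajectories of all candidates. Your monotonicity $W_k^\perp\preceq W_H^\perp$, which handles the intermediate steps, is a step the paper leaves implicit. Its rigor-status paragraph accordingly grades D4(iii) and D5 only as ``mechanistically exact with constants that depend on the plant geometry''. So the deterministic inequality with these exact constants does not follow from D1 and D3. What this route can prove is the in-expectation or high-probability version you sketch. Both the $\sigma_u^\star$ claim and your formula also need $\Pi_{\mathrm{off}}>0$: in the vanishing case the objective is nonincreasing and has no interior minimizer.

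\textbf{First patch: expectation does not remove the supremum.} Because $\hat u$ is selected from the realized candidates, you need $\mathbb E\max_{u\in\mathcal U,\,k\le H}$ of the excursion, not $\max\mathbb E$. Write $c_k=\sum_{j<k}\hat A^{k-1-j}\hat B u_j$. The split $r(\hat z_k)\le\|P^\perp\hat A^k z_0\|+\|P^\perp c_k\|$ plus Jensen gives $\mathbb E\,r(\hat z_k)\le\rho_{\mathcal M}+\sigma_u\sqrt{\mathrm{tr}\,W_k^\perp}$ with no $\sqrt{2}$, for each fixed candidate and step. This, not redefining $\rho_{\mathcal M}$ or $W_H^\perp$, is the legitimate way to lose the factor. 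For Gaussian candidates each $\|P^\perp c_k\|$ is $\sigma_u\sqrt{\|W_H^\perp\|}$-Lipschitz in the whitened actions. The maximal inequality over these $H|\mathcal U|$ variables therefore adds a term of order $\sigma_u\sqrt{\|W_H^\perp\|\log(H|\mathcal U|)}$, in either formulation.

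\textbf{Second patch: the true-versus-predicted fix is wrong.} The control terms cancel in $e_{k+1}=\hat A e_k+\varepsilon_k$ because both rollouts use the same $u_k$, regardless of where the residual is read. Subtracting $\hat z_{k+1}=\hat A\hat z_k+\hat B u_k$ from $z_{k+1}=\Phi(z_k,u_k)$ forces $\varepsilon_k=\varepsilon(z_k,u_k)$ at the \emph{true} state; there is no convention to choose. Moving to the predicted state needs either a Lipschitz bound on $\Phi$ or the estimate $r(z_k)\le r(\hat z_k)+\|e_k\|$. The latter feeds $L\|e_k\|$ back into the Duhamel sum. A discrete Gr\"onwall argument then gives amplification $\big((1+L\alpha)^H-1\big)/L$, with $\alpha=\max_{i<H}\|\hat A^i\|$, in place of $G_H$. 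This is exponential in $H$ once $L\alpha H$ is not small. Keeping $G_H$ as stated needs a direct bound on the off-manifold excursion of the true trajectory, which the Gramian of $(\hat A,\hat B)$ does not provide.
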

\begin{proof}[Proof sketch]
Bounding the gap of Theorem~\ref{thm:core} by $\sup_u\|e_H(u)\|$ via the reverse triangle
inequality, and substituting the telescoping drift \eqref{eq:drift} and the excursion bound of
Theorem~\ref{thm:drift}, gives \eqref{eq:subopt}; balancing marginal reach against marginal
excursion gives $\sigma_u^\star$. Appendix~\ref{sec:module-R}, Theorem~\ref{thm:D4} and
Corollary~\ref{corr:D5}.
\end{proof}

\noindent Equation~\eqref{eq:subopt} is the quantitative form of the core theorem. The gap
$\sup_u|\hat D-D|$ splits into a \emph{metric tax}, the on-manifold residual $\delta_0$ that a
more Koopman-linearizable representation lowers, and an \emph{extrapolation tax}, the off-manifold
term $L\sigma_u\sqrt{\mathrm{tr}\,W_H^\perp}$ that no data-average bounds and that is reduced only by
distribution change (counterfactual data or tangent control); the amplification $G_H$ multiplies
both. The extrapolation tax is the binding term: it is the divergence between the predicted and true
dynamics off the manifold, and the experiments show it is what tracks control success while the
metric tax $\delta_0$ is driven small and decouples from it. This yields three falsifiable
predictions, tested in
Section~\ref{sec:experiments}: (i) across checkpoints of equal training loss, control success
varies and is uncorrelated with MSE once exploration pushes off support; (ii) an off-manifold
amplification signal, not the single-step error, tracks the binding constraint; and (iii) on
synthetic systems, suboptimality is zero under safe (tangent) control and grows only in proportion
to $L\sqrt{\mathrm{tr}\,W_H^\perp}$, with the bound of \eqref{eq:drift} holding throughout.

\section{A Theory-Guided Method and Experiments}
\label{sec:experiments}

The pricing of Section~\ref{sec:pricing} names the levers, and we first read off the minimal
intervention it prescribes before testing the predictions.

\subsection{A theory-guided intervention: linear state readout}
\label{ssec:state-readout}
To the latent prediction objective we add an auxiliary term that decodes the predicted latent to
the observed state through a \emph{linear} readout $W$ and penalizes its error,
\begin{equation}
    \mathcal L_{\mathrm{sr}}=\mathbb E\,\big\|W\hat z_{t+1}-s_{t+1}\big\|^2,
    \label{eq:sr}
\end{equation}
where $s_{t+1}$ is the true low-dimensional observation and $W$ is constrained to be linear; this
linearity is the operative property, as explained below. By Lemma~\ref{lem:nonlinearity} the gap
splits as
$\delta_\mu^2=\iota^2+\|n\|^2$; since $\mathcal L_{\mathrm{sr}}$ is a data-measure ($\mu$) loss it
acts on the insufficiency $\iota$, not on the off-manifold growth $L$---by the
decoupling (Lemma~\ref{lem:decouple}) no $\mu$-loss can lower $L$ pointwise. The one channel through
which an in-manifold objective still reaches off-manifold behavior is the readout's
\emph{linearity}: structural constraints extrapolate globally by algebra even though loss
\emph{values} do not cross the manifold boundary. If training pins the intertwining
$W\hat A=A_\varphi W$ on the data-spanned subspace, where $A_\varphi$ is the slow, near-normal state
dynamics, then it holds as a matrix identity everywhere, so the goal-relevant content of the rollout
error propagates by $A_\varphi$ rather than $\hat A$:
\begin{equation}
    W e_H=\sum_j A_\varphi^{\,H-1-j}\,(W\varepsilon_j),
    \qquad \|We_H\|\le G_H^\varphi\max_j\|W\varepsilon_j\|,\quad G_H^\varphi=\textstyle\sum_i\|A_\varphi^i\|\le H .
    \label{eq:intertwine}
\end{equation}
Replacing the amplification $G_H$ (which can bump under non-normality) by $G_H^\varphi\le H$ on the
goal-relevant subspace tightens the off-manifold contribution to the control suboptimality of
Theorem~\ref{thm:subopt}---by reducing the \emph{amplification} of the off-manifold residual rather
than the residual itself. The effect is therefore conditioned on non-normality (there is no
amplification to reduce when $\hat A$ is near-normal), it is orthogonal to the reachability term
$L\sigma_u\sqrt{\mathrm{tr}\,W_H^\perp}$ (which is addressed by counterfactual training or tangent
control), and it neither creates nor removes holes. A diagnostic that separates the content channel
($\iota,G_H^\varphi$) from the off-manifold residual $L$ identifies which mechanism accounts for a
measured gain.

\subsection{Control experiments on latent world models}
\label{ssec:control-exp}

The primary evidence for the theory is a set of latent-world-model control experiments; the synthetic
studies of Section~\ref{ssec:numerical-exp} corroborate the closed-form pricing separately.

\paragraph{Setup.}
We train latent world models---a ViT-tiny encoder with an autoregressive latent predictor and SIGReg
regularization---on two control tasks, \textsc{tworoom} (navigation) and \textsc{pusht}
(contact-rich pushing), and deploy them in a cross-entropy-method (CEM) model-predictive planner
(planning horizon $5$, receding horizon $5$, $300$ samples, $50$ elite trajectories). Two training
objectives are compared under a matched six-step data window across three seeds, isolating the
objective as the only difference: a single-step loss (\textsc{one\_step}) and a multi-step rollout
loss (\textsc{rollout\_full}). The monitored prediction loss is evaluated on the expert distribution
$\mu$ (teacher-forced rollouts of held-out trajectories), whereas the planner queries the model on
the reachable measure $\nu$ generated by its candidate actions, which the CEM rolls forward through
the predicted latent. During validation we additionally log an off-manifold probe: the single-step
error $\delta_1$ and a Jacobian-based amplification $\widehat L$, which measures the predictor's
sensitivity to a context perturbation leaving $\mathrm{supp}\,\mu$ and serves as a proxy for the
growth constant $L$.

\paragraph{Prediction error and control success are uncorrelated.}
The monitored loss does not order models by success. On \textsc{pusht}, the three \textsc{one\_step}
seeds attain validation prediction losses within $\pm 8\%$ ($0.0089,0.0101,0.0103$) while their
success rates span $78$--$92\%$ (Table~\ref{tab:control}). Across all runs the rank correlation
between the validation prediction loss and success is $0.024$, whereas a rollout-fidelity score
evaluated on the reachable measure $\nu$---the coefficient of determination of the predicted rollout
against the realized trajectory on the planner-reachable set---attains rank correlation $0.62$ with
success. The $\mu$-side quantity is uninformative for success and the $\nu$-side quantity is
informative, as Lemma~\ref{lem:decouple} predicts. The same separation is present within a single
training run: the checkpoint of lowest validation loss (epoch $10$, $90\%$ success) is not the
checkpoint of highest success (epoch $9$, $98\%$), so model selection by loss forgoes the
better-controlling checkpoint.

\begin{table}[h]
\centering
\small
\begin{tabular}{@{}llccc@{}}
\toprule
Task & Loss & per-seed success & mean $\pm$ std & $\delta_1$ / $\widehat L$ (ep3) \\
\midrule
\textsc{tworoom} & \textsc{one\_step}     & $90,86,88$ & $88.0\pm2.0$ & $0.012$ / $10.7$ \\
\textsc{tworoom} & \textsc{rollout\_full} & $86,90,90$ & $88.7\pm2.3$ & $0.013$ / $9.8$ \\
\textsc{pusht}   & \textsc{one\_step}     & $84,78,92$ & $84.7\pm7.0$ & $0.010$ / $11.4$ \\
\textsc{pusht}   & \textsc{rollout\_full} & $92,90,92$ & $\mathbf{91.3\pm1.2}$ & $0.010$ / $10.6$ \\
\bottomrule
\end{tabular}
\caption{Latent-MPC success (\%) on two tasks, three seeds, matched six-step training window. On
\textsc{pusht} the \textsc{one\_step} seeds share a validation prediction loss to within $\pm 8\%$
while their success spans $78$--$92\%$. Both objectives reduce the single-step error $\delta_1$ to
$\approx 0.01$, whereas the off-manifold amplification $\widehat L$ rises to $\approx 10$--$11$ and
does not decrease.}
\label{tab:control}
\end{table}

\paragraph{The off-manifold amplification is the informative diagnostic.}
Both objectives reduce the single-step error $\delta_1$ to $\approx 0.01$, while the amplification
$\widehat L$ increases to $\approx 10$--$11$ and does not decrease, since no $\mu$-supported loss
constrains the predictor off support. This is the $\delta_0\perp L$ separation of
Section~\ref{sec:control-theory} observed during training: the $\mu$-average that training minimizes
saturates, while the off-manifold quantity that governs success continues to vary. The rollout
objective's consistent structural effect is a lower multi-step compounding of $\widehat L$. On
\textsc{pusht}, where the off-manifold term is binding, this coincides with a higher and more stable
success rate ($91.3\pm1.2$ vs.\ $84.7\pm7.0$); on \textsc{tworoom}, where success is saturated, the
two objectives lie within one standard deviation. The probe thus functions as a mechanism diagnostic
rather than a per-seed predictor, and its reading is consistent with an offline decomposition in
which the latent cost is faithful near the manifold while the binding error lies in the off-manifold
dynamics.

\paragraph{Scope.}
The comparison uses three seeds, and part of the \textsc{pusht} difference is attributable to one
weak \textsc{one\_step} seed ($78\%$), so the direction is consistent but a five-seed replication is
required to establish significance. The probe reports a Jacobian proxy $\widehat L$ rather than the
off-manifold residual $L$, whose direct measurement would require simulator-supplied counterfactual
states. A model-selection rule based on the $\nu$-side fidelity score did not outperform selection by
success on these runs, which we report as a negative result. We therefore read these experiments as
establishing the decoupling and identifying the off-manifold amplification as an informative training
signal, not as a validated safety certificate.

\subsection{Cross-task validation of the state readout}
\label{ssec:sr-exp}

We evaluate the linear state readout of Section~\ref{ssec:state-readout} against the base objective on
four control tasks---\textsc{cube}, \textsc{tworoom}, \textsc{pusht}, and \textsc{reacher}---under a
common encoder, planner, and data window, adding only the auxiliary term $\mathcal L_{\mathrm{sr}}$
with weight $\lambda=3$. Control success is measured every epoch with $100$ evaluation episodes and
averaged over seeds. Figure~\ref{fig:sr} reports the per-task success trajectories and the convergence
gain of the readout over the base objective.

\begin{figure}[t]
\centering
\includegraphics[width=\textwidth]{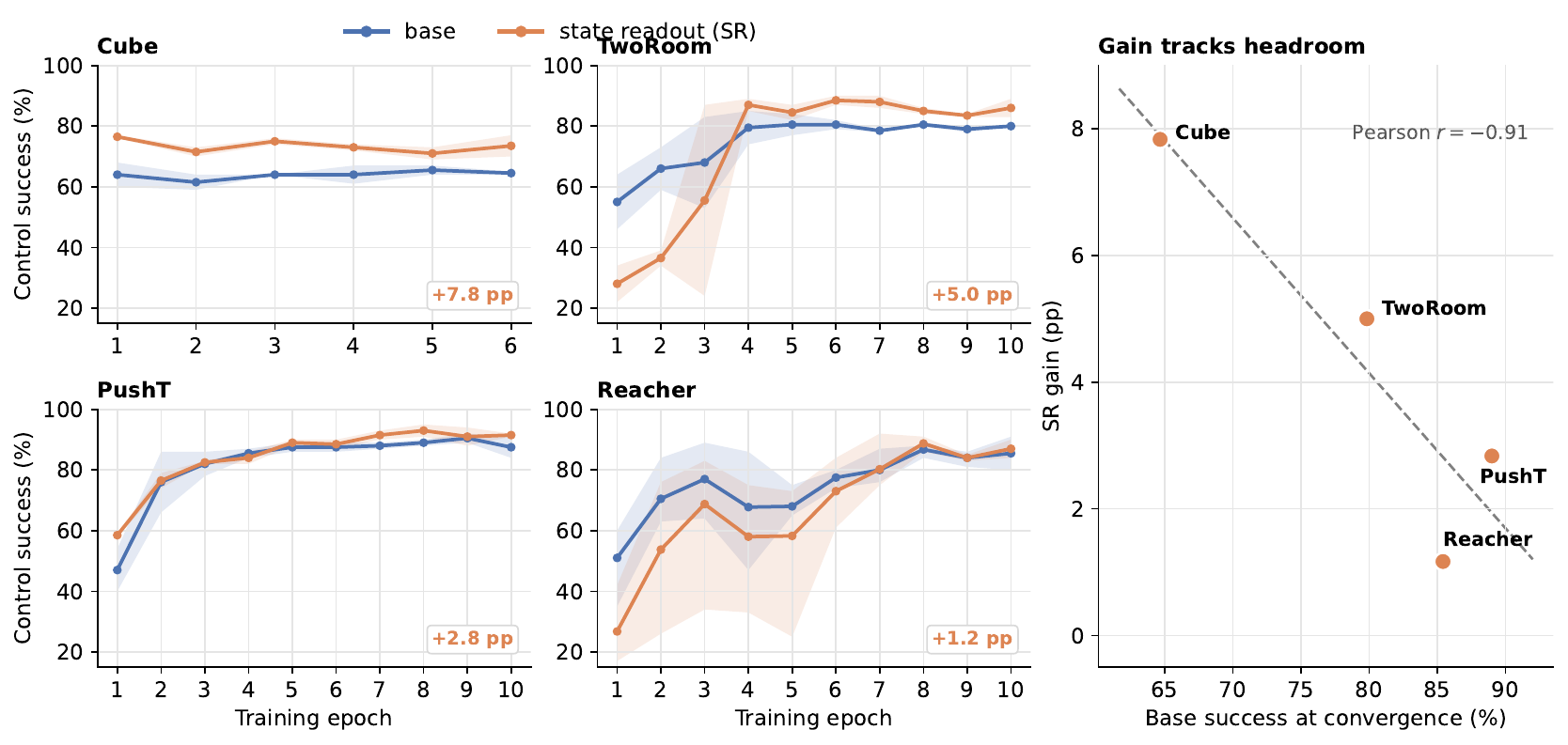}
\caption{Linear state readout (SR) versus the base objective on four control tasks. \emph{Left:}
control success against training epoch (mean over seeds; shaded band spans the seed range); the
annotation gives the convergence gain, averaged over the last three epochs. \emph{Right:} the
convergence gain against the base success at convergence, with a least-squares fit. The readout does
not lower the asymptotic ceiling, and its gain increases as the base objective moves away from
saturation (Pearson $r=-0.91$), consistent with an intervention acting on the on-manifold metric tax
$\delta_0$.}
\label{fig:sr}
\end{figure}

The readout does not lower the asymptotic ceiling on any task, and its convergence gain varies with the
base performance. The gain is largest where the base objective is least saturated (\textsc{cube},
$+7.8$~pp at a base level of $65\%$) and decreases as the base approaches saturation (\textsc{tworoom}
$+5.0$, \textsc{pusht} $+2.8$, \textsc{reacher} $+1.2$), the gain and the base convergence level
correlating at $r=-0.91$. This matches the account of Section~\ref{ssec:state-readout}: the readout is
a $\mu$-side objective acting on the on-manifold metric tax $\delta_0$, so its effect scales with the
residual on-manifold headroom and is within seed noise once that headroom is exhausted
(\textsc{reacher}). The comparison uses two to four seeds per task; a five-seed replication is
required to establish the significance of the smaller gains.

\subsection{Numerical corroboration on synthetic operators}
\label{ssec:numerical-exp}

Two synthetic studies verify the closed-form pricing of Section~\ref{sec:pricing} in isolation from
estimation error.

\paragraph{Spectral-tax frontier.}
On constructed operators the pricing formulas of Section~\ref{ssec:spectral-tax} are recovered
(Appendix~\ref{sec:numerics}). For a three-state non-normal baseline the spectral prediction of the
activation threshold is $b^2=0.0175$ and the non-normal floor is $\min(p^2,q^2)=0.00646$, in
agreement with a grid search; non-normality lowers the activation threshold and raises the completion
cost. The Jordan scaling $\Pi_1=1-\Theta(\varepsilon^{1/m})$ is recovered as slopes
$0.500,0.337,0.256$ for $m=2,3,4$ against the predicted $1/m$, and the $d\ge2$ staircase exponent
$d/m$ is recovered for $(m,d)=(4,2),(6,2),(6,3)$, with prefactor approaching $1$
(Table~\ref{tab:exponents-body}).

\begin{table}[h]
\centering
\small
\begin{tabular}{@{}lccc@{}}
\toprule
Operator $(m,d)$ & Predicted exponent & Fitted slope \\
\midrule
$J_2$ $(2,1)$ & $0.500$ & $0.500$ \\
$J_3$ $(3,1)$ & $0.333$ & $0.337$ \\
$J_4$ $(4,1)$ & $0.250$ & $0.256$ \\
$(4,2)$ & $0.500$ & $0.505$ \\
$(6,2)$ & $0.333$ & $0.346$ \\
$(6,3)$ & $0.500$ & $0.510$ \\
\bottomrule
\end{tabular}
\caption{Spectral-tax frontier exponents recovered by fitting the content frontier as a power of the
budget; the prefactor $C_d(\varepsilon)/\varepsilon^{d/m}\to1$ in every case
(Theorem~\ref{thm:spectral}).}
\label{tab:exponents-body}
\end{table}

\paragraph{Off-manifold suboptimality bound.}
Latent control systems are constructed while varying whether the data manifold is invariant under
$\hat A$ (\textsc{aligned}) or not (\textsc{off}) and the nonlinearity scale $\kappa$ that sets $L$.
Table~\ref{tab:offmanifold} reports the mean MPC suboptimality against the bound of
Eq.~\eqref{eq:drift}. Under safe control---an invariant manifold ($\mathrm{tr}\,W_H^\perp\approx0$)
or a linear plant ($L\approx0$)---the suboptimality is zero despite non-trivial reachability. When
both off-manifold reachability and nonlinearity are present it grows with $L$, from $0.61$ at
$\kappa=0.5$ to $11.9$ at $\kappa=1.5$, and the bound holds in every row. The prediction quality
$\delta_0$ is small throughout, so the suboptimality is governed by the off-manifold term
$L\sqrt{\mathrm{tr}\,W_H^\perp}$, in agreement with Theorem~\ref{thm:subopt}.

\begin{table}[h]
\centering
\small
\begin{tabular}{@{}lccccc@{}}
\toprule
Regime & $\kappa$ & $\delta_0$ & $\widehat L$ & $\mathrm{tr}\,W_H^\perp$ & suboptimality (bound) \\
\midrule
\textsc{aligned} & $0.0$ & $0$ & $0$ & $0$ & $0.00$ \ ($0$) \\
\textsc{aligned} & $0.5$ & $0.21$ & $0.50$ & $1.5\!\times\!10^{-4}$ & $0.00$ \ ($2.4$) \\
\textsc{aligned} & $1.5$ & $0.63$ & $1.49$ & $1.3\!\times\!10^{-3}$ & $0.00$ \ ($7.2$) \\
\textsc{off} & $0.0$ & $0$ & $0$ & $3.91$ & $0.00$ \ ($0$) \\
\textsc{off} & $0.5$ & $0.21$ & $0.50$ & $3.84$ & $0.61$ \ ($20.5$) \\
\textsc{off} & $1.5$ & $0.64$ & $1.51$ & $4.11$ & $11.9$ \ ($564$) \\
\bottomrule
\end{tabular}
\caption{Synthetic latent-MPC suboptimality vs.\ the bound of Eq.~\eqref{eq:drift}. Suboptimality is
zero whenever control is safe ($\mathrm{tr}\,W_H^\perp\approx0$) or the system is linear
($L\approx0$), and grows with $L$ only when both off-manifold reachability and nonlinearity are
present; the bound holds in every row.}
\label{tab:offmanifold}
\end{table}

\noindent Taken together, the two studies verify the two terms of Theorem~\ref{thm:subopt} in
isolation---the on-manifold residual priced by non-normality and the off-manifold suboptimality
priced by $L\sqrt{\mathrm{tr}\,W_H^\perp}$---while the control experiments of
Section~\ref{ssec:control-exp} exhibit the same decomposition in a trained, pixel-based system.

\section{Discussion and Outlook}
\label{sec:discussion}

\paragraph{What the theory prescribes.}
The control objective is a single scalar---the gap between predicted and true plan-cost at the
committed plan (Theorem~\ref{thm:core})---and the pricing of Theorem~\ref{thm:subopt} splits it into
a metric tax and an extrapolation tax that respond to disjoint interventions. The metric tax $\delta_0$ is
an on-manifold, $\mu$-side quantity: it is lowered by any objective that makes the latent more
Koopman-linearizable, including the linear state readout of Section~\ref{ssec:state-readout}, which
also shrinks the amplification $G_H$ on the goal-relevant subspace. The extrapolation tax
$L\sigma_u\sqrt{\mathrm{tr}\,W_H^\perp}$ is an off-manifold, $\nu$-side quantity that no
data-averaged loss bounds; it is reduced only under distribution change---counterfactual training
that shrinks $L$ by expanding $\mathrm{supp}\,\mu$, or manifold-aware pessimism
($\hat D+\lambda\hat r$) that fills holes on the measure side---or by keeping control tangent to the
manifold. Once the planner leaves the data manifold, lowering prediction error does not reduce the
extrapolation tax; the interventions that do are hole-filling and nonlinearity reduction. The two
are orthogonal, and which one binds is read from $\mathrm{tr}\,W_H^\perp$ and $L$.

\paragraph{Scope and limitations.}
The pricing of Section~\ref{sec:pricing} is stated under a linear-control premise: the deployed
dynamics are the learned linear model \eqref{eq:dyn}, and the constants in
Theorem~\ref{thm:subopt}(iii) depend on the plant geometry (for a curved $\mathcal M$ the normal
projection is local, with constants set by the second fundamental form). The control experiments use
three seeds and a Jacobian amplification proxy rather than simulator-supplied counterfactual ground
truth, so they establish the decoupling and identify the off-manifold signal but do not yet validate
it as a safety monitor. The spectral-tax and off-manifold-bound experiments are on synthetic operators
with known closed forms, isolating the mathematics of the theorems from estimation error.

\paragraph{Outlook.}
Three directions follow directly. First, establishing sharp constants for the extrapolation tax---in
particular the RKHS/second-order characterization relating $L$ to the realizable nonlinearity
$\|n\|$---would turn Theorem~\ref{thm:subopt} into a computable safety certificate. Second,
validating the off-manifold amplification $\widehat L$ as an online monitor with simulator-supplied
counterfactuals, and replicating the \textsc{pusht} decoupling at five seeds, would upgrade the
empirical claims from consistent to demonstrated. Third, the linear state readout and counterfactual
training address the two taxes separately; measuring where a combined objective lands relative to the
metric-tax/extrapolation-tax boundary is the natural test of the two-region structure.

\section{Conclusion}
\label{sec:conclusion}

We recast the objective of a latent world model deployed for planning. Its regret is bounded by a
single quantity---the gap between predicted and true plan-cost at the plan the planner commits
to---so the correct control objective is to make predicted cost track true cost, not to lower
data-averaged prediction error, which we prove neither bounds nor tracks that gap once actions carry
the query off the data manifold. Pricing the gap under a linear-control premise separates a small
on-manifold residual, where predicted and true dynamics agree, from an off-manifold divergence, where
an action carries the state off the data manifold and the two part; the divergence is the binding
term, bounded by no data-average, and the on-manifold residual---priced by a spectral tax that
vanishes on the self-adjoint boundary of prior work---does not by itself govern control. Latent-MPC
experiments establish exactly this: the decoupling of on-manifold prediction error from success, and
the off-manifold divergence as the quantity that tracks it; synthetic operators corroborate the
pricing formulas and the off-manifold suboptimality bound. The implication is that, once the planner
leaves the data manifold, suboptimality is reduced by filling holes and shrinking the
representation's nonlinear component rather than by lowering prediction error.

\newpage
\bibliographystyle{unsrtnat}
\bibliography{references}

\newpage
\appendix

\medskip
\noindent\textbf{Appendix overview.} These appendices supply the definitions and full proofs behind
the main text, in the same order. Appendix~\ref{sec:setup} fixes the setup and decomposes the
predicted-versus-true gap that governs control; Appendix~\ref{sec:preliminary} records the preliminary
lemmas and the self-adjoint reference boundary; Appendix~\ref{sec:module-H} proves the control
objective together with the decoupling, hole, and nonlinearity results (Theorem~\ref{thm:core} and
Lemmas~\ref{lem:decouple}--\ref{lem:nonlinearity}); Appendices~\ref{sec:module-F}
and~\ref{sec:module-M} price the on-manifold residual (Theorem~\ref{thm:spectral} at $d=1$ and
$d\ge2$); Appendix~\ref{sec:module-R} proves the rollout-drift and control-suboptimality bounds
(Theorems~\ref{thm:drift} and~\ref{thm:subopt}); and Appendix~\ref{sec:numerics} records the
numerical-verification protocols.

\section{Setup: the true and predicted plan-costs and their gap}
\label{sec:setup}

\paragraph{The two plan-costs and the object of the theory.}
A latent world model is deployed inside a planner. For a goal latent $z_g$ and a candidate action
sequence $u\in\mathcal U$, the planner rolls the learned latent dynamics
$z_{t+1}=\hat A z_t+\hat B u_t+\varepsilon_t$ forward and ranks $u$ by the \emph{predicted} terminal
cost $\hat D(u)=\|\hat z_H(u)-z_g\|$, executing $\hat u=\arg\min_{u}\hat D(u)$; the realized outcome is
the \emph{true} terminal cost $D(u)=\|z_H(u)-z_g\|$, minimized by $u^\star=\arg\min_u D(u)$. Control
quality is governed by the discrepancy between these two functions over the candidate set,
\begin{equation}
    g(\mathcal U)\ :=\ \sup_{u\in\mathcal U}\big|\hat D(u)-D(u)\big|,
    \label{eq:app-gap}
\end{equation}
which bounds the planner's suboptimality through $D(\hat u)-D(u^\star)\le 2\,g(\mathcal U)$
(Theorem~\ref{thm:core}). The appendix is organized around bounding $g(\mathcal U)$.

\paragraph{Two measures.}
The model is trained on the data measure $\mu$ but, at deployment, is queried on the states its
candidate actions reach---the reachable measure $\nu$, generically with
$\mathrm{supp}\,\nu\supsetneq\mathrm{supp}\,\mu$. Reading a single residual function $\varepsilon$
under the two measures yields the monitored training error
$\delta_\mu^2=\mathbb E_\mu\|\varepsilon\|^2$ and the deployment error
$\delta_\nu^2=\mathbb E_\nu\|\varepsilon\|^2$; their separation is the source of the decoupling
(Appendix~\ref{sec:module-H}).

\paragraph{Decomposition of the gap.}
Along a rolled trajectory the terminal error $e_H=\hat z_H-z_H$ obeys the recursion
$e_{k+1}=\hat A e_k+\varepsilon_k$, so $g(\mathcal U)$ is controlled by the single-step residual
$\varepsilon$ amplified over the horizon (Appendix~\ref{sec:module-R}). The residual splits into an
\emph{on-manifold} part---its value $\delta_0$ on $\mathrm{supp}\,\mu$, determined by how well a
$d$-dimensional latent can be linearly predicted and priced in
Appendices~\ref{sec:module-F}--\ref{sec:module-M}---and an \emph{off-manifold} part governed by the
distance $r(z)=\mathrm{dist}(z,\mathcal M)$ to the data manifold $\mathcal M=\mathrm{supp}\,\mu$,
\begin{equation}
    \|\varepsilon(z,u)\|\ \le\ \delta_0+L\,r(z),
    \label{eq:app-residual}
\end{equation}
with $L$ a nonlinearity scale ($L=0$ for a globally linear system). Equations~\eqref{eq:app-gap}
and~\eqref{eq:app-residual} are the two decompositions that the remaining appendices make
quantitative.

\paragraph{Process and transition operator.}
Let $\{X_t\}$ be a stationary ergodic process satisfying a first-order Markov assumption
\textbf{(M)}. Non-Markovian streams such as raw video are brought into this setting by state
augmentation. Let $P$ be the transition (Koopman) operator on $L^2(\mu)$. Restricting to the
mean-zero subspace $H_0 = \mathbf{1}^\perp \subset L^2(\mu)$ removes the constant direction;
we write $P_0 = P|_{H_0}$ for the restricted operator, which is real and satisfies
$\|P_0\| \le 1$. The Hilbert--Schmidt (compactness) assumption is used only where stated. A
whitened encoder $\phi$ corresponds to an isometry $V:\mathbb{R}^d \to H_0$; under whitening
the optimal predictor bias vanishes, $b \equiv 0$ (Lemma~\ref{lem:B1}).

\paragraph{On-manifold pricing objects.}
The on-manifold residual $\delta_0$ is analyzed through the transition operator and the encoder. The
\emph{content} of an encoder is $\mathcal C(\phi)=\|P_0V\|_{\mathrm{HS}}^2$ and its single-step
\emph{gap} is $\delta(\phi)^2=\|(I-VV^*)P_0V\|_{\mathrm{HS}}^2$, with optimal within-class predictor
$A^*=V^*P_0V$. The largest content attainable at gap at most $\varepsilon$ is the \emph{content
frontier} $C_d(\varepsilon)=\sup\{\mathcal C(\phi):\delta(\phi)^2\le\varepsilon\}$; its shortfall from
the unconstrained optimum is the \emph{spectral tax}
$\Pi_d(\varepsilon)=\sum_{i\le d}\sigma_i^2-C_d(\varepsilon)$, with activation threshold
$\varepsilon^*(d)=\inf_\phi\delta(\phi)^2$. Two further constants enter the off-manifold analysis: the
memory horizon $\tau=\sum_j\|P_0^j\|$, the cumulative non-normal transient, and a regularity budget
$\mathcal B=\log(L\Lambda)$ with domain-local version $\mathcal B_{\mathrm{dom}}\le\mathcal B$. These
objects price $\delta_0$ and its amplification; they are the machinery of the bound, not the control
objective itself.

\paragraph{Three-term decomposition of the single-step residual.}
The single-step prediction loss decomposes orthogonally as
\begin{equation}
    \text{single-step loss}
    = \underbrace{\text{(I) innovation}}_{\text{conditioned on } X_t}
    + \underbrace{\text{(II) sufficiency gap}}_{I(Z';X) - I(Z') \text{ deficit}}
    + \underbrace{\text{(III) realizability error}}_{\text{within-class predictor approximation}} ,
\end{equation}
with systematic predictable part $G^* = \text{(II)} + \text{(III)}$. Appendix~\ref{sec:module-H}
(Theorem~\ref{thm:H7}) identifies the realizability term (III) as the realizable nonlinearity that
seeds holes; term (II) is the encoder's insufficiency.

\paragraph{Normal and non-normal transitions.}
We call $P_0$ normal when it commutes with its adjoint, equivalently when it is unitarily
diagonalizable; the self-adjoint case $P_0 = P_0^*$ is the prototype, realized by reversible dynamics
such as the Ornstein--Uhlenbeck process. We call $P_0$ non-normal otherwise. The two cases that drive
the on-manifold results are defective (non-diagonalizable) operators and finite-memory nilpotent
operators; for these the singular values separate from the spectrum, and this separation prices the
on-manifold residual.

\paragraph{The self-adjoint reference boundary.}
The objects above are calibrated against one reference point. When $P_0$ is self-adjoint the spectral
tax vanishes, $\Pi_d(\varepsilon) = 0$, and the construction reduces to the regime analyzed by
LeJEPA \citep{balestriero2025lejepa,klindt2026does}: the loss admits a closed form through the
Ky Fan maximum principle, and the linear identifiability of that work is recovered
(Corollary~\ref{corr:P2.1}, which matches their Theorem~1). What matters here is the location of this
optimum: it places the LeJEPA guarantees on the normal, self-adjoint boundary of the frontier, the
reference against which every positive cost in the sequel is measured.

\section{Preliminaries and the self-adjoint boundary}
\label{sec:preliminary}

The following five preliminary theorems provide complete statements alongside their inline proofs.

\paragraph{P1 (Saturation $\iff$ Invariance; The Gap Identity).}
\label{thm:P1}
Let $\phi$ be a whitened encoder corresponding to the isometry $V$, and let $S = \mathrm{ran}\,V$. The single-step gap within the linear predictor class satisfies $G^*(\phi) = \delta(S)^2 = \|(I - \Pi_S)P_0\Pi_S\|_F^2$. In particular, $G^* = 0 \iff \mathrm{span}\{\mathbf{1}, \phi\}$ is a $P$-invariant subspace.

\begin{proof}
Under the whitening constraint, the Gram matrix is exactly the identity $I$, and the normal equations yield the optimal within-class coefficient $A^* = V^\top P_0 V$. The optimal prediction residual evaluates to $\|P_0V - VA^*\|_F^2 = \|(I - \Pi_S)P_0V\|_F^2 = \delta^2$. Setting this residual to zero is equivalent to requiring $P_0S \subseteq S$; incorporating the constant direction $\mathbf{1}$ yields the stated subspace invariance.
\end{proof}
\begin{remark}
Folded variants of this invariance equivalence can be found in \citet{brunton2016koopman}, \citet{otto2019linearly}, and \citet{lusch2018deep}. The contribution of this work lies not in the equivalence itself, but in establishing the quantitative cost frontiers once the system departs from the invariant subspace, formalized in modules F/M.
\end{remark}

\paragraph{P2 (The Self-Adjoint Solvable Regime).}
\label{thm:P2}
Assume $P_0$ is self-adjoint (corresponding to the VAC/SFA setting; see \citep{noe2013variational,wiskott2002slow}). The top-$d$ eigenspace is $P_0$-invariant, which by Theorem~\ref{thm:P1} yields a zero prediction gap ($G^* = 0$) and retains a total content of $\sum_{i \le d} \lambda_i^2$, which is strictly optimal within the class of zero-gap encoders. 

Two critical corrections are applied: a strict spectral gap $|\lambda_1| < 1$ requires the dynamics to be aperiodic, and the extraction of eigenvectors at degenerate eigenvalues must be performed via a strict orthogonal selection.

\begin{proof}
The spectral theorem combined with Theorem~\ref{thm:P1} guarantees both subspace invariance and the exact content formulation. For optimality, any invariant subspace corresponds to a spectral subspace with content $\sum \lambda_i(S)^2$. The Poincar{\'e} separation theorem (a direct corollary of Cauchy interlacing) guarantees that the top eigenspace maximizes this sum.
\end{proof}

\paragraph{Corollary P2.1 (LeJEPA's Gaussian/OU World as the Self-Adjoint, Zero-Tax Boundary; Recovery of Their Theorem 1).}
\label{corr:P2.1}
Consider the Gaussian/Ornstein-Uhlenbeck (OU) setup driving the linear identifiability guarantees of LeJEPA \citep{klindt2026does}: let $z \sim \mathcal{N}(0, I_n)$ and $z' = \rho z + \sqrt{1-\rho^2}\,\eta$, where $\eta \sim \mathcal{N}(0, I_n)$ and $\rho \in (0, 1)$. The transition operator $(T\psi)(z) = \mathbb{E}[\psi(z') \mid z]$ is self-adjoint with respect to the stationary, reversible OU measure $\mu = \mathcal{N}(0, I_n)$. Thus, $P_0 = T|_{H_0}$ is self-adjoint with $\|P_0\| = \rho$. Its eigenfunctions are the classical Hermite polynomials, and the degree-$d$ eigenvalues scale as $\rho^d$ (via Mehler's formula). 

Consequently, the top eigenspace corresponding to the $n$-fold degenerate eigenvalue $\rho$ is spanned exactly by the linear functions $\mathrm{span}\{z_1, \dots, z_n\}$. Under a whitened representation ($\mathrm{Cov}(h) = I_n$, meaning the isometry satisfies $V: \mathbb{R}^n \to H_0$ with $Ve_i = h_i$), LeJEPA's alignment loss expands as $L(h) = 2n - 2\sum_i \mathbb{E}[h_i(z')h_i(z)] = 2n - 2\,\mathrm{tr}(V^*P_0V)$. Applying Theorem~\ref{thm:P2} for $d=n$, we obtain:
\begin{itemize}
    \item[\textbf{(i)}] The top eigenspace is $P_0$-invariant, yielding a zero prediction gap $\delta = 0$ ($G^* = 0$) and capturing a maximum content of $C_n(0) = n\rho^2 = \sum_{i \le n} \sigma_i(P_0)^2$. Consequently, \textbf{the spectral tax vanishes identically}: $\Pi_n(\varepsilon) \equiv 0$ (aligning with module Theorem~\ref{thm:M2}, where self-adjointness implies $\Psi_n = 0$, and Corollary~\ref{corr:F1.1}, where the activation threshold satisfies $\varepsilon^*(d) = 0$).
    \item[\textbf{(ii)}] The Ky Fan / Poincar{\'e} separation theorem implies $\max_V \mathrm{tr}(V^*P_0V) = n\rho$, where equality holds if and only if $\mathrm{ran}\,V = \mathrm{span}\{z_i\}$, which forces the linear representation $h(z) = Qz$ for some $Q \in \mathcal{O}(n)$, achieving the minimum alignment loss $\min L = 2(1-\rho)n$. \textit{This exactly recovers Theorem~1 of \citet{klindt2026does}.}
\end{itemize}

\begin{proof}
To see that $T$ is self-adjoint, note that the stationary OU process is time-reversible, inducing a symmetric joint distribution over $(z, z')$ with $\mathrm{Cov}(z, z') = \rho I$; hence $\langle \psi, T\chi \rangle = \langle T\psi, \chi \rangle$. The identity $\langle h_i, P_0h_i \rangle = (V^*P_0V)_{ii}$ yields the trace form of the loss $L$. Property (i) follows from the invariance of the top eigenspace and the self-adjoint identity $\sigma_i(P_0) = |\lambda_i| = \rho$. Property (ii) is a direct consequence of the Poincar{\'e} separation theorem, where the trace maximization for a self-adjoint operator yields the sum of the top $n$ eigenvalues $n\rho$, achieved uniquely at the spectral subspace.
\end{proof}

\paragraph{Interpretation.} LeJEPA's linear identifiability constitutes the exact \textbf{self-adjoint, zero-tax boundary} of our cost theory. The activation threshold $\varepsilon^*$, the compression deficit $\Psi_d$, the kernel tax, the Jordan chains, and the multi-step rollout drift are all identically zero at this boundary, activating only when $P_0$ departs from normality---that is, when moving outside the Gaussian/OU framework required for their linear identifiability (their identifiability holds if-and-only-if Gaussian; see Theorems~1 and~2 in \citet{klindt2026does}). This work explicitly maps the cost surface unfolding \textit{outside} this boundary. Technically, Corollary~\ref{corr:P2.1} is an immediate consequence of the Ky Fan variational principle and Poincar{\'e} separation; its value lies entirely in its \textbf{positioning}---establishing a mathematically precise seam where our cost surface intersects and recovers LeJEPA's optimal point---rather than in its proof complexity.

\paragraph{P3 (Pathology of Deterministic Weak Mixing: Ill-Posedness vs. Impossibility).}
\label{thm:P3}
Let $T$ be a weakly mixing transformation and let $U$ denote its associated Koopman operator.
\begin{itemize}
    \item[\textbf{(a)}] There exists no non-trivial finite-dimensional invariant subspace on $\mathbf{1}^\perp$; hence, exact saturation is impossible.
    \item[\textbf{(b)}] In the time-reversible (invertible) case, the single-step gap satisfies $\inf_{\dim S = d} \delta(S)^2 = 0$ for all dimensions $d \ge 1$, including odd dimensions.
\end{itemize}

\begin{proof}
(a) A transformation is weakly mixing if and only if $U|_{\mathbf{1}^\perp}$ possesses no eigenvalues. Any finite-dimensional invariant subspace must contain at least one eigenvector upon complexification, yielding a contradiction. 

(b) Since $U$ is unitary, its spectral measure $E$ is purely continuous (atomless). 

\textit{Even Dimensions:} We choose $d/2$ pairs of disjoint, conjugate arc pairs $\{I_j, \bar{I}_j\}$ on the unit circle. The continuity of the spectral measure guarantees that these arcs can be selected to capture non-zero spectral mass. Let $f_j = E(I_j)g_j$ be normalized vectors. By forming real combinations using $\{\mathrm{Re}, \mathrm{Im}\}$ pairings, we construct an orthogonal set of size $d$ satisfying $\delta(S) \le \max_j |I_j| \to 0$. 

\textit{Odd Dimensions:} We require an additional one-dimensional real direction, which must be supported on a symmetric arc $I_0 = \bar{I}_0$ containing the point $\theta = 1$ (or $-1$). Its feasibility is guaranteed because a weakly mixing system is necessarily aperiodic, and \textbf{the maximal spectral type of any aperiodic measure-preserving system has full support over the entire unit circle} (constructed via approximate eigenfunctions on Rokhlin towers; citation level: \citep{nadkarni1998spectral}). Consequently, any arbitrary neighborhood of $1$ contains non-zero spectral mass. Choosing a real $g_0$ ensures that $f_0 = E(I_0)g_0$ is strictly real (since $I_0$ is symmetric and $U$ is a real operator, making $E(I_0)$ commute with complex conjugation) and orthogonal to all prior conjugate pairs, satisfying $\|(U - 1)f_0\| \le |I_0|$.
\end{proof}
\begin{remark}
For the continuous spectrum landscape, see \citet{mezic2005spectral}; the entropy/Pinsker structure that rules out zero-entropy factor evasion is standard ergodic theory \citep{glasner2003ergodic}. \textbf{Contrast with Theorem~\ref{thm:M1.2}:} The topological barrier $\varepsilon^*(d) > 0$ for a finite-dimensional complex spectrum in odd dimensions is completely dissolved under an infinite-dimensional weakly mixing structure because the spectral support is forced to contain $\pm 1$. The vanishing of the odd-dimensional threshold is a pathology unique to infinite dimensions.
\end{remark}

\paragraph{P4 (Decoupled vs. Shared Encoders; Quantifying the Trilemma).}
\label{thm:P4}
In a decoupled (dual-encoder) setup, the optimal single-step content equals $\sum_{i \le d} \sigma_i(P_0)^2$, which represents the universal VAMP-2 upper bound \citep{wu2020variational,mardt2018vampnets}. Under the shared encoder constraint $\psi = \phi$, the optimal content is bounded by the $C_d(\varepsilon)$ frontier. The frontier is thus the quantitative form of the content--predictability--dimensionality trade-off under the shared-encoder constraint.

\begin{proof}
The decoupled optimization objective expands as $\max_{U, V \text{ isometric}} \|U^\top P_0 V\|_F^2$. The singular value decomposition (Schmidt/Eckart--Young theorem) proves that this maximum is achieved by the sum of the top $d$ singular triplets. Restricting the system to a shared encoder $U = V$ recovers the shared-encoder configuration of Appendix~\ref{sec:setup}.
\end{proof}

\paragraph{P5 (The Single-Sided Certificate; The Population Identity).}
\label{thm:P5}
We introduce the dual-predictor certificate formulated on the raw input side:
\begin{align}
    \hat{\delta}^2_{\mathrm{cert}} &:= \min_{g \in L^2} \mathbb{E}\big\|\phi(X_{t+1}) - g(X_t)\big\|^2 - \min_{h \in L^2} \mathbb{E}\big\|\phi(X_{t+1}) - h(\phi(X_t))\big\|^2 \nonumber \\
    &= \mathbb{E}\big\|\mathbb{E}[\phi(X_{t+1}) \mid X_t] - \mathbb{E}[\phi(X_{t+1}) \mid \phi(X_t)]\big\|^2
\end{align}
This identity isolates the sufficiency gap (II) mapped onto the $\phi$-target space. It is strictly non-negative and vanishes if and only if $\phi(X_t)$ is a sufficient statistic for predicting $\phi(X_{t+1})$.

\begin{proof}
The optimal predictors within both $L^2$ spaces correspond to their respective conditional expectations by the tower property. The subspace inclusion $\sigma(\phi(X_t)) \subseteq \sigma(X_t)$ combined with the $L^2$ Pythagorean theorem yields the squared norm of the difference.
\end{proof}

\section{The predicted-versus-true gap: decoupling, holes, and realizable nonlinearity}
\label{sec:module-H}

This section proves the control objective of Section~\ref{sec:control-theory} and the three results
that follow from it. A planner (MPC or CEM in latent space; \citealp{garcia2013model,hafner2020dream})
acts on the manifold it \emph{predicts} it can reach, and its suboptimality is charged by the gap
$g(\mathcal U)=\sup_u|\hat D(u)-D(u)|$ between the predicted and true plan-cost at the plans it
considers (Theorem~\ref{thm:core}). We show that the data-averaged prediction error---an average over
the stationary measure $\mu$---does not bound this gap once candidate actions carry the query to the
reachable measure $\nu$ (the decoupling); that the part of the gap that affects suboptimality is
one-sided (holes); and that holes arise only when the predictor carries realizable nonlinearity, by
the linear--MLP bridge of Section~\ref{sec:module-D-bridge}. The on-manifold residual $\delta_0$ that
these results take as an input is priced separately in Appendices~\ref{sec:module-F}
and~\ref{sec:module-M}.

\subsection{Setup: query measure, deployment gap, and holes}

\paragraph{Query (deployment) measure.} A planner ranks candidate action sequences by rolling them in latent space and executing the one of lowest predicted cost. The queried states follow the reachable measure $\nu$, and generically $\mathrm{supp}\,\nu\supsetneq\mathrm{supp}\,\mu$: actions are the mechanism that generates the query measure $\nu$ from the data measure $\mu$.

\paragraph{The two faces of the gap.} The same residual function $\varepsilon$ read under two measures gives $\delta_\mu^2:=\mathbb E_\mu\|\varepsilon\|^2$ (the training gap $\delta$, monitored as MSE) and $\delta_\nu^2:=\mathbb E_\nu\|\varepsilon\|^2$ (the deployment gap on which success actually rests). The residual is unchanged; only the averaging measure differs.

\paragraph{Predicted cost, true cost, and holes.} With predicted cost $\hat J(u)=\|\hat z_H(u)-z_g\|^2$, true cost $J(u)=\|z_H(u)-z_g\|^2$, and signed error $\Delta J(u):=\hat J(u)-J(u)$, a \emph{hole} is a candidate $u_0$ that looks best in prediction, $\hat J(u_0)<\hat J(u^\star)$, yet is truly bad, $J(u_0)\gg J(u^\star)$: a deceptive low valley in the predicted cost landscape. The hole depth is $\mathrm{Hole}(\mathcal U):=\sup_u(-\Delta J(u))_+$.

\subsection{Measure mismatch and the decoupling of prediction error from success}

\subsubsection{Theorem H1 (Bounded Density vs.\ Off-Support Blow-Up)}
\label{thm:H1}
\begin{align}
    \text{(a) in-support } (\nu\ll\mu,\ w=d\nu/d\mu):\quad & \delta_\nu^2=\mathbb E_\mu\big[w\|\varepsilon\|^2\big]\le\|w\|_\infty\,\delta_\mu^2 ; \\
    \text{(b) off-support } (\nu(\mathbb R^d\setminus\mathrm{supp}\,\mu)>0):\quad & \|w\|_\infty=\infty,\ \text{and }\delta_\mu\text{ gives no upper bound on }\delta_\nu .
\end{align}
In case (b) there exists a residual with $\delta_\mu=0$ yet $\delta_\nu$ arbitrarily large.
\begin{proof}
(a) is the change-of-measure identity $\int f\,d\nu=\int fw\,d\mu$ with $w\le\|w\|_\infty$. (b): off $\mathrm{supp}\,\mu$ the density $w$ is infinite; let $\varepsilon\equiv0$ on $\mathrm{supp}\,\mu$ (so $\delta_\mu=0$) but take arbitrary values on a $\nu$-positive, $\mu$-null set---the two integrals are independent.
\end{proof}

\subsubsection{Theorem H2 (Decoupling: Training Error $\perp$ Control Success)}
\label{thm:H2}
By Theorem~\ref{thm:H1}(b), every $\mu$-quantity ($\delta_\mu$, the frontier $C_d$, the spectral tax $\Pi_d$) places zero constraint on $\delta_\nu$ once $\nu$ carries off-support mass. Since control success is monotone in $\delta_\nu$, the correlation obeys $\rho(\delta_\mu,\ \text{success})\to0$.
\begin{proof}
Immediate from Theorem~\ref{thm:H1}(b): if $\delta_\mu$ does not even control $\delta_\nu$, and success is determined by $\delta_\nu$, then no functional relation ties $\delta_\mu$ to success. No off-manifold-distance or ranking machinery is needed---decoupling is a direct corollary of change of measure.
\end{proof}

\begin{interpretation}
The measured MSE is a $\mu$-quantity, whereas success rests on a $\nu$-quantity, with a measure mismatch between them. This states, inside the cost-theory language, the empirical observation that single-step prediction error and downstream control success are nearly uncorrelated.
\end{interpretation}

\subsection{Holes as false minima of the predicted cost}

\subsubsection{Theorem H3 (Suboptimality as Signed Hole Depth)}
\label{thm:H3}
\begin{equation}
    J(\hat u)-J(u^\star)\ \le\ \underbrace{\Delta J(u^\star)}_{\text{overestimate of the true optimum}}+\underbrace{\mathrm{Hole}(\mathcal U)}_{\text{depth at the selected action}} .
\end{equation}
\begin{proof}
$J(\hat u)=\hat J(\hat u)-\Delta J(\hat u)\le\hat J(u^\star)-\Delta J(\hat u)=J(u^\star)+\Delta J(u^\star)-\Delta J(\hat u)$, using that $\hat u$ minimizes $\hat J$; and $-\Delta J(\hat u)$ is the underestimate at the selected action, bounded by the hole depth.
\end{proof}

\begin{interpretation}
This refines Theorem~\ref{thm:D4}(i) from the two-sided $\|e_H\|$ to a one-sided quantity: only underestimates---calling a bad action good---are fatal, while overestimates merely make the planner conservative. MSE, a two-sided square-mean, dilutes exactly this one-sided worst case; this is a second layer of the decoupling.
\end{interpretation}

\subsubsection{Theorem H4 (MSE Does Not Control Hole Depth)}
\label{thm:H4}
The rollout MSE $\mathbb E_\mu\|e_H\|^2$ (an in-manifold, two-sided mean) provides no upper bound on the hole depth $\sup_\nu(-\Delta J)_+$ (an off-manifold, one-sided worst case).
\begin{proof}
Isomorphic to Theorem~\ref{thm:H1}(b): a model exact on $\mu$ (MSE $=0$) can hallucinate arrival at the goal on a $\mu$-null counterfactual candidate ($\hat J=0$) with arbitrarily large true cost, giving an arbitrarily deep hole at zero MSE. The three decoupling sources are distribution (in vs.\ out of support), aggregation (mean vs.\ worst case), and sign (two-sided vs.\ one-sided).
\end{proof}

\subsubsection{Theorem H5 (Exposure Bias Is Orthogonal to Holes)}
\label{thm:H5}
Relative to teacher forcing, autoregressive rollout training reduces only the in-manifold exposure-bias error $e^\parallel$; it leaves the off-manifold hole error $e^\perp$ untouched. Consequently the rollout MSE decreases while control success is unchanged.
\begin{proof}
Exposure bias is the self-fed versus ground-truth-fed shift on in-support trajectories \citep{bengio2015scheduled,venkatraman2015improving}; autoregressive training aligns the two, reducing $e^\parallel$. The off-support states that seed holes are absent from both the teacher-forcing and the autoregressive training distributions, so $e^\perp$ and the hole depth do not move; by Theorem~\ref{thm:H3} success is governed by the hole depth and is therefore unchanged.
\end{proof}

\begin{interpretation}
The teacher-forcing $\to$ autoregressive $\to$ diffusion-forcing line all lives inside the manifold, fixing generation quality (exposure bias); it is orthogonal to the off-manifold holes that cause planning to fail. Multi-step rollout training thus optimizes the right metric but the wrong objective.
\end{interpretation}

\subsubsection{Proposition H6 (Pessimism Fills Holes)}
\label{prop:H6}
Let an off-manifold indicator $h(u)\ge0$ satisfy $\lambda_0 h(u)\ge(J-\hat J)_+$ on the reachable set (e.g.\ $h\propto\hat r(u)$, since the hole depth is $\lesssim$ the Theorem~\ref{thm:D4} bound, of order $Lr$). Then the pessimistic cost $\hat J_{\mathrm{pess}}=\hat J+\lambda h$ with $\lambda\ge\lambda_0$ satisfies $\hat J_{\mathrm{pess}}\ge J$, so no false minimum lies below the true optimum. An optimal $\lambda^\star$ trades hole-filling against over-conservatism, dual to $\sigma_u^\star$ of Corollary~\ref{corr:D5}.
\begin{interpretation}
The prescription is to \emph{fill holes}, not to lower MSE: either a training-free manifold-aware / pessimistic planner ($\hat J+\lambda\hat r$), or counterfactual training that shrinks $L$ by expanding $\mathrm{supp}\,\mu$.
\end{interpretation}

\subsection{The linear--MLP bridge: the origin of holes}
\label{sec:module-D-bridge}

The framework is stated for the linear Koopman operator $A^*$, while deployed predictors are MLPs. The next two results show the two are the same object: the MLP's ``extra nonlinearity'' is exactly the realizability term (III) of the three-term decomposition of Section~\ref{sec:setup}, and holes grow from it.

\subsubsection{Theorem H7 (Sufficiency Decomposition of the Gap)}
\label{thm:H7}
An MLP $g$ learns $m_z:=\mathbb E[z_{t+1}\mid z_t]$; its $\mu$-best linear fit writes $g=A^*z+B^*u+n$ with $n$ the MLP nonlinearity. With the full-state conditional mean $m_x:=\mathbb E[z_{t+1}\mid x_t]$ (the action of $P_0$),
\begin{equation}
    \delta_\mu^2=\underbrace{\mathbb E\|m_x-m_z\|^2}_{\iota^2:\ \text{insufficiency}=\text{(II)}}+\underbrace{\|n\|_{L^2(\mu)}^2}_{\text{realizable nonlinearity}=\text{(III)}}, \qquad \|n\|_\mu\le\delta_\mu ,
\end{equation}
with equality iff the latent is sufficient ($m_x$ is a function of $z_t$, $\iota=0$).
\begin{proof}
The original gap is $\delta_\mu^2=\min_A\mathbb E\|m_x-Az_t\|^2$. Since $m_z=\mathbb E[m_x\mid z_t]$ is the projection of $m_x$ onto $\sigma(z_t)$, for any linear $Az_t$ one has $m_x-m_z\perp\sigma(z_t)$, so by Pythagoras $\mathbb E\|m_x-Az_t\|^2=\iota^2+\mathbb E\|m_z-Az_t\|^2$; minimizing over $A$ sends the second term to $\|n\|_\mu^2$. This is precisely the population certificate P5 and the three-term split $\delta_\mu^2=\text{(II)}+\text{(III)}$ of Section~\ref{sec:setup}.
\end{proof}

\begin{interpretation}
The ``linear theory versus MLP practice'' gap is the framework's own object. Insufficiency $\iota$ (term II) is information the encoder discarded---no $z_t$-predictor, linear or MLP, can recover it; it is a representation problem. Realizable nonlinearity $\|n\|_\mu$ (term III) is nonlinearity the MLP carries within the predictable range, and it is what seeds holes. The identity $\|n\|_\mu=\delta_\mu$ holds only when the latent is (near-)sufficient.
\end{interpretation}

\subsubsection{Theorem H8 (Holes Grow from Realizable Nonlinearity)}
\label{thm:H8}
Let the MLP fit the data exactly, so $\varepsilon=\Phi-g$ vanishes on $\mu$. Then
\begin{equation}
    \|n\|_\mu=0\ \Rightarrow\ n\equiv0\ \Rightarrow\ g\text{ globally linear}\ \Rightarrow\ \text{a single quadratic bowl, with no false minimum.}
\end{equation}
\begin{proof}
By Theorem~\ref{thm:H7}, $\|n\|_\mu=0$ forces $n=0$, so $g$ is affine and $\|A^*z+B^*u-z_g\|^2$ has a unique minimum with no folding.
\end{proof}

\begin{interpretation}
Realizable nonlinearity $\|n\|_\mu$ is \emph{necessary} for a hole: a globally linear predictor folds no space and has none, whereas insufficiency $\iota$ merely raises the noise floor. We do not assert the converse quantitative direction, that a larger $\|n\|_\mu$ produces a deeper hole; the control experiments of Section~\ref{sec:experiments} find that an off-manifold nonlinearity proxy does not decrease with control success, so lowering $\|n\|_\mu$ is not the operative lever in practice. What the readout gain tracks empirically is target fidelity, not the reduction of $\|n\|_\mu$.
\end{interpretation}

\subsection{One closure defect, several faces}

\begin{corollary}[Zero-Tax Boundary: All Faces Vanish Together]
\label{corr:H9}
On the self-adjoint / OU boundary of Corollary~\ref{corr:P2.1} (the LeJEPA world), $\delta_\mu=0$ forces $\iota=0$ and $\|n\|_\mu=0$, so the MLP degenerates to linear and the spectral tax, the hole tax, and the gap vanish simultaneously.
\end{corollary}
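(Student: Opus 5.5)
The plan is to chain Corollary~\ref{corr:P2.1} into Theorem~\ref{thm:H7} and then Theorem~\ref{thm:H8}, and to read off each vanishing tax from the result that defines it. First, on the self-adjoint / OU boundary, Corollary~\ref{corr:P2.1}(i) together with Theorem~\ref{thm:P1} gives an encoder whose range is the top eigenspace $\mathrm{span}\{z_1,\dots,z_n\}$, which is $P_0$-invariant. Hence the single-step gap vanishes, $\delta=0$, and the spectral tax is identically zero, $\Pi_n(\varepsilon)\equiv 0$. This disposes of the spectral face directly. For the gap face, the Gaussian/OU transition is exactly linear, $\mathbb E[z'\mid z]=\rho z$, so the population residual of the linear predictor vanishes on $\mu$. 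I will identify this residual with $\delta_\mu$, reading the statement's hypothesis $\delta_\mu=0$ as delivered by the boundary.

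Next I apply the orthogonal decomposition of Theorem~\ref{thm:H7}, $\delta_\mu^2=\iota^2+\|n\|_{L^2(\mu)}^2$. Both summands are nonnegative and their sum is zero, so each vanishes separately: $\iota=0$, meaning the latent is sufficient, and $\|n\|_\mu=0$. As a consistency check I will note that sufficiency also follows directly in the OU world. There $m_x=\mathbb E[z_{t+1}\mid x_t]=\rho z_t$ is already a function of $z_t$, which agrees with $\iota=0$ and with the certificate of P5.

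Finally I invoke Theorem~\ref{thm:H8}: $\|n\|_\mu=0$ gives $n\equiv 0$, so $g$ is globally affine and the MLP degenerates to linear. Then $\hat J$ is a single quadratic bowl with no false minimum, so $\mathrm{Hole}=0$. Combined with $\Delta J\equiv 0$ on the reachable set, where the linear model is exact, this makes the gap $g(\mathcal U)$ of Theorem~\ref{thm:core} vanish as well. The result is that all faces vanish simultaneously.

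The main obstacle is the step $\|n\|_\mu=0\Rightarrow n\equiv 0$ off support. An $L^2(\mu)$-null function need not vanish outside $\mathrm{supp}\,\mu$, which is exactly the point of Theorem~\ref{thm:H1}(b). In the OU case, however, $\mu=\mathcal N(0,I_n)$ has full support on $\mathbb R^n$, so $\mathrm{supp}\,\mu$ is everything. A continuous MLP nonlinearity that vanishes $\mu$-a.e. therefore vanishes everywhere. I will make this full-support argument explicit, since it is what lets the in-manifold identity control the off-manifold hole, and also what makes the reachability term $\mathrm{tr}\,W_H^\perp$ trivially zero on this boundary.
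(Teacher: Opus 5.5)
Your proposal is correct and follows the chain the paper intends for this unproved corollary: Corollary~\ref{corr:P2.1} gives $\delta=0$ and $\Pi_n\equiv 0$, the Pythagorean split of Theorem~\ref{thm:H7} forces $\iota=0$ and $\|n\|_\mu=0$, and Theorem~\ref{thm:H8} gives linearity and no holes. Your full-support step is a genuine tightening of the paper's Theorem~\ref{thm:H8}, which passes from $\|n\|_\mu=0$ to $n\equiv 0$ without justification even though Theorem~\ref{thm:H1}(b) shows that inference fails off support; the step works because the top-eigenspace encoder $h=Qz$ has latent law $\mathcal N(0,I_n)$ with support $\mathbb R^n$, so a continuous $n$ vanishing $\mu$-a.e.\ vanishes everywhere, and it is the resulting $\Delta J\equiv 0$, rather than the bowl shape alone, that makes the hole depth vanish.
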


\begin{table}[h]
\centering
\small
\begin{tabular}{@{}llll@{}}
\toprule
\textbf{Face} & \textbf{Object} & \textbf{Locus} & \textbf{Set by} \\
\midrule
Spectral tax $\Pi_d$ & $A^*$ non-normal & in-manifold, spectral geometry & $\sigma_1(A^*)\gg\rho(A^*)$ \\
Gap $\delta_\mu$ & linear closure residual, split $\iota^2+\|n\|_\mu^2$ & in-manifold, $\mu$-mean & $\|(I-\Pi)P_0V\|$ \\
\quad$\hookrightarrow$ insufficiency $\iota$ & encoder information loss & in-manifold, floor & $\|m_x-m_z\|$ \\
\quad$\hookrightarrow$ nonlinearity $\|n\|_\mu$ & MLP nonlinearity (necessary for holes) & in $\to$ off-manifold & $\sqrt{\delta_\mu^2-\iota^2}$ \\
\bottomrule
\end{tabular}
\caption{One Koopman closure defect---a finite latent coordinate is not a global Koopman-invariant coordinate---shows several faces, all tied together by $\delta_\mu$ and $A^*$. All taxes vanish jointly on the self-adjoint boundary (Corollary~\ref{corr:H9}).}
\label{tab:faces}
\end{table}

\paragraph{Rigor status.} Theorems~\ref{thm:H1}, \ref{thm:H2}, \ref{thm:H3}, \ref{thm:H4}, \ref{thm:H7} and Theorem~\ref{thm:H8} are exact; Theorem~\ref{thm:H5} and Proposition~\ref{prop:H6} are mechanistically exact with constants that depend on the plant geometry. We make no quantitative claim relating the off-manifold growth $L$ to the realizable nonlinearity $\|n\|_\mu$: the control experiments of Section~\ref{sec:experiments} find that an off-manifold nonlinearity proxy tracks control success in the opposite direction, so such a bound would be empirically unsupported.

\section{Pricing the on-manifold residual: the $d=1$ spectral frontier}
\label{sec:module-F}

This section prices the on-manifold residual at $d=1$ through a real-section pseudospectral
characterization of the content--predictability frontier. The pricing formula
(Theorem~\ref{thm:F4}) is the $d=1$ anchor extended to $d\ge2$ in Appendix~\ref{sec:module-M} and
taken as an input by the deployment analysis of Appendix~\ref{sec:module-R} and the gap analysis of
Appendix~\ref{sec:module-H}. The underlying machinery draws on classical Hardy- and Clark-space
operator theory \citep{garnett2006bounded,garcia2016introduction,clark1972one,sarason2007algebraic}
and on sharp resolvent and pseudospectral bounds for Hilbert-space contractions
\citep{szehr2017maximum,zarouf2011toeplitz,bickel2024blaschke}.

\subsection{Notation and conventions}
\label{subsec:F-notation}
\begin{itemize}
    \item We assume a stationary ergodic process $\{X_t\}$ satisfying the first-order Markov assumption \textbf{(M)}. Let $P$ denote the transition operator, satisfying $P\mathbf{1}=\mathbf{1}$ and $\|P\|\le1$.
    \item Let $H_0=\mathbf{1}^\perp\subset L^2(\mu)$ be a real Hilbert space. By stationarity, $\mathbb{E}[Pf]=\mathbb{E}[f]$, implying $P$ preserves $H_0$. We denote the restricted operator as $P_0=P|_{H_0}$, which satisfies $\|P_0\|\le1$ and is a \textbf{real operator} (preserving real-valued functions).
    \item A whitened encoder $\phi=(\phi_1,\dots,\phi_d)$ corresponds bijectively to an isometry $V:\mathbb{R}^d\to H_0$, where $Vc=\sum c_i\phi_i$.
    \item \textbf{Content} is defined as $\mathcal{C}(\phi)=\|P_0V\|_{\mathrm{HS}}^2$. The predictability \textbf{gap} expands as $\delta(\phi)^2=\min_A\|P_0V-VA\|_{\mathrm{HS}}^2=\|(I-VV^*)P_0V\|_{\mathrm{HS}}^2$, yielding the within-class optimal predictor $A^*=V^*P_0V$ (via Theorem~\ref{thm:P1} and the optimal bias identity $b\equiv0$ from Lemma~\ref{lem:B1}).
    \item The \textbf{frontier} is defined as $C_d(\varepsilon)=\sup\{\mathcal{C}(\phi):\delta(\phi)^2\le\varepsilon\}$, the \textbf{cost} as $\Pi_d(\varepsilon)=\sum_{i\le d}\sigma_i^2-C_d(\varepsilon)$, and the activation \textbf{threshold} as $\varepsilon^*(d)=\inf_\phi\delta(\phi)^2$.
    \item Hilbert-Schmidt (compactness) assumptions are invoked exclusively where explicitly annotated. The complexification of the core space is denoted as $H_0^{\mathbb{C}}$, and all spectra refer to the complexified spectrum.
\end{itemize}

\subsection{Preliminary lemmas}
\label{subsec:F-lemmas}

\begin{lemma}[Subspace Reduction and Contractivity] \label{lem:B1}
$\mathcal{C}$ and $\delta$ depend strictly on the subspace $S=\mathrm{span}\,\phi$. Furthermore, $A^*$ is unitarily conjugate under orthogonal basis transformations, and satisfies the contractivity bound $\|A^*\|_2\le1$.
\end{lemma}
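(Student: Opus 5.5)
The plan is to prove the three assertions of Lemma~\ref{lem:B1} directly from the definitions of Appendix~\ref{subsec:F-notation}: dependence on $S$ only, orthogonal conjugacy of $A^*$, and contractivity. Throughout I use the identifications $\mathcal C(\phi)=\|P_0V\|_{\mathrm{HS}}^2$, $\delta(\phi)^2=\|(I-VV^*)P_0V\|_{\mathrm{HS}}^2$ and $A^*=V^*P_0V$ for the isometry $V$ attached to the whitened encoder $\phi$.

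\textbf{Subspace reduction.} Let two whitened encoders span the same subspace $S$, with isometries $V$ and $V'$. Because both are isometries onto $S$, they differ by an orthogonal change of basis: $V'=VQ$ with $Q=V^*V'\in\mathcal O(d)$. I then substitute $V'=VQ$ into each quantity.

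\begin{itemize}
    \item For the content, $P_0V'=P_0VQ$, and the Hilbert--Schmidt norm is invariant under right multiplication by an orthogonal matrix. Hence $\mathcal C$ is unchanged.
    \item For the gap, $V'V'^*=VQQ^\top V^*=VV^*=\Pi_S$, so the projector depends only on $S$. The same right-orthogonal invariance of the HS norm then shows $\delta$ is unchanged.
    \item For the predictor, $A'^*=V'^*P_0V'=Q^\top(V^*P_0V)Q=Q^\top A^*Q$. This is the claimed unitary (orthogonal) conjugacy, so every basis-invariant quantity of $A^*$ (spectrum, singular values, norm) depends only on $S$.
\end{itemize}

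\textbf{Contractivity.} I factor $A^*=V^*P_0V$ as a product of three maps. The isometry $V$ has $\|V\|=1$, its adjoint satisfies $\|V^*\|=1$, and $\|P_0\|\le1$ by the standing assumption that $P$ is a Markov contraction preserving $H_0$. Submultiplicativity then gives $\|A^*\|_2\le\|V^*\|\|P_0\|\|V\|\le1$.

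\textbf{The bias claim.} Lemma~\ref{lem:B1} is also cited for $b\equiv0$, so I would add one line for it. Whitened features lie in $H_0$ and so have mean zero. The least-squares intercept is the difference of means, $b=\mathbb E[\phi(X_{t+1})]-A\,\mathbb E[\phi(X_t)]$, and both means vanish by stationarity, so $b=0$.

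The only point requiring care is confirming that $V^*V=I_d$ and the orthogonality of $Q$ follow from the whitening constraint $\mathrm{Cov}(\phi)=I$. This holds because $\langle\phi_i,\phi_j\rangle_{L^2(\mu)}=\delta_{ij}$ for mean-zero features.
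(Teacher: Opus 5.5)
Your proof is correct and follows the paper's argument: the substitution $V\mapsto VQ$ gives invariance of $\mathcal C$, of $VV^*$ and hence of $\delta$, and the conjugacy $A^*\mapsto Q^\top A^*Q$. Your contractivity step $\|V^*P_0V\|\le\|V^*\|\,\|P_0\|\,\|V\|\le1$ is the paper's pointwise estimate $\|V^*P_0Vc\|\le\|P_0Vc\|\le\|Vc\|=\|c\|$ in operator-norm form. Your explicit verification that two isometries onto $S$ differ by $Q=V^*V'\in\mathcal O(d)$, and your one-line argument for $b\equiv0$, are harmless additions that the paper leaves implicit.
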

\begin{proof}
For the first two assertions, the mapping $V\mapsto VQ$ for an orthogonal matrix $Q$ yields $\|P_0VQ\|_{\mathrm{HS}}=\|P_0V\|_{\mathrm{HS}}$, $(VQ)(VQ)^*=VV^*$, and transforms the optimal predictor via $A^*\mapsto Q^\top A^*Q$. Contractivity follows from $\|A^*c\|=\|V^*P_0Vc\|\le\|P_0Vc\|\le\|Vc\|=\|c\|$.
\end{proof}

\begin{lemma}[Real Witness] \label{lem:B2}
For the real operator $P_0$ and any $\theta\in\mathbb{R}$,
\begin{equation}
    d(\theta):=\inf_{\substack{u\in H_0\ \mathrm{real},\ \|u\|=1}}\|(P_0-\theta)u\|=\inf_{\substack{v\in H_0^{\mathbb{C}},\ \|v\|=1}}\|(P_0-\theta)v\|.
\end{equation}
Thus, restricting to the real subspace incurs no loss of generality, and $d(\theta)$ matches exactly the inverse of the complexified resolvent norm (i.e., $d(\theta)=\|(P_0-\theta)^{-1}\|^{-1}$ when $\theta\notin\sigma$).
\end{lemma}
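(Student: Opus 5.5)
The plan is to prove the two inequalities $d_{\mathbb C}(\theta)\le d_{\mathbb R}(\theta)$ and $d_{\mathbb R}(\theta)\le d_{\mathbb C}(\theta)$ separately, where $d_{\mathbb R}$ and $d_{\mathbb C}$ denote the real and complex infima in Lemma~\ref{lem:B2}. The first inequality is immediate, since every real unit vector is also a unit vector of $H_0^{\mathbb C}$ and so the complex infimum runs over a larger set. All the content is therefore in the reverse inequality, and the key input is that both $P_0$ and $\theta$ are real.

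For the reverse inequality, take any complex unit vector $v=a+ib$ with $a,b\in H_0$ real and $\|a\|^2+\|b\|^2=1$. Because $P_0$ is a real operator and $\theta\in\mathbb R$, the operator $T:=P_0-\theta$ maps real vectors to real vectors. Hence $Tv=Ta+iTb$, where $Ta$ and $Tb$ are the real and imaginary parts of $Tv$.

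Using the complexified inner product, $\|Tv\|^2=\|Ta\|^2+\|Tb\|^2$, because the cross terms $\pm i\langle Ta,Tb\rangle$ cancel when the inner product is real on real vectors. Each real part then satisfies $\|Ta\|\ge d_{\mathbb R}(\theta)\|a\|$ and $\|Tb\|\ge d_{\mathbb R}(\theta)\|b\|$; this also holds trivially when $a=0$ or $b=0$. Summing the squares gives
\[
\|Tv\|^2\ \ge\ d_{\mathbb R}(\theta)^2\big(\|a\|^2+\|b\|^2\big)=d_{\mathbb R}(\theta)^2 .
\]
Taking the infimum over $v$ yields $d_{\mathbb C}(\theta)\ge d_{\mathbb R}(\theta)$. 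This step is the crux, and the only thing to verify is the Pythagorean splitting. That splitting is exactly where $\theta\in\mathbb R$ is used: for a nonreal $\theta$ the operator $T$ would mix the real and imaginary parts, and the argument would fail.

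For the resolvent identity, use the standard fact for bounded operators on a Hilbert space. When $T$ is invertible, $\inf_{\|v\|=1}\|Tv\|=\|T^{-1}\|^{-1}$, because substituting $v=T^{-1}w/\|T^{-1}w\|$ gives $\inf_v \|Tv\|=\big(\sup_w\|T^{-1}w\|/\|w\|\big)^{-1}$. Applying this to $T=P_0-\theta$ on $H_0^{\mathbb C}$ for $\theta\notin\sigma(P_0)$ gives $d(\theta)=\|(P_0-\theta)^{-1}\|^{-1}$. By the equality already established, the same value is attained by real witnesses.
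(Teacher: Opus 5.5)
Your proposal is correct and follows the paper's proof: you write $v=a+ib$, use that $P_0-\theta$ is real for real $\theta$ to obtain $\|(P_0-\theta)v\|^2=\|(P_0-\theta)a\|^2+\|(P_0-\theta)b\|^2\ge d_{\mathbb R}(\theta)^2$, and note that the reverse inequality is trivial. Your derivation of the resolvent identity from $\inf_{\|v\|=1}\|Tv\|=\|T^{-1}\|^{-1}$ fills in a step that the paper only asserts.
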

\begin{proof}
Decomposing $v=a+ib$ where $a,b$ are real elements of $H_0$, the operator $(P_0-\theta)$ preserves the real and imaginary components. Hence, $\|(P_0-\theta)v\|^2=\|(P_0-\theta)a\|^2+\|(P_0-\theta)b\|^2\ge d_{\mathbb{R}}(\theta)^2(\|a\|^2+\|b\|^2)$. The reverse inclusion is trivial.
\end{proof}

\begin{lemma}[The $d=1$ Structural Identity] \label{lem:B3}
For any real unit vector $u$, the scalar $\theta(u):=\langle u,P_0u\rangle$ is the unique minimizer of $\min_{\theta\in\mathbb{R}}\|(P_0-\theta)u\|$, satisfying:
\begin{equation}
    \delta(u)=\|(P_0-\theta(u))u\|=\min_{\theta\in\mathbb{R}}\|(P_0-\theta)u\|,\qquad \mathcal{C}(u)=\theta(u)^2+\delta(u)^2 .
\end{equation}
\end{lemma}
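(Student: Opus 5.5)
The plan is to treat the minimization over $\theta$ as a one-dimensional real least-squares problem and then read off the content identity from an orthogonal (Pythagorean) decomposition of $P_0u$. Fix a real unit vector $u\in H_0$. For $\theta\in\mathbb R$, expand
\begin{equation*}
\|(P_0-\theta)u\|^2=\|P_0u\|^2-2\theta\langle u,P_0u\rangle+\theta^2\|u\|^2=\|P_0u\|^2-2\theta\,\theta(u)+\theta^2 .
\end{equation*}
Realness of $P_0$ and $u$ guarantees that the cross term is real. This is a strictly convex quadratic in $\theta$ with leading coefficient $\|u\|^2=1$. Its unique minimizer is therefore $\theta=\theta(u)=\langle u,P_0u\rangle$, and the minimal value is $\|P_0u\|^2-\theta(u)^2$.

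Next I identify this minimum with the predictability gap at $d=1$. There $V:\mathbb R\to H_0$ is $c\mapsto cu$, so $VV^*$ is the orthogonal projection onto $\mathrm{span}\{u\}$. Hence $VV^*P_0u=\langle u,P_0u\rangle u=\theta(u)u$. By the definition in the notation list, $\delta(u)^2=\|(I-VV^*)P_0u\|^2=\|(P_0-\theta(u))u\|^2$, which is the minimum just computed. Equivalently, $\delta(u)^2=\min_A\|P_0V-VA\|^2$ with $A$ ranging over $1\times1$ real matrices $\theta$, which gives the variational form $\min_\theta\|(P_0-\theta)u\|$ directly. This also matches $A^*=V^*P_0V=\theta(u)$ from Theorem~P1.

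For the content identity, I decompose $P_0u=\theta(u)u+(P_0-\theta(u))u$. The second summand is orthogonal to $u$: indeed $\langle u,(P_0-\theta(u))u\rangle=\theta(u)-\theta(u)=0$. Pythagoras then gives
\begin{equation*}
\mathcal C(u)=\|P_0V\|_{\mathrm{HS}}^2=\|P_0u\|^2=\theta(u)^2+\delta(u)^2,
\end{equation*}
consistent with the minimal value $\|P_0u\|^2-\theta(u)^2=\delta(u)^2$ found above.

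There is no substantive obstacle. The only point needing care is uniqueness, which holds because the quadratic in $\theta$ is strictly convex. It is also worth noting that restricting $\theta$ to be real is natural here, since $P_0$ is real, in the spirit of Lemma~\ref{lem:B2}. If one wished to allow complex $\theta$ with a real $u$, the same computation shows that the imaginary part of $\theta$ only adds $|\mathrm{Im}\,\theta|^2$ to the objective, so the minimizer remains real.
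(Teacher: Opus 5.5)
Your proposal is correct and follows the paper's own argument: expand $\|(P_0-\theta)u\|^2$ as a strictly convex quadratic in $\theta$, which gives the unique minimizer $\theta(u)$ and minimum value $\|P_0u\|^2-\theta(u)^2$. Your identification of this minimum with $\delta(u)^2=\|(I-VV^*)P_0u\|^2$, and the Pythagorean split $P_0u=\theta(u)u+(P_0-\theta(u))u$, spell out steps the paper leaves implicit.
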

\begin{proof}
The objective $\|P_0u-\theta u\|^2 = \|P_0u\|^2 - 2\theta\langle u,P_0u\rangle + \theta^2$ is strictly quadratic in $\theta$. Minimization yields the optimal coordinate $\theta(u)$ and the corresponding minimum value $\|P_0u\|^2-\theta(u)^2$.
\end{proof}

\begin{lemma}[Regularity and Witness Reachability] \label{lem:B4}
(i) The distance function $d(\theta)$ is $1$-Lipschitz continuous on $\mathbb{R}$. (ii) If $P_0$ is compact, $\theta\ne0$, and $d(\theta)<|\theta|$, there exists a \textbf{real} unit vector $u$ such that $\|(P_0-\theta)u\|=d(\theta)$.
\end{lemma}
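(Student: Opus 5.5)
For part (i) I would use the real-witness identity of Lemma~\ref{lem:B2}, which expresses $d(\theta)$ as an infimum over real unit vectors $u$ of $\|(P_0-\theta)u\|$. For each fixed $u$ the map $\theta\mapsto\|(P_0-\theta)u\|$ is $1$-Lipschitz, since by the triangle inequality
\[
\big|\|(P_0-\theta)u\|-\|(P_0-\theta')u\|\big|\le|\theta-\theta'|\,\|u\|=|\theta-\theta'| .
\]
An infimum of a family of $1$-Lipschitz functions is $1$-Lipschitz, provided it is finite. Finiteness is automatic here, because $0\le d(\theta)\le\|P_0\|+|\theta|$.

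For part (ii) I would use compactness to rule out the failure of the infimum to be attained. Take a minimizing sequence of real unit vectors $u_n$ with $\|(P_0-\theta)u_n\|\to d(\theta)$. Bounded sets in a Hilbert space are weakly sequentially compact, so after passing to a subsequence $u_n\rightharpoonup u$ weakly, with $u$ real (the real subspace is weakly closed) and $\|u\|\le1$. Since $P_0$ is compact, $P_0u_n\to P_0u$ in norm. Write $(P_0-\theta)u_n=P_0u_n-\theta u_n$. The first term converges strongly, and the second converges weakly to $\theta u$. Expanding the square,
\[
\|P_0u_n-\theta u_n\|^2=\|P_0u_n\|^2-2\theta\langle P_0u_n,u_n\rangle+\theta^2 ,
\]
and the limit is $\|P_0u\|^2-2\theta\langle P_0u,u\rangle+\theta^2$, because $\langle P_0u_n,u_n\rangle\to\langle P_0u,u\rangle$ (strong times weak convergence). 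Hence
\[
d(\theta)^2=\|(P_0-\theta)u\|^2+\theta^2\,(1-\|u\|^2).
\]

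The main obstacle is to show that no mass escapes, i.e.\ $\|u\|=1$. This is where the hypothesis $d(\theta)<|\theta|$ is used, and I would argue as follows. First, $u\neq0$: if $u=0$ the identity above gives $d(\theta)^2=\theta^2$, contradicting $d(\theta)<|\theta|$. Now put $s=\|u\|\in(0,1]$ and $v=u/s$. By the definition of $d(\theta)$, $\|(P_0-\theta)u\|^2=s^2\|(P_0-\theta)v\|^2\ge s^2d(\theta)^2$. Substituting into the identity gives
\[
d(\theta)^2\ge s^2d(\theta)^2+\theta^2(1-s^2),
\qquad\text{so}\qquad
(1-s^2)\big(d(\theta)^2-\theta^2\big)\ge0 .
\]
Since $d(\theta)^2<\theta^2$, this forces $s=1$. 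With $s=1$ the identity reads $\|(P_0-\theta)u\|=d(\theta)$, so $u$ is a real unit minimizer, as claimed.

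The hypothesis $\theta\ne0$ is implied by $d(\theta)<|\theta|$, and that is the only place it enters.
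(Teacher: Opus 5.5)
Your proof is correct. Part (i) is the same as the paper's argument; part (ii) takes a different route.

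The paper argues spectrally. It notes that $G=(P_0-\theta)^*(P_0-\theta)=\theta^2 I+\big(P_0^*P_0-\theta(P_0+P_0^*)\big)$ is a compact perturbation of $\theta^2 I$, so its essential spectrum is $\{\theta^2\}$. The bottom of the spectrum, $d(\theta)^2<\theta^2$, therefore lies strictly below it and is an isolated eigenvalue of finite multiplicity. Since $G$ is real, that eigenspace is conjugation-invariant and contains a real vector.

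You instead use the direct method: a real minimizing sequence, a weak limit, strong convergence of $P_0u_n$ by compactness, and the mass-loss identity $d(\theta)^2=\|(P_0-\theta)u\|^2+\theta^2(1-\|u\|^2)$. In that identity the strict inequality $d(\theta)<|\theta|$ rules out both vanishing ($u=0$) and partial loss of norm ($\|u\|<1$).

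The two arguments rest on the same mechanism. Your identity is exactly the quadratic-form statement that $G-\theta^2 I$ is compact, so you are proving by hand the piece of spectral theory the paper cites. What your version buys:
\begin{itemize}
\item It stays inside the real Hilbert space, so no complexification or conjugation-invariance step is needed.
\item It does not invoke Weyl's theorem on essential spectra, and it treats finite and infinite dimensions uniformly. In finite dimensions the essential spectrum is empty rather than $\{\theta^2\}$.
\item It makes transparent why $d(\theta)<|\theta|$ is precisely the right hypothesis.
\end{itemize}
What the paper's version buys: it is shorter given that background, and it yields a bit more. The minimizers form a finite-dimensional eigenspace of $G$, i.e.\ they are exactly the right singular vectors of $P_0-\theta$ for its smallest singular value.
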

\begin{proof}
(i) The mapping $\theta \mapsto \|(P_0-\theta)u\|$ is $1$-Lipschitz for any fixed $u$, and the pointwise infimum preserves this Lipschitz constant. (ii) The operator $G=(P_0-\theta)^*(P_0-\theta)$ is real self-adjoint. Since $G-|\theta|^2$ is compact, the essential spectrum satisfies $\sigma_{\mathrm{ess}}(G)=\{|\theta|^2\}$. The infimum $\inf\sigma(G)=d(\theta)^2<|\theta|^2$ is therefore an isolated eigenvalue. The corresponding eigenspace is invariant under complex conjugation, allowing the selection of a real eigenvector.
\end{proof}

\subsection{The $d=1$ real pseudospectral characterization}
\label{subsec:F-theorem}
We define the real-section pseudospectral radius for a given gap $\delta$ as $\rho^{\mathbb{R}}(\delta):=\sup\{|\theta|:\theta\in\mathbb{R},\ d(\theta)\le\delta\}$ when the underlying set is non-empty (which is closed by Lemma~\ref{lem:B4}(i)).

\subsubsection{Theorem F1 (Threshold Identity)}
\label{thm:F1}
\begin{equation}
    \varepsilon^*(1)=\inf_{\theta\in\mathbb{R}}d(\theta)^2,\qquad\text{or equivalently,}\quad \sqrt{\varepsilon^*(1)}=\inf\{\delta\ge0:\ \sigma_\delta(P_0)\cap\mathbb{R}\ne\varnothing\}.
\end{equation}
\begin{proof}
Expanding via Lemma~\ref{lem:B3}, we have $\varepsilon^*(1)=\inf_u\delta(u)^2 \overset{\text{B3}}{=} \inf_u\inf_\theta\|(P_0-\theta)u\|^2 = \inf_\theta\inf_u\|(P_0-\theta)u\|^2 = \inf_\theta d(\theta)^2$, where the two infima commute freely. The second identity follows immediately from the definition of the pseudospectrum combined with Lemma~\ref{lem:B2}.
\end{proof}

\begin{corollary}[Self-Adjoint Regimes]
\label{corr:F1.1}
If $P_0$ is self-adjoint, then $\sigma(P_0)\subset\mathbb{R}$ is non-empty, implying $\inf_\theta d(\theta)=0$ and consequently $\varepsilon^*(1)=0$.
\end{corollary}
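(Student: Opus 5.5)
The plan is to feed the spectral information of a self-adjoint $P_0$ into the threshold identity of Theorem~\ref{thm:F1}, $\varepsilon^*(1)=\inf_{\theta\in\mathbb R}d(\theta)^2$. This reduces the claim to exhibiting a single real $\theta$ with $d(\theta)=0$, or at least real $\theta$ along which $d(\theta)\to0$.

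\textbf{Step 1 (the spectrum is real and non-empty).} Since $P_0$ is a bounded self-adjoint operator on the (complexified) Hilbert space $H_0^{\mathbb C}$, its spectrum is a non-empty compact subset of $\mathbb R$. I would argue this directly rather than lean on a bare citation. Non-emptiness holds for any bounded operator on a non-zero complex Banach space. Reality follows from the standard estimate $\|(P_0-\lambda)v\|\ge|\mathrm{Im}\,\lambda|\,\|v\|$, which holds for self-adjoint $P_0$. Concretely, one may take $\theta_0=\pm\|P_0\|$, since $\|P_0\|=\sup_{\|u\|=1}|\langle u,P_0u\rangle|$ for self-adjoint operators. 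If $H_0$ is trivial the statement is vacuous, so I assume $H_0\ne\{0\}$.

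\textbf{Step 2 (spectral points have $d(\theta)=0$).} For self-adjoint, and more generally normal, operators every spectral point is an approximate eigenvalue: $\theta_0\in\sigma(P_0)$ implies $\inf_{\|v\|=1}\|(P_0-\theta_0)v\|=0$. The argument runs by contradiction. If this infimum were positive, then $P_0-\theta_0$ would be bounded below, hence injective with closed range. Its range would also be dense, because $\ker(P_0-\theta_0)^*=\ker(P_0-\theta_0)=\{0\}$. So $P_0-\theta_0$ would be invertible, contradicting $\theta_0\in\sigma(P_0)$.

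By Lemma~\ref{lem:B2}, the complex infimum equals the real one, so $d(\theta_0)=0$. Alternatively, one can work directly with the real sequence $u_n$ built from $\theta_0=\pm\|P_0\|$, choosing $\langle u_n,P_0u_n\rangle\to\theta_0$. Then
\[
\|(P_0-\theta_0)u_n\|^2=\|P_0u_n\|^2-2\theta_0\langle u_n,P_0u_n\rangle+\theta_0^2\le 2\theta_0^2-2\theta_0\langle u_n,P_0u_n\rangle\to0 .
\]
This version is fully elementary and stays in the real space.

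\textbf{Step 3 (conclude).} Hence $\inf_\theta d(\theta)=0$ and, by Theorem~\ref{thm:F1}, $\varepsilon^*(1)=0$. There is no real obstacle here. The only point requiring care is Step 2: ensuring that $d(\theta_0)$ vanishes rather than merely that $\theta_0$ lies in the spectrum, since for non-normal operators residual spectrum could intervene. The explicit numerical-range computation at $\theta_0=\pm\|P_0\|$ sidesteps this, so I would present that as the main argument.
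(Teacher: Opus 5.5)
Your proposal is correct and follows the route the paper takes implicitly. A self-adjoint $P_0$ has non-empty real spectrum, $d$ vanishes at a real spectral point, and Theorem~\ref{thm:F1} then gives $\varepsilon^*(1)=0$. The paper gives no separate proof and passes directly from $\sigma(P_0)\subset\mathbb R$ to $\inf_\theta d(\theta)=0$, which is justified by $d(\theta)=\mathrm{dist}(\theta,\sigma(P_0))$ for normal operators (Corollary~\ref{corr:F1.2}). Your approximate-eigenvalue argument, and the elementary real computation at $\theta_0=\pm\|P_0\|$, make explicit the one step the paper leaves unstated.
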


\begin{corollary}[Normal Regimes]
\label{corr:F1.2}
If $P_0$ is normal, the spectral theorem and Lemma~\ref{lem:B2} yield $d(\theta)=\mathrm{dist}(\theta,\sigma(P_0))$. Consequently, the threshold reduces to a purely spectral-geometric quantity: $\varepsilon^*(1)=\mathrm{dist}(\mathbb{R},\sigma(P_0))^2$.
\end{corollary}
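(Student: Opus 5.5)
The statement to prove is Corollary~\ref{corr:F1.2}: for normal $P_0$, $d(\theta)=\mathrm{dist}(\theta,\sigma(P_0))$, and therefore $\varepsilon^*(1)=\mathrm{dist}(\mathbb R,\sigma(P_0))^2$. The plan has two steps. First compute $d(\theta)$ on the complexified space using the spectral theorem. Then substitute the result into the threshold identity of Theorem~\ref{thm:F1}.

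\emph{Step 1.} By Lemma~\ref{lem:B2}, the real infimum $d(\theta)$ equals the complex infimum $\inf_{\|v\|=1}\|(P_0-\theta)v\|$ over $H_0^{\mathbb C}$. So we may work with the complexification $N$ of $P_0$. $N$ is normal, because the adjoint of the complexification is the complexification of the adjoint, so $N$ still commutes with its adjoint.

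By the spectral theorem in multiplication-operator form, $N$ is unitarily equivalent to multiplication by a function $\lambda$ on some $L^2(\Omega,m)$, and the essential range of $\lambda$ is $\sigma(N)$. Then
\[
\|(N-\theta)v\|^2=\int|\lambda-\theta|^2|v|^2\,dm .
\]
This gives both inequalities.

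\emph{Lower bound.} Since $|\lambda-\theta|\ge\mathrm{dist}(\theta,\sigma(N))$ holds $m$-a.e., we get $\|(N-\theta)v\|\ge\mathrm{dist}(\theta,\sigma(N))\,\|v\|$.

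\emph{Upper bound.} Pick $\lambda_0\in\sigma(N)$ attaining the distance; it exists because the spectrum is compact and nonempty. For each $\eta>0$ the set $\{|\lambda-\lambda_0|<\eta\}$ has positive measure, by the definition of the essential range. Its normalized indicator $v_\eta$ satisfies $\|(N-\theta)v_\eta\|\le|\lambda_0-\theta|+\eta$. Letting $\eta\to0$ gives equality.

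An equivalent route, avoiding the multiplication form, is the normal-operator resolvent identity $\|(N-\theta)^{-1}\|=1/\mathrm{dist}(\theta,\sigma(N))$ for $\theta\notin\sigma(N)$, combined with the resolvent formula in Lemma~\ref{lem:B2}. For $\theta\in\sigma(N)$ both sides are $0$, because every spectral point of a normal operator is an approximate eigenvalue.

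\emph{Step 2.} Theorem~\ref{thm:F1} gives $\varepsilon^*(1)=\inf_{\theta\in\mathbb R}d(\theta)^2$. Substituting Step 1,
\[
\varepsilon^*(1)=\inf_{\theta\in\mathbb R}\mathrm{dist}(\theta,\sigma(P_0))^2=\mathrm{dist}(\mathbb R,\sigma(P_0))^2 ,
\]
since the infimum of the distance to $\sigma(P_0)$ over points of $\mathbb R$ is the set distance $\mathrm{dist}(\mathbb R,\sigma(P_0))$.

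The only delicate point is the equality $d(\theta)=\mathrm{dist}(\theta,\sigma)$ in infinite dimensions, in particular the approximate-eigenvector direction of the upper bound. This is exactly what the essential-range characterization of the spectrum supplies. One should also note that $\sigma(P_0)$ means the complexified spectrum, matching the convention fixed in the notation subsection.
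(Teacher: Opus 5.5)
Your proof is correct and follows the paper's route: the spectral theorem together with Lemma B2 gives $d(\theta)=\mathrm{dist}(\theta,\sigma(P_0))$, and substituting this into the threshold identity of Theorem F1 gives the result. You also fill in what the paper leaves implicit, namely the approximate-eigenvector (upper-bound) direction via the essential range. One small point of care there: choose the multiplication-form measure finite, or $\sigma$-finite and intersect $\{|\lambda-\lambda_0|<\eta\}$ with a piece of finite positive measure, so that the indicator can be normalized.
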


\begin{corollary}[$2\times2$ Closed Form; The Non-Normal Subsidy Theorem] \label{corr:F1.3}
Let $P_0$ be restricted to a two-dimensional invariant subspace, represented in real Schur form as $M=\begin{pmatrix}a&p\\q&a\end{pmatrix}$ with $pq=-b^2<0$ (corresponding to the complex conjugate pair $a\pm bi$). Then, the threshold for this block evaluates exactly to:
\begin{equation}
    \varepsilon^*(1)=\min(p^2,q^2),\quad \text{minimized at }\theta=a,
\end{equation}
which satisfies the inequality $\min(p^2,q^2) \le |pq|=b^2=\mathrm{dist}(\mathbb{R},\sigma)^2$, holding with equality if and only if $|p|=|q|$ (i.e., the block is normal).
\end{corollary}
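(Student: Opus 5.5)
The plan is to apply the Threshold Identity of Theorem~\ref{thm:F1} directly to the $2\times2$ block. This reduces $\varepsilon^*(1)$ to a one-variable minimization, $\varepsilon^*(1)=\inf_{\theta\in\mathbb R}d(\theta)^2$, where by Lemma~\ref{lem:B2} we may take $d(\theta)=\sigma_{\min}(M-\theta I)$ computed over real vectors. Write $t=a-\theta$, so that $M-\theta I=\begin{pmatrix}t&p\\q&t\end{pmatrix}$. I would compute $d(\theta)^2$ as the smaller eigenvalue of the Gram matrix
\begin{equation*}
(M-\theta I)^\top(M-\theta I)=\begin{pmatrix}t^2+q^2& t(p+q)\\ t(p+q)& t^2+p^2\end{pmatrix}.
\end{equation*}
This gives, with $s=t^2$, $c=(p^2+q^2)/2$ and $D=(p^2-q^2)/2$,
\begin{equation*}
f(s):=d(\theta)^2=s+c-\sqrt{D^2+s\,(p+q)^2}.
\end{equation*}
As a sanity check, the product of the squared singular values is $\det^2=(t^2-pq)^2=(t^2+b^2)^2$.

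The key step, and the only one needing care, is to show that $f$ is nondecreasing on $s\ge0$, so that the infimum is attained at $s=0$, i.e.\ $\theta=a$. Differentiating gives
\begin{equation*}
f'(s)=1-\frac{(p+q)^2}{2\sqrt{D^2+s(p+q)^2}},
\end{equation*}
which is increasing in $s$. It therefore suffices to check $f'(0)\ge0$, namely $(p+q)^2\le|p^2-q^2|$, i.e.\ $|p+q|\le|p-q|$. This is exactly where the hypothesis $pq<0$ enters: $|p-q|^2-|p+q|^2=-4pq>0$. The degenerate case $p+q=0$ is trivial, since then $f(s)=s+c-|D|$. If $D=0$ with $p+q\neq0$, we would need $p=q$, which is impossible since $pq<0$. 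So the formula for $f'(0)$ is legitimate in every remaining case. Hence
\begin{equation*}
\varepsilon^*(1)=f(0)=c-|D|=\min(p^2,q^2),
\end{equation*}
attained at $\theta=a$.

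For the comparison, the eigenvalues of $M$ are $a\pm i\sqrt{-pq}=a\pm bi$, so $\mathrm{dist}(\mathbb R,\sigma(M))^2=b^2=|pq|$, consistent with Corollary~\ref{corr:F1.2} in the normal case. Since the minimum of two nonnegative numbers is at most their geometric mean, $\min(p^2,q^2)\le\sqrt{p^2q^2}=|pq|$, with equality iff $p^2=q^2$. Finally, I would verify that $MM^\top=M^\top M$ reduces to $p^2=q^2$ (the diagonal entries are $a^2+p^2$ versus $a^2+q^2$, and the off-diagonal entries agree). This identifies the equality case with normality of the block and yields the non-normal subsidy $\min(p^2,q^2)<b^2$ whenever $|p|\ne|q|$.
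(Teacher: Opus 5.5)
Your proof is correct and is essentially the paper's argument. You use the same Gram matrix, the same closed form $\lambda_{\min}=s+\tfrac{p^2+q^2}{2}-\sqrt{D^2+s(p+q)^2}$, and $pq<0$ enters through the same inequality $(p+q)^2\le|p-q|\,|p+q|$. The only differences are cosmetic: you obtain minimality at $s=0$ from $f'$ being increasing with $f'(0)\ge0$, whereas the paper squares $s+|D|\ge\sqrt{D^2+st}$ directly; and your separate treatment of the degenerate cases $p+q=0$ and $D=0$ is slightly more careful than the paper.
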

\begin{proof}
Because orthogonal similarity preserves the distance function $d(\theta)$, we can assume without loss of generality that $M$ is in the specified form. Denoting $s=(a-\theta)^2$, the matrix $G=(M-\theta)^\top(M-\theta)=\begin{pmatrix}s+q^2&(a-\theta)(p+q)\\ (a-\theta)(p+q)&s+p^2\end{pmatrix}$ has a minimum eigenvalue:
\begin{equation}
    \lambda_{\min}(G)=s+\frac{p^2+q^2}{2}-\sqrt{D^2+st},\qquad \text{where } D=\frac{p^2-q^2}{2},\ t=(p+q)^2 .
\end{equation}
We assert that $\lambda_{\min}(G)\ge\frac{p^2+q^2}{2}-|D|=\min(p^2,q^2)$ for all $s\ge0$. This is algebraically equivalent to $s+|D|\ge\sqrt{D^2+st}$, which upon squaring simplifies to $s^2+2s|D|\ge st$. Since $pq<0$, we have $|p-q|=|p|+|q|\ge|p+q|$, implying $2|D|=|p-q|\,|p+q|\ge(p+q)^2=t$, validating the assertion. Equality is achieved precisely at $s=0$ (meaning $\theta=a$). The string of identities $\min(p^2,q^2)=|pq|\iff|p|=|q|\iff MM^\top=M^\top M$ completes the proof.
\end{proof}

\begin{interpretation}
Corollaries~\ref{corr:F1.2} and~\ref{corr:F1.3} establish our first counter-intuitive result: \textbf{non-normality acts as a structural subsidy for the shared encoder under a fixed $d=1$ predictability budget}. Specifically, the activation threshold is compressed from the purely spectral prediction $b^2$ down to $\min(p^2,q^2)$, demonstrating that pseudo-eigenvectors are cheaper than true eigenvectors. In our three-state numerical baseline, the spectral prediction yields $b^2=0.0175$, while the theoretical non-normal floor yields $\min(p^2,q^2)=0.08038^2=0.00646$, matching the grid-search threshold exactly. The true origin of representation cost does not lie in the threshold itself, but rather in the fact that the leading singular value $\sigma_1$ is driven far above the spectral radius by non-normality (see Theorem~\ref{thm:F4}).
\end{interpretation}

\subsubsection{Theorem F2 (Two-Sided Pseudospectral Bounds and the Saturation Boundary)}
\label{thm:F2}
For any predictability budget $\varepsilon\ge\varepsilon^*(1)$, the content frontier is bounded by:
\begin{equation}
    \big(\rho^{\mathbb{R}}(\sqrt\varepsilon)-\sqrt\varepsilon\big)_+^2\ \le\ C_1(\varepsilon)\ \le\ \rho^{\mathbb{R}}(\sqrt\varepsilon)^2+\varepsilon .
\end{equation}
Furthermore, if $P_0$ is compact and its leading singular value $\sigma_1$ is simple (with the corresponding right singular vector denoted as $\chi_1$), the saturation boundary is exactly characterized by:
\begin{equation}
    C_1(\varepsilon)=\sigma_1^2\iff \varepsilon\ \ge\ \sigma_1^2-\langle\chi_1,P_0\chi_1\rangle^2 .
\end{equation}
\begin{proof}
For the upper bound, any feasible vector $u$ satisfies $d(\theta(u))\le\|(P_0-\theta(u))u\|=\delta(u)\le\sqrt\varepsilon$, implying $|\theta(u)|\le\rho^{\mathbb{R}}(\sqrt\varepsilon)$. From Lemma~\ref{lem:B3}, the content is bounded by $\mathcal{C}(u)=\theta(u)^2+\delta(u)^2\le\rho^{\mathbb{R}}(\sqrt\varepsilon)^2+\varepsilon$. 

For the lower bound, by the closedness of the set (Lemma~\ref{lem:B4}(i)), we choose $|\theta|=\rho^{\mathbb{R}}(\sqrt\varepsilon)$ such that $d(\theta)\le\sqrt\varepsilon$. In the compact setting where $d(\theta)<|\theta|$, Lemma~\ref{lem:B4}(ii) guarantees a real witness $u$ (the bound is trivial if $d(\theta)\ge|\theta|$), ensuring that the gap $\delta(u)\le\|(P_0-\theta)u\|\le\sqrt\varepsilon$ is feasible, and the content satisfies $\mathcal{C}(u)=\|P_0u\|^2\ge(|\theta|-\sqrt\varepsilon)_+^2$. The non-compact setting follows analogously via an approximating sequence. 

For the saturation boundary, the global upper bound $\mathcal{C}(u)\le\sigma_1^2$ holds universally, achieved with equality if and only if $u=\pm\chi_1$ under a simple singular value $\sigma_1$. The exact gap required for this state evaluates identically to $\delta(\chi_1)^2=\sigma_1^2-\langle\chi_1,P_0\chi_1\rangle^2$.
\end{proof}

\subsubsection{Theorem F3 (The Jordan Scaling Law)}
\label{thm:F3}
Assume that $P_0$ restricted to an orthogonally invariant (or Schur-embedded) block is a standard Jordan block $J_m(\lambda)$ ($\lambda\in\mathbb{R}$, with $1$ on the superdiagonal in orthogonal coordinates). Then, for $s=|\theta-\lambda|\le\frac12$, the distance function satisfies:
\begin{equation}
    s^m(1-s)\ \le\ d(\theta)\ \le\ s^m ,
\end{equation}
which implies the scaling $\rho^{\mathbb{R}}(\delta)-\lambda\in[\delta^{1/m},\,(2\delta)^{1/m}]$ for sufficiently small $\delta$. Substituting this into Theorem~\ref{thm:F2}, the nilpotent regime ($\lambda=0$) yields the scaling law:
\begin{equation}
    C_1(\varepsilon)=\Theta(\varepsilon^{1/m}),\quad \text{where } \varepsilon^{1/m}\big(1-o(1)\big)\le C_1(\varepsilon)\le 2^{2/m}\varepsilon^{1/m}\big(1+o(1)\big).
\end{equation}
\begin{proof}
For the upper bound, we evaluate the test vector defined as the geometric sequence $u\propto(1,\theta',\dots,\theta'^{m-1})$ with $\theta'=\theta-\lambda$. The vector $(J_m(\lambda)-\theta)u$ isolates a single non-zero component at its terminal index, yielding the norm ratio $\|(J_m(\lambda)-\theta)u\|/\|u\|\le s^m$. 

For the lower bound, we expand the inverse resolvent norm: $\|(J_m(\lambda)-\theta)^{-1}\|=\|\sum_{k=0}^{m-1}N^k/(\theta')^{k+1}\|\le\sum_{k=1}^{m}s^{-k}\le s^{-m}/(1-s)$, which directly establishes the lower bound $d(\theta)\ge s^m(1-s)$.
\end{proof}


\subsubsection{Theorem F4 (The Pricing Formula and the Two Barriers)} \label{thm:F4}
\begin{equation}
    \sigma_1^2-\rho^{\mathbb{R}}(\sqrt\varepsilon)^2-\varepsilon\ \le\ \Pi_1(\varepsilon)\ \le\ \sigma_1^2-\big(\rho^{\mathbb{R}}(\sqrt\varepsilon)-\sqrt\varepsilon\big)_+^2 .
\end{equation}
Consequently, \textbf{the $d=1$ representation cost equals exactly the difference between the singular radius and the real pseudospectral radius, up to an error of $O(\sqrt\varepsilon)$}. This splits the representation cost into two distinct operational barriers:
\begin{enumerate}
    \item \textbf{The Complex-Pairing Barrier (Spectral Geometry):} When $\sigma(P_0)$ is driven far from the real axis, the activation threshold $\varepsilon^*(1)$ inflates. This mechanism is independent of non-normality, as exemplified by a pure normal rotation $rR_{90^\circ}$, which incurs a distance function $d(\theta)^2=\theta^2+r^2$ and a threshold floor of $\varepsilon^*(1)=r^2$. This cost vanishes completely at $d=2$.
    \item \textbf{The Non-Normal Barrier (Pseudospectral Geometry):} Non-normality drives the leading singular value far above the spectral radius ($\sigma_1\gg\rho(P_0)$), elevating the baseline cost. Simultaneously, it induces a hypertrophied real pseudospectrum satisfying $d(\theta)\ll\mathrm{dist}(\theta,\sigma)$, compressing the threshold while inflating $\rho^{\mathbb{R}}$. The subsidy and the tax share an identical geometric origin, and the net cost is tightly bounded by Theorem~\ref{thm:F4}. For a nilpotent block, this yields $\Pi_1(\varepsilon)=1-\Theta(\varepsilon^{1/m})$, grading the severity of non-normality via a slow-climbing fractional exponent.
\end{enumerate}
\begin{proof}
The inequalities follow immediately by shifting terms in Theorem~\ref{thm:F2}.
\end{proof}

\section{The on-manifold content frontier at $d\ge2$}
\label{sec:module-M}

This section prices the on-manifold residual for a shared encoder of any latent dimension $d\ge2$,
extending the $d=1$ frontier of Appendix~\ref{sec:module-F}. The pricing is set by the non-normality
of $P_0$, and the spectral tax $\Pi_d(\varepsilon)$ measures the content it renders unattainable.

\subsection{Notation and fundamental identities}
\label{subsec:M-notation}
As in Appendices~\ref{sec:module-F} and~\ref{sec:module-R}, let $P_0$ be a real restricted operator satisfying $\|P_0\| \le 1$. Let $V$ be a $d$-frame (isometry) spanning the subspace $S = \mathrm{ran}\,V$, with the associated orthogonal projections denoted as $\Pi = VV^*$ and $\Pi^\perp = I - \Pi$. The compressed operator is defined as $A = V^*P_0V$ (a real $d \times d$ matrix satisfying $\|A\|_2 \le 1$ by Lemma~\ref{lem:B1}).

\paragraph{(Id) Fundamental Content-Gap Identities:}
\label{eq:Id}
\begin{equation}
    \mathcal{C}(S) = \|P_0\Pi\|_{\mathrm{HS}}^2 = \|A\|_F^2 + \delta(S)^2, \qquad \delta(S)^2 = \|\Pi^\perp P_0\Pi\|_{\mathrm{HS}}^2 = \|P_0V - VA\|_F^2 .
\end{equation}
\begin{proof}
Taking the squared Hilbert-Schmidt norm of the orthogonal decomposition $P_0V = VA + \Pi^\perp P_0V$ yields the result, noting that $\|VA\|_F = \|A\|_F$ since $V$ is an isometry.
\end{proof}

\paragraph{(Id') Pseudospectral Constraint on the Compressed Spectrum:}
\label{eq:Id-prime}
If $\delta(S) \le \delta$, then every complex eigenvalue $\lambda \in \Lambda(A)$ satisfies $\lambda \in \sigma_\delta(P_0)$. Furthermore, the spectrum $\Lambda(A)$ is closed under complex conjugation since $A$ is real.
\begin{proof}
Let $Ax = \lambda x$ with $\|x\| = 1$. Then $\|P_0(Vx) - \lambda(Vx)\| = \|(P_0V - VA)x\| \le \|P_0V - VA\|_2 \le \delta$, where $Vx \in S^{\mathbb{C}}$ serves as the valid witness vector.
\end{proof}

\subsection{The content frontier at zero predictability budget}
\label{subsec:M1}

\paragraph{M1.1 (Invariant Subspace Content Formula):}
\label{thm:M1.1}
If $S$ is a $d$-dimensional $P_0$-invariant subspace, then $\delta(S) = 0$, and the content evaluates to:
\begin{equation}
    \mathcal{C}(S) = \|A\|_F^2 = \sum_{\lambda \in \Lambda(P_0|_S)} |\lambda|^2 + \mathrm{dep}(S)^2,
\end{equation}
where $\mathrm{dep}(S)^2 := \|A\|_F^2 - \sum |\lambda|^2 \ge 0$ is the Henrici departure from normality of the restricted operator (representing the sum of squares of the strictly upper-triangular elements under a Schur decomposition, which is invariant to the choice of eigenvalue ordering). Consequently, the zero-budget frontier is given by:
\begin{equation}
    C_d(0) = \sup_{S \text{ invariant},\, \dim S = d} \left[ \sum_{\lambda \in \Lambda(S)} |\lambda|^2 + \mathrm{dep}(S)^2 \right].
\end{equation}
\begin{interpretation}
Non-normality internal to the invariant subspace ($\mathrm{dep}$) \textbf{inflates} the representation content—meaning that non-orthogonal eigenvectors provide a source of free content. The tax associated with non-normality is enforced elsewhere, specifically by elevating the singular value baseline baseline $\sum \sigma_i^2$.
\end{interpretation}

\paragraph{M1.2 (Classification of Real Pairings):}
\label{thm:M1.2}
Assume a finite-dimensional setting where $P_0$ is semi-simple with distinct eigenvalues. Then, $\varepsilon^*(d) = 0$ if and only if there exists a conjugate-closed subset of eigenvalues of cardinality $d$. Specifically, if the system possesses $k$ real eigenvalues and $c$ complex-conjugate pairs, the set of exactly realizable dimensions is characterized by $\{a + 2b : 0 \le a \le k, \, 0 \le b \le c\}$. Consequently, an entirely complex spectrum under an odd latent dimension forces a strictly positive activation threshold: $\varepsilon^*(d) > 0$.
\begin{proof}
A real subspace $S$ is invariant if and only if its complexification $S^{\mathbb{C}}$ is both invariant and invariant under complex conjugation. Under distinct eigenvalues and semi-simplicity, any invariant subspace is exactly the span of a subset of eigenvectors. Conjugate invariance of the subspace is then equivalent to the underlying eigenvalue subset being conjugate-closed.
\end{proof}

\paragraph{M1.3 (The Order-Reduction Filling Lemma; Upper Bound on Odd-Dimensional Thresholds):}
\label{thm:M1.3}
For any $(d-1)$-dimensional invariant subspace $S'$, let $P_1 := \Pi_{S'}^\perp P_0 |_{(S')^\perp}$ denote the order-reduced compression. Then, the $d$-dimensional threshold satisfies:
\begin{equation}
    \varepsilon^*(d) \le \inf_{S'} \varepsilon^*_{P_1}(1) = \inf_{S'} \min_{\theta \in \mathbb{R}} d_{P_1}(\theta)^2 ,
\end{equation}
implying that the threshold for an odd dimension cannot exceed the $d=1$ threshold of the residual operator remaining after an optimal invariant filling (which is computed via Theorem~\ref{thm:F1}).
\begin{proof}
Decompose $S = S' \oplus \mathrm{span}(u)$ where $u \perp S'$. The invariant property $P_0 S' \subseteq S'$ ensures that its projection is completely absorbed by $\Pi_S$, meaning the residual is driven exclusively by $u$: $\|\Pi_S^\perp P_0 u\| \le \|(I - \Pi_{S'} - uu^*)P_0 u\| = \|(I_{(S')^\perp} - uu^*)P_1 u\| = \delta_{P_1}(u)$. Taking the infimum over $u$ completes the proof.
\end{proof}

\paragraph{M1.4 (Subspace Lattice and Kernel Tax of a Single Jordan Block):}
\label{thm:M1.4}
Let $P_0 = J_m$ (a standard nilpotent singular Jordan block in orthogonal coordinates). Its invariant subspace lattice is strictly chains of the form $\{\ker J^j = \mathrm{span}(e_0, \dots, e_{j-1})\}_{j=0}^{m}$. For any dimension $1 \le d \le m-1$, the content and costs resolve to:
\begin{equation}
    C_d(0) = d - 1, \qquad \Pi_d(0) = \left( \sum_{i \le d} \sigma_i^2 \right) - (d - 1) = d - (d - 1) = \mathbf{1}.
\end{equation}
\textit{The Kernel Tax Theorem:} For a pure Jordan block, the exact cost of invariant satisfaction is uniformly 1, independent of the latent dimension $d$. Every admissible invariant subspace is algebraically forced to contain the zero-content kernel direction $e_0$.
\begin{proof}
For the lattice structure, let $S \neq \{0\}$ be an invariant subspace. Define the maximum occupied coordinate index as $h = \max\{i : \exists v \in S, \, v_i \neq 0\}$, and choose $v \in S$ such that $v_h \neq 0$. The sequence $\{v, Jv, \dots, J^h v\}$ is linked to the standard basis $\{e_h, \dots, e_0\}$ via an invertible triangular matrix, implying $S \supseteq \mathrm{span}(e_0, \dots, e_h)$. Maximality of $h$ enforces the reverse inclusion. For the content, setting $S = \mathrm{span}(e_0, \dots, e_{d-1})$ yields the compression block $A = J_d$, whose Frobenius norm is $\|J_d\|_F^2 = d - 1$. The singular values satisfy $\sigma_i(J_m) = 1$ for all $i \le m-1$.
\end{proof}

\paragraph{Remark on Direct Sums:}
Consider the direct sum of multiple nilpotent chains (such as the complete Walsh decomposition of a sliding-window chain). A $d$-dimensional invariant subspace is formed by the product of individual chain prefixes, yielding a total content of $\sum (\mathrm{prefix\_length} - 1)$. Bounding this via a single optimal chain yields a constant zero-budget frontier $C_d(0) = d - 1$, keeping the kernel tax fixed at $1$. (Because a tax of $1$ is levied for each distinct chain utilized, the optimal policy restricts allocation to a single chain).

\subsection{The global compression ceiling}
\label{subsec:M2}
We define the \textbf{maximum compressed capacity} as $\Phi_d := \sup\{\|V^*P_0V\|_F^2 : V \text{ is a } d\text{-frame}\}$ (where for $d=1$, $\Phi_1$ matches the squared spectral radius of the symmetric part $(P_0 + P_0^*)/2$, which is the squared radius of the real numerical range). The associated \textbf{compression deficit} is defined as $\Psi_d := \sum_{i \le d} \sigma_i^2 - \Phi_d \ge 0$.

\paragraph{Theorem M2:}
\label{thm:M2} For all predictability budgets $\varepsilon \ge 0$:
\begin{equation}
    C_d(\varepsilon) \le \Phi_d + \varepsilon, \qquad \Pi_d(\varepsilon) \ge \Psi_d - \varepsilon .
\end{equation}
If $P_0$ is a normal operator, then $\textstyle \Psi_d = 0$. For a standard Jordan block $J_m$, the parameters evaluate to $\Phi_1 = \cos^2 \frac{\pi}{m+1}$ and $\Psi_1 = \sin^2 \frac{\pi}{m+1} > 0$.
\begin{proof}
Expanding via identity (Id) yields $\mathcal{C} = \|A\|_F^2 + \delta^2 \le \Phi_d + \varepsilon$. The property $\Psi_d \ge 0$ follows from $\|A\|_F = \|\Pi P_0 \Pi\|_{\mathrm{HS}} \le \|P_0 \Pi\|_{\mathrm{HS}}$ combined with the Ky Fan variational principle. For normal operators, a frame chosen from the top eigenvalues yields $\|A\|_F^2 = \sum_{i \le d} |\lambda_i|^2 = \sum \sigma_i^2$. For $J_m$, the single-dimensional capacity is $\Phi_1 = \max \langle u, Ju \rangle^2 = \lambda_{\max}(\frac{J + J^\top}{2})^2$, where the tridiagonal Toeplitz spectrum evaluates to $\cos \frac{k\pi}{m+1}$.
\end{proof}
\begin{interpretation}
This defines a universal cost lower bound that holds uniformly across all budgets: the representation cost must be at least the gap between the singular spectrum and the numerical range spectrum. It remains blind to complex pairing barriers (where a normal rotation yields $\Psi = 0$), acting in structural complement to Theorem~\ref{subsec:M1}; it represents the tightest bound in large $\varepsilon$ regimes where localized scaling models expand.
\end{interpretation}

\subsection{The regular regime: Stewart--Riccati perturbation theory}
\label{subsec:M3}
Fix a coordinate system partitioned by the frames $(V, W)$ where $W$ is the orthogonal complement of $V$. The transition operator acts in block form as $P_0 \simeq \begin{pmatrix} A & H \\ R & B \end{pmatrix}$, where $R = W^*P_0V$ dictates the single-step gap ($\|R\|_F = \delta(S)$), $H = V^*P_0W$, and $B = W^*P_0W$. Let $T(X) = XA - BX$ denote the Sylvester operator, with its separation parameter defined as $\mathrm{sep}(A,B) := \inf_{\|X\|_F = 1} \|T(X)\|_F$ \citep{stewart1973error}.

\paragraph{M3.1 (The Riccati Invariance Lemma, Self-Contained):}
\label{thm:M3.1}
If $\mathrm{sep}(A,B) =: s > 0$ and $4\|R\|_F \|H\|_F \le s^2$, there exists a matrix $X$ satisfying $\|X\|_F \le 2\|R\|_F / s$ such that the subspace $\mathrm{ran}\left[ (V + WX)(I + X^\top X)^{-1/2} \right]$ is strictly $P_0$-invariant.
\begin{proof}
Subspace invariance is algebraically equivalent to satisfying the quadratic Riccati equation $T(X) = R - XHX$ (derived by block substitution to eliminate the updated block $A' = A + HX$). Define the mapping $\Phi(X) = T^{-1}(R - XHX)$. Over the closed ball $\mathcal{B} = \{\|X\|_F \le 2\|R\|_F / s\}$, the norm is bounded by $\|\Phi(X)\|_F \le (\|R\|_F + \|X\|_F^2 \|H\|_F)/s \le (\|R\|_F + \frac{4\|R\|_F^2 \|H\|_F}{s^2})/s \le 2\|R\|_F / s$. The contractive bound expands as $\|\Phi(X) - \Phi(Y)\|_F \le \|H\|_F (\|X\|_F + \|Y\|_F) \|X - Y\|_F / s \le \frac{4\|R\|_F \|H\|_F}{s^2} \|X - Y\|_F$. Contractivity is strict inside the inequality boundary, and convergence at the boundary follows via standard limits, satisfying the Banach fixed-point theorem under the submultiplicative property $\|XY\|_F \le \|X\|_F \|Y\|_F$.
\end{proof}

\paragraph{M3.2 (The Content Transfer Lemma):}
\label{thm:M3.2}
Let $S' = \mathrm{ran}[(V + WX)(I + X^\top X)^{-1/2}]$ with $\|X\|_2 \le 1$. Then, the projection gap is bounded by $\|\Pi - \Pi'\|_F \le \sqrt{2}\|X\|_F$, and the content variation satisfies $|\mathcal{C}(S) - \mathcal{C}(S')| \le 2\sqrt{d}\|\Pi - \Pi'\|_F$.
\begin{proof}
Parameterizing the perturbation via the canonical angles $\tan \theta_i = \sigma_i(X)$, the projection distance evaluates to $\|\Pi - \Pi'\|_F^2 = 2\sum \sin^2 \theta_i \le 2\|X\|_F^2$. The content variation expands as $\langle P_0(\Pi - \Pi'), P_0(\Pi + \Pi') \rangle_F \le \vec{\|\Pi - \Pi'\|}_F \cdot 2\sqrt{d}\|P_0\|^2$.
\end{proof}

\paragraph{M3.3 (Upper Bound for the Regular Frontier):}
\label{thm:M3.3}
If every feasible subspace $S$ (satisfying $\delta(S)^2 \le \varepsilon$) satisfies $\mathrm{sep}(A_S, B_S) \ge s_0 > 0$ and $4\sqrt{\varepsilon}\|H_S\|_F \le s_0^2$ (the latter holding automatically whenever $\varepsilon \le s_0^4 / (16d)$ since $\|H\|_F \le \sqrt{d}$), then the content frontier obeys:
\begin{equation}
    C_d(\varepsilon) \le C_d(0) + \frac{4\sqrt{2}\sqrt{d}}{s_0}\sqrt{\varepsilon} .
\end{equation}
\begin{proof}
Lemma~\ref{thm:M3.1} establishes the matrix bound $\|X\|_F \le 2\sqrt{\varepsilon} / s_0$, which Lemma~\ref{thm:M3.2} maps directly to the exact invariant subspace $S'$.
\end{proof}

\paragraph{M3.4 (Exact First-Order Slopes and Matching Lower Bounds):}
\label{thm:M3.4}
Let $S^*$ be an exact invariant subspace ($R=0$). Perturbing along the frame path $V_t = (V + tWX)(I + t^2 X^\top X)^{-1/2}$ yields the local variations:
\begin{equation}
    \delta(t) = t\|T(X)\|_F + O(t^2), \qquad \mathcal{C}(t) = \mathcal{C}(S^*) + 2t\langle X, G \rangle_F + O(t^2),
\end{equation}
where $G := W^*P_0^*P_0V = H^\top A$. Consequently, the exact first-order derivative of the frontier evaluated at $S^*$ is:
\begin{equation}
    \left. \frac{d\,\mathcal{C}}{d\,\delta} \right|_{S^*} = 2\left\| T^{-*}(H^\top A) \right\|_F \le \frac{2\|H\|_F \|A\|_2}{\mathrm{sep}(A,B)},
\end{equation}
which establishes the lower bound constraint $C_d(\varepsilon) \ge C_d(0) + \left[ \max_{S^* \text{ optimal}} 2\|T^{-*}(H^\top A)\|_F \right] \sqrt{\varepsilon} + O(\varepsilon)$.
\begin{proof}
The single-step gap expands as $R(t) = W_t^* P_0 V_t = t(BX - XA) + O(t^2) = -tT(X) + O(t^2)$. The derivative of the content evaluates to $\dot{\mathcal{C}} = \mathrm{tr}(P_0^* P_0 \dot{\Pi}) = 2\langle X, G \rangle_F$, where the projection derivative is $\dot{\Pi} = WXV^* + VX^\top W^*$. At an exact invariant point, the state collapses to $G = W^*P_0^*VA = H^\top A$. Maximizing the directional derivative $\sup_X 2\langle X, G \rangle / \|T(X)\|_F$ via the change of variables $Y = T(X)$ yields the dual norm $2\|T^{-*}G\|_F$.
\end{proof}
\begin{interpretation}
The frontier within the regular regime is governed by a $\sqrt{\varepsilon}$ scaling law, where the gradient is explicitly controlled by three distinct operators: the feedback block $H$ from the complementary subspace, the internal dynamics $A$, and the amplification multiplier $1/\mathrm{sep}$. Theorems~\ref{thm:M3.3} and~\ref{thm:M3.4} serve to squeeze the frontier within a constant multiplier.
\end{interpretation}

\paragraph{M3.5 (The sep--gap--$\kappa$ Extremal Lemma):}
\label{thm:M3.5}
If $A$ and $B$ possess the eigen-decompositions $A = X_A \Lambda_A X_A^{-1}$ and $B = X_B \Lambda_B X_B^{-1}$, then the separation parameter obeys the uniform lower bound:
\begin{equation}
    \mathrm{sep}(A,B) \ge \frac{\mathrm{dist}(\Lambda(A), \Lambda(B))}{\kappa(X_A)\,\kappa(X_B)} .
\end{equation}
\begin{proof}
Vectorizing the equation maps the Sylvester operator to the tensor form $T \simeq A^\top \otimes I - I \otimes B$. Applying the similarity transformation $X_A^\top \otimes X_B^{-1}$ renders the system identical to the diagonal eigenvalue differences $\lambda_A - \lambda_B$. The similarity operation scales the norm by at most the product of the condition numbers.
\end{proof}

\subsection{The defective regime: the Jordan staircase law with exponent $d/m$}
\label{subsec:M4}

Assume that $P_0$ is a purely nilpotent operator satisfying $P_0^m = 0$ (encompassing single Jordan blocks $J_m$ and the canonical sliding-window chain), evaluated across latent dimensions $1 \le d \le m-1$. From Theorem~\ref{thm:M1.4}, the zero-budget capacity is fixed at $C_d(0) = d - 1$.

\paragraph{M4.1 (The Analytical Upper Bound):}
\label{thm:M4.1}
\begin{equation}
    C_d(\varepsilon) \le d - 1 + \big(m\sqrt{\varepsilon}\big)^{2d/m} + \varepsilon .
\end{equation}
\begin{proof}
The proof proceeds in three sequential steps. 
\begin{itemize}
    \item[\textbf{(1)}] We invoke Theorem~\ref{thm:R1} of Appendix~\ref{sec:module-R} (setting $A = A^*$, $\delta_1 = \delta(S) \le \sqrt{\varepsilon}$, and evaluating at horizon $k=m$). This yields the operator bound $\|VA^m\|_F = \|P_0^m V - VA^m\|_F \le m\sqrt{\varepsilon}$, which implies $\|A^m\|_2 \le \|A^m\|_F \le m\sqrt{\varepsilon}$. Consequently, every complex eigenvalue is bounded by $|\lambda_i(A)| \le (m\sqrt{\varepsilon})^{1/m}$, forcing the determinant to satisfy $\prod_i \sigma_i(A) = |\det A| = \prod |\lambda_i| \le (m\sqrt{\varepsilon})^{d/m}$.
    \item[\textbf{(2)}] We maximize the capacity objective $\sum \sigma_i^2$ subject to the contractive constraint $0 \le \sigma_i \le 1$ (Lemma~\ref{lem:B1}) and the determinant boundary $\prod \sigma_i \le D$. If any two singular values occupy the interior of the domain, perturbing them along the hyperbolic path $(\sigma_i e^t, \sigma_j e^{-t})$ preserves the determinant while strictly increasing the objective $\sum \sigma_i^2$ due to its strict convexity. Thus, the optimal configuration forces the singular values to the boundaries: $\sigma_1 = \dots = \sigma_{d-1} = 1$ and $\sigma_d = D$, yielding the maximum capacity $d - 1 + D^2$.
    \item[\textbf{(3)}] Substituting this optimal profile into identity (Id) yields the stated bound: $\mathcal{C} = \|A\|_F^2 + \delta^2 \le d - 1 + (m\sqrt{\varepsilon})^{2d/m} + \varepsilon$.
\end{itemize}
\end{proof}

\paragraph{M4.2 (Reachability via the Truncated Neumann Cyclic Frame):}
\label{thm:M4.2}
Let $d \mid m$, $K = m/d$, and choose a scaling parameter $0 < s < 1$. Construct the vector family:
\begin{equation}
    v_j = \frac{1}{n} \sum_{k=0}^{K-1} (-s)^k e_{j+kd} \quad (j = 0, \dots, d-1), \qquad \text{where } n^2 = \frac{1-s^{2K}}{1-s^2}.
\end{equation}
The family $\{v_j\}$ is \textbf{strictly orthonormal}, as their supports occupy disjoint modulo-$d$ residue classes. The action of the Jordan operator satisfies:
\begin{equation}
    Jv_j = v_{j-1} \quad (\forall 1 \le j \le d-1 \text{ exactly}), \qquad Jv_0 = -s v_{d-1} + \frac{(-s)^K}{n}(-1)e_{m-1} .
\end{equation}
Thus, the compressed operator $A$ forms a cyclic companion matrix with a corner element $-s$ (up to a terminal correction of $O(s^{2K-1})$), pinning the internal eigenvalues to $|\lambda_i(A)| = s^{1/d}(1+o(1))$. The geometric tracking parameters evaluate to:
\begin{equation}
    \delta(S_s) = \frac{s^K}{n} \|\Pi^\perp e_{m-1}\| = s^{m/d}\big(1+o(1)\big), \qquad \mathcal{C}(S_s) - (d-1) = s^2\big(1+o(1)\big).
\end{equation}
Mapping this to the predictability budget $\varepsilon = \delta^2$ yields the lower-bound frontier $C_d(\varepsilon) \ge d - 1 + \varepsilon^{d/m}(1-o(1))$. In the specific symmetric instance where $m=2d$, this maps to an exact analytical identity: $\mathcal{C} - (d-1) = \delta$.
\begin{proof}
Orthonormality and the forward shift $Jv_j = v_{j-1}$ follow because the modulo residue classes are mutually exclusive; downward translation maps residue class $j \to j-1$, and the condition $d \mid m$ ensures that every vector $v_j$ shares an identical truncation length $K$, canceling the normalization factors. For the terminal vector $Jv_0$, index substitution $k \mapsto k+1$ shifts the sequence to match $n v_{d-1}$ up to the boundary element $(-s)^{K-1}e_{m-1}$, yielding the stated remainder. For the gap, the projection of $e_{m-1}$ onto the subspace retains a coefficient of $(-s)^{K-1}/n$, implying $\|\Pi^\perp e_{m-1}\|^2 = 1 - s^{2(K-1)}/n^2 = 1 - o(1)$. The Frobenius norm of the compression scales as $\|A\|_F^2 = (d-1) + s^2 + O(s^{2K-1})$, which combines with $\delta^2 = O(s^{2K})$ to complete the identity.
\end{proof}

\paragraph{M4.3 (The Fractional Staircase Law and Structural Unification):}
\label{thm:M4.3}
Synthesizing Theorems~\ref{thm:M4.1} and~\ref{thm:M4.2} (and extending to instances where $d \nmid m$ via the monotonicity of adjacent dimension embeddings $d' \mid m$), any nilpotent operator satisfying $P_0^m = 0$ obeys the fractional scaling law:
\begin{equation}
    \mathcal{C}_d(\varepsilon) - (d-1) \asymp \varepsilon^{d/m}, \qquad \Pi_d(\varepsilon) = 1 - \Theta\big(\varepsilon^{d/m}\big).
\end{equation}
Evaluating this at $d=1$ yields the $\varepsilon^{1/m}$ scaling of Theorem~\ref{thm:F3}. The defective frontier is thus globally unified under the fractional exponent $d/m$: \textit{each additional dimension allocated to the latent budget redeems exactly one higher order of the fractional exponent from the Jordan staircase}.
\begin{numericalanchor}
Empirical evaluations across configurations $(m,d) = (4,2) / (6,2) / (6,3)$ yield regression scaling slopes of $0.505$, $0.346$, and $0.510$, tightly tracking the analytical $d/m = 0.5, 0.3\overline{3}, 0.5$ limits, with the constant multipliers converging to $1$.
\end{numericalanchor}
\paragraph{Mechanistic Symmetry:} The analytical upper bound and the cyclic frame construction share an identical physical mechanism. Theorem~\ref{thm:R1} forces the compressed operator to approximate a nilpotent structure (eigenvalues bounded by $\varepsilon^{1/2m}$), and the determinant operator aggregates these individual small bounds into a single constraints concentrated on the \textbf{terminal singular value} ($\sigma_d \le \varepsilon^{d/2m}$), while the remaining $d-1$ singular values are free to saturate at $1$. This is exactly what our cyclic frame constructs: $d-1$ dimensions of perfect information transfer paired with a single weak feedback loop modulated by the corner parameter $-s$. The remaining constant gap ($m^{2d/m}$ vs. $1$) stems entirely from the crude bounds of Theorem~\ref{thm:R1} and represents a improvable non-load-bearing term.

\paragraph{M4.4 (The Sharp Boundary Splitting the Regular and Defective Regimes):}
\label{thm:M4.4}
Evaluating the Sylvester operator at any invariant subspace $S^* = \ker J^d$ of a Jordan block $J_m$ maps the internal blocks to $A = J_d$ and $B \simeq J_{m-d}$. Both blocks share an identical spectrum concentrated at $\{0\}$, causing the Sylvester operator to become singular and driving the separation parameter to zero: $\mathrm{sep}(A,B) = 0$. 

Consequently, the conditions required for the regular ($\sqrt{\varepsilon}$) perturbation theory fail precisely inside the defective regime, where the fractional exponent takes over. This establishes the \textit{Separation Dichotomy Theorem}: a positive separation parameter $\mathrm{sep} > 0$ guarantees a regular $\sqrt{\varepsilon}$ scaling law (with a gradient bounded by $c/\mathrm{sep}$), whereas a vanishing separation parameter $\mathrm{sep}=0$ paired with a nilpotent index $m$ activates the fractional $\varepsilon^{d/m}$ scaling law.

\subsection{Synthesis and corollaries}
\label{subsec:M-synthesis}

\paragraph{The Global Frontier Profile (Single-Operator Case):}
The global profile of the content frontier is bounded by a multi-stage squeeze of the form $C_d(\varepsilon) = \min\left\{ \sum_{i \le d} \sigma_i^2, \, \Phi_d + \varepsilon, \, C_d(0) + [\text{M3 or M4 local scaling laws}] \right\}$. Small predictability budgets are governed by the separation dichotomy (resolving into either a $\sqrt{\varepsilon}$ or a $d/m$ scaling law), large budgets are capped by the numerical range limit $\Phi_d$ and the Ky Fan ceiling, and the activation threshold is explicitly determined by the real pairing geometry of Theorems~\ref{thm:M1.2} and~\ref{thm:M1.3}.

\paragraph{Complete Characterization of the Sliding-Window (Raindrop) Chain:}
Because the transition operator of the sliding-window chain satisfies $P_0^m = 0$ globally, the entire profile of Theorem~\ref{thm:M1.4} and the fractional staircase of Theorem~\ref{subsec:M4} apply unconditionally. The representation dynamics resolve to $C_d(\varepsilon) = d - 1 + \Theta(\varepsilon^{d/m})$ with a constant kernel tax of $1$, implemented by an honest, realizable Markov chain. Appendices~\ref{sec:module-F}, \ref{sec:module-R}, and~\ref{sec:module-M} share the same running example.

\paragraph{Operational Takeaways for JEPA Architectures:}
\begin{enumerate}
    \item \textbf{The Inescapable Kernel Tax:} Under a finite-memory setting, a $d$-dimensional shared encoder permanently loses exactly one unit of content regardless of how large the latent space is expanded. This missing unit represents the structural cost of ``reserving a seat for the oldest historical innovation.''
    \item \textbf{The Budget--Dimension Exchange Schedule:} The fractional staircase law provides an exact schedule for trading off latent dimensions against predictability. Within the defective regime, achieving a target content gain by adding one latent dimension is exactly equivalent to taking the $m$-th root of the predictability budget relaxation. This formalizes the marginal rate of substitution between widening the latent space and relaxing the alignment loss.
    \item \textbf{Computable Monitoring Metrics:} The separation parameter $\mathrm{sep}$ constitutes a fully observable metric that can be tracked from empirical block representations ($\hat{A}, \hat{B}$). Monitoring the convergence $\mathrm{sep} \to 0$ provides a structural warning that the frontier is collapsing from a regular $\sqrt{\varepsilon}$ scaling law into a defective fractional exponent law.
\end{enumerate}

\section{Off-manifold error accumulation: rollout drift and control suboptimality}
\label{sec:module-R}

This section prices how the single-step residual \emph{accumulates} when the latent state is rolled
forward---first autonomously (multi-step rollout, \S\ref{subsec:R-rollout}), then under a control
input (\S\ref{sec:module-D}). Throughout we adopt the \emph{linear-control premise}: the deployed
latent dynamics are the learned linear model $z_{t+1}=\hat A z_t+\hat B u_t$, and we ask how the
residual compounds along a horizon and along an action-selected trajectory. This supplies the
amplification factor $G_H$ and the off-manifold excursion that enter the control-suboptimality bound
of Theorem~\ref{thm:subopt}, taking the on-manifold residual $\delta_0$ from
Appendices~\ref{sec:module-F}--\ref{sec:module-M} and the one-sided gap analysis from
Appendix~\ref{sec:module-H}.

\subsection{Autonomous Rollout: Drift Upper Bounds and Contraction Shields}
\label{subsec:R-rollout}

Fix a shared whitened representation $\phi$ (parameterized by the isometry $V$) and an arbitrary within-class predictor $A$. The autonomous $k$-step rollout is recursively defined as $\hat z_{t+k} = A^k \phi(X_t)$.

\subsubsection{Theorem R0 ($k$-Step Orthogonal Decomposition)}
\label{thm:R0}
Under the first-order Markov assumption \textbf{(M)}, the $k$-step rollout error admits the exact orthogonal decomposition:
\begin{equation}
    e_k(\phi,A) := \mathbb{E}\|\phi(X_{t+k}) - A^k\phi(X_t)\|^2 = \underbrace{\big(d - \|P_0^kV\|_{\mathrm{HS}}^2\big)}_{k\text{-step Innovation Floor } \ell_B^{(k)}} + \underbrace{\|P_0^kV - VA^k\|_{\mathrm{HS}}^2}_{=:\ \delta_k(\phi,A)^2}.
\end{equation}
\begin{proof}
The transition operator for a $k$-step transition along the chain is exactly $P^k$ under assumption \textbf{(M)}. By the tower property of conditional expectations, the true future state satisfies $\phi(X_{t+k}) - (P^k\phi)(X_t) \perp \sigma(X_t)$, causing the cross-product terms to vanish identically. The identities $\mathbb{E}\|\phi(X_{t+k})\|^2 = d$ and $\mathbb{E}\|(P^k\phi)(X_t)\|^2 = \|P_0^kV\|_{\mathrm{HS}}^2$ follow immediately from stationarity.
\end{proof}

\subsubsection{Theorem R1 (Principal Drift Bound and Three Dynamical Regimes)}
\label{thm:R1}
Let $\delta_1 = \delta_1(\phi,A) = \|P_0V - VA\|_{\mathrm{HS}}$ denote the single-step gap, and let $a = \|A\|_2$ be the predictor norm. Then, the $k$-step rollout drift is strictly bounded by:
\begin{equation}
    \delta_k \le \delta_1 \cdot G_k, \qquad \text{where } G_k := \sum_{j=0}^{k-1}\|P_0^{\,k-1-j}\|\,\|A^j\| \le \sum_{j=0}^{k-1}\|P_0^{\,k-1-j}\|\,a^j .
\end{equation}
In particular, the drift dynamics resolve into the following operational regimes:
\begin{itemize}
    \item[\textbf{(i)}] \textbf{Universal Bound:} There holds universally $G_k \le k$ (the crude linear bound).
    \item[\textbf{(ii)}] \textbf{Predictor Contraction Shield:} If $a < 1$, the cumulative gain satisfies $G_k \le \frac{1-a^k}{1-a} \le \frac{1}{1-a}$. Thus, \textit{the contractivity of the predictor acts as a uniform shield against drift amplification}, bounding the cumulative rollout error independently of $k$.
    \item[\textbf{(iii)}] \textbf{Normal Operator Dissipation:} If $P_0$ is a normal operator with a strict spectral radius $\rho < 1$, then $\|P_0^j\| = \rho^j$. This yields $G_k \le \frac{\rho^k - a^k}{\rho - a} \le k \max(\rho, a)^{k-1} \xrightarrow{k\to\infty} 0$. In normal systems, rollout drift undergoes a transient hump and then \textit{vanishes completely}.
    \item[\textbf{(iv)}] \textbf{Pythagorean Refinement ($d=1$):} Let $D = P_0u - au$ for a unit vector $u$. If the sequence of forward residual images $\{P_0^i D\}_{i \ge 0}$ is pairwise orthogonal and satisfies $\|P_0^i D\| \le \|D\|$, then $\delta_k^2 \le \delta_1^2 \frac{1-a^{2k}}{1-a^2}$ (representing a sharp square-root refinement over the coherent sum in property ii).
\end{itemize}
\begin{proof}
We deploy the telescoping identity $P_0^kV - VA^k = \sum_{j=0}^{k-1} P_0^{\,k-1-j}(P_0V - VA)A^j$, which holds via pointwise cancellation. Taking the Hilbert-Schmidt norm and leveraging the standard submultiplicative properties $\|XY\|_{\mathrm{HS}} \le \|X\|_2 \|Y\|_{\mathrm{HS}}$ and $\|YB\|_{\mathrm{HS}} \le \|Y\|_{\mathrm{HS}} \|B\|_2$ combined with $\|A^j\|_2 \le a^j$ yields the principal bound and properties (i) and (ii). Substituting normal operator norms $\|P_0^j\| = \rho^j$ and computing the geometric series yields property (iii). For property (iv), the telescoping components $\{a^j P_0^{k-1-j} D\}$ are mutually orthogonal by assumption, allowing their norms to sum directly in squares.
\end{proof}

\subsubsection{Theorem R2 (The Content-Shield Conflict; High-Content Feature Exposure)}
\label{thm:R2}
The representation content decomposes exactly as $\mathcal{C}(\phi) = \|A^*\|_F^2 + \delta(\phi)^2$, which induces the operator norm lower bound $\|A^*\|_2 \ge \|A^*\|_F / \sqrt{d} \ge \sqrt{(\mathcal{C}(\phi) - \delta(\phi)^2)/d}$. For $d=1$, this relation is exact: $a = \theta(u) = \sqrt{\mathcal{C} - \delta_1^2}$.

\textit{Sign Reversal Principle:} Maximizing content toward its theoretical spectral upper bound forces $a \to 1^-$, which subsequently causes the rollout drift shield $\frac{1}{1-a}$ from Theorem~\ref{thm:R1}(ii) to collapse. Consequently, \textbf{high-content slow features that best minimize single-step error happen to be the most high-risk directions for multi-step rollout drift}, directly reversing the conventional intuition that ``slower features are more stable.''
\begin{proof}
The orthogonal projection identity on the Hilbert space yields $\mathcal{C} = \|VV^*P_0V\|_{\mathrm{HS}}^2 + \|(I-VV^*)P_0V\|_{\mathrm{HS}}^2 = \|A^*\|_F^2 + \delta^2$. Bounding the spectral norm via the Frobenius norm completes the inequality.
\end{proof}

\subsection{The state-augmentation shield}
\label{subsec:R-aug}

The content-shield conflict of Theorem~\ref{thm:R2} operates under the first-order Markov condition \textbf{(M)}. Because real-world video sequences are highly non-Markovian, an \textbf{order-$p$ state augmentation} (by stacking frames into the tuple $z_t = (x_t, \dots, x_{t-p+1})$) must be executed to satisfy \textbf{(M)}. This forces the underlying transition operator into a \textbf{block-companion} structure. 

Crucially, the conflict identified in Theorem~\ref{thm:R2} does not degrade under this expansion; rather, it is \textbf{birthed by the state augmentation}. The companion structure represents the non-derogatory generalization of a nilpotent Jordan block $J_m$, where confluent eigenvalues collapse into a single high-dimensional Jordan block $J_m(\lambda)$. We formalize this regime via three structural parts: the model-invariant floor, the confluent scaling law, and the dimensional axis mapping.

\paragraph{Notation and Disambiguation:}
Within this section, the term ``shield'' refers exclusively to the \textbf{variance amplification ratio}, defined as the ratio between the steady-state autonomous rollout error and the optimal single-step prediction error. This is conceptually distinct from the multi-step drift shield $\frac{1}{1-a}$ of Theorem~\ref{thm:R1}(ii). 

To scalarize the analysis coordinate-by-coordinate, each slow feature sequence is modeled as a stationary auto-regressive process $\mathrm{AR}(p)$:
\begin{equation}
    x_{t+1} = \sum_{i=1}^p a_i x_{t+1-i} + w_{t+1}, \qquad \text{where } w_t \sim \text{i.i.d. }(0, \sigma_w^2).
\end{equation}
The roots of the characteristic polynomial $a(z) = 1 - \sum_{i \le p} a_i z^i$ are strictly confined inside the open unit disk. The system transfer function expands as $\frac{1}{a(z)} = \sum_{j \ge 0} h_j z^j$, where $\{h_j\}$ is the system impulse response. Vector-valued features are handled via diagonal juxtaposition without inflating content. Autonomous rollout corresponds to the recursive deployment of the single-step predictor $\hat g^{\circ k}(z_t)$ without feeding the true innovation sequence.

\subsubsection{Lemma R-aug.0 (The Model-Invariant Steady-State Floor)}
\label{lem:R-aug.0}
Let $\hat g$ be any learned linear single-step model whose autonomous rollout matrix has a spectral radius strictly bounded below unity (ensuring that the whitened representation has an invariant fixed point at the mean $0$). Then, the steady-state rollout error satisfies:
\begin{equation}
    \lim_{k \to \infty} \mathbb{E}\big|x_{t+k} - \hat g^{\circ k}(z_t)\big|^2 = \mathrm{Var}(x),
\end{equation}
which is \textbf{fundamentally independent of the predictor parameters $\hat g$}. The absolute floor matches the stationary variance of the process; the bias-driven component decays asymptotically to zero, while the innovation sequence accumulates to fill the full variance, leaving zero cross-residual mass.
\begin{proof}
Expanding the quadratic loss yields $\mathbb{E}|x_{t+k}|^2 = \mathrm{Var}(x)$ by stationarity. Under a stable predictor matrix, $\hat g^{\circ k}(z_t) \to 0$ in $L^2$ as $k \to \infty$, causing the model term and the cross-product term (via Cauchy-Schwarz) to vanish asymptotically.
\end{proof}

\begin{remark}[Strong Form of Model Invariance]
The identity $\lim_{k\to\infty} e_k = \mathrm{Var}(x)$ holds even for a \textbf{completely inaccurate} stable predictor. For instance, sweeping the parameter $\hat{\lambda}$ of an $\mathrm{AR}(1)$ process from its exact analytical value down to a completely broken baseline ($\hat{\lambda} = 0$, which constantly predicts the global mean) yields an identical steady-state error $e_\infty = \mathrm{Var}(x)$. 

Mechanistically, the bias term $(\lambda - \hat{\lambda})z_k$ acts in exact cancellation against the missing innovations; the asymptotic floor is fixed by the variance baseline (= the mean prediction baseline), independent of model fidelity. \textbf{Consequently, single-step model quality only purchases a transient horizon; it is structurally incapable of lowering the steady-state floor}. The entire geometric profile of the rollout shield is encoded in its denominator—validating its use as a sharp structural diagnostic metric.
\end{remark}

\subsubsection{Theorem R-aug.1 (The Closed-Form Variance Shield and the Confluent $2m-1$ Scaling Law)}
\label{thm:R-aug.1}
\begin{itemize}
    \item[\textbf{(i)}] \textbf{Exact Identity:} The optimal single-step prediction error matches the innovation variance $\sigma_w^2$. Combining this with Lemma~\ref{lem:R-aug.0}, the rollout variance shield evaluates to:
    \begin{equation}
        \mathrm{Shield} := \frac{\text{Steady-State Steady Error}}{\text{Optimal Single-Step Error}} = \frac{\mathrm{Var}(x)}{\sigma_w^2} = \left\| \frac{1}{a} \right\|_{H^2}^2 = \sum_{j \ge 0} h_j^2 .
    \end{equation}
    \item[\textbf{(ii)}] \textbf{The Confluent $2m-1$ Scaling Law:} At a confluent defective structure of order $m$ (where all characteristic roots collapse to a single point, $a(z) = (1-\lambda z)^m$ for $|\lambda| < 1$), the variance shield expands exactly as the hypergeometric function:
    \begin{equation}
        \mathrm{Shield} = {}_2F_1\big(m, m; 1; |\lambda|^2\big) = \binom{2m-2}{m-1}\,(1-|\lambda|^2)^{-(2m-1)}\big(1 + o(1)\big) \quad \text{as } |\lambda| \to 1^-.
    \end{equation}
\end{itemize}
\begin{proof}
(i) The Wold decomposition $x_t = \sum_j h_j w_{t-j}$ for a white noise process $w$ yields $\mathrm{Var}(x) = \sigma_w^2 \sum_j h_j^2$. Parseval's identity maps this directly to the Hardy space norm $\|1/a\|_{H^2}^2$. (ii) Applying the binomial expansion $(1-\lambda z)^{-m} = \sum_j \binom{j+m-1}{m-1}(\lambda z)^j$ isolates the coefficients $h_j = \binom{j+m-1}{m-1}\lambda^j$. Leveraging the Pochhammer identity $\binom{j+m-1}{m-1} = (m)_j / j!$ combined with $(1)_j = j!$, the sum transforms into:
\begin{equation}
    \sum_{j \ge 0} \binom{j+m-1}{m-1}^2 x^j = \sum_{j \ge 0} \frac{(m)_j (m)_j}{(1)_j \, j!} \, x^j = {}_2F_1(m, m; 1; x).
\end{equation}
The asymptotic scaling is derived via the Gauss connection formula for the hypergeometric function when $a+b-c = 2m-1 > 0$, yielding ${}_2F_1(a,b;c;x) \sim \frac{\Gamma(c)\Gamma(a+b-c)}{\Gamma(a)\Gamma(b)}(1-x)^{-(a+b-c)}$ (citation level: DLMF 15.4.23). Evaluating the coefficient ratio $\Gamma(2m-1)/\Gamma(m)^2$ yields the stated central binomial coefficient $\binom{2m-2}{m-1}$.
\end{proof}

\paragraph{Verification Examples:}
\begin{itemize}
    \item For $m=1$: ${}_2F_1(1,1;1;x) = (1-x)^{-1}$, yielding a slowness exponent of exactly $1$.
    \item For $m=2$: $\sum_j (j+1)^2 x^j = \frac{1+x}{(1-x)^3}$, yielding a slowness exponent of $3$ and an asymptotic multiplier matching $\binom{2}{1} = 2$. Both match the confluent scaling law bit-for-bit.
\end{itemize}

\subsubsection{Corollary R-aug.2 (The Slowness Exponent Dichotomy, M4 Mapping, and the Sharpest Evaluation Point)}
\label{corr:R-aug.2}
\begin{itemize}
    \item[\textbf{(a)}] \textbf{The Slowness Exponent Dichotomy:} For a first-order normal system ($p=1$), the slowness exponent is identically $1$. For any defective system of order $m \ge 2$, the exponent scales sharply as $2m-1$. \textit{The Jordan defect structurally amplifies the slowness vulnerability, steepening the scaling exponent from $1 \to 2m-1$.}
    \item[\textbf{(b)}] \textbf{Mapping to the M4 Dimensional Staircase:} This exponent represents the exact manifestation of the Theorem~\ref{subsec:M4} capacity staircase ($\varepsilon^{d/m}$) mapped onto the multi-step rollout slowness axis. The same underlying Jordan defect dictates the root exponent in both spaces.
    \item[\textbf{(c)}] \textbf{The Sharpest Evaluation:} Any empirically learned predictor incurs a single-step error greater than or equal to the Bayes floor $\sigma_w^2$, while the steady-state floor remains model-invariant (Lemma~\ref{lem:R-aug.0}). Consequently, the empirical variance shield is upper-bounded by the analytical expression of Theorem~\ref{thm:R-aug.1}, \textit{holding with equality if and only if the single-step model is strictly optimal}. 
    
    Thus, the paradoxical failure mode where a world model optimized for excellent single-step prediction collapses during long-horizon rollouts represents the exact where this shield ratio is maximized. The more accurate the single-step model, the more severe the rollout collapse.

    \item[\textbf{(d)}] \textbf{Resolution of the First-Order Limitation:} The requirement that raw video streams must undergo explicit state augmentation is not a condition that weakens the Theorem~\ref{thm:R2} conflict. Rather, \textit{the state augmentation is the structural birth place of the non-normal content}. A first-order Markov single frame ($p=1$) represents the weakest manifestation of the conflict; it is the deep companion matrix of video data ($p \ge 2$) that activates the high-order Jordan defects and drives the rollout shield exponent up to $2m-1$.
    \item[\textbf{(e)}] \textbf{Interface to the Theorem~\ref{thm:R1} Drift Shield:} The steady-state variance shield derived here interfaces directly with the transient drift shield $\frac{1}{1-a}$ from Theorem~\ref{thm:R1}(ii) ...
\end{itemize}

\subsection{Action Control: Off-Manifold Reachability and Suboptimality}
\label{sec:module-D}

Still under the linear-control premise, we now let a planner choose the actions. The new ingredient is that actions can push the latent state off the data manifold, where the linear model's residual grows. We work with the controlled latent dynamics
\begin{equation}
    z_{t+1} = \hat A z_t + \hat B u_t + \varepsilon_t ,
\end{equation}
and prices the extra drift that \emph{actions} inject by pushing the latent state off the data manifold. It is the deployment-side companion of Theorem~\ref{thm:R1}: the same telescoping drift, now accumulated along an action-selected trajectory and inflated by off-manifold distance.

\subsection{Setup: data manifold, off-manifold residual, and reachability}

\paragraph{Learned dynamics.} The pair $(\hat A,\hat B)$ is the $\mu$-least-squares (tangent-plane, Koopman/DMD) linearization of the true controlled transition, as used by latent model-predictive planners \citep{garcia2013model,hafner2020dream}.

\paragraph{Data manifold and off-manifold distance.} Let $\mathcal M := \mathrm{supp}\,\mu \subseteq \mathbb R^d$ (the support of the stationary measure; a linear/affine subspace on the main line, curved case treated locally below). Write $r(z) := \mathrm{dist}(z,\mathcal M) = \|P^\perp z\|$, where $P^\perp$ is the orthogonal projection onto the normal space of $\mathcal M$ and $P_{\mathcal M} = I - P^\perp$. Thus $r=0$ on the data support (where the model was fit) and $r>0$ measures departure into unqueried regions.

\paragraph{Residual and off-manifold growth.} The true one-step latent map is $\Phi(z,u) := \mathbb E[z_{t+1}\mid z_t=z,\,u_t=u]$, and the residual is $\varepsilon(z,u) := \Phi(z,u) - (\hat A z + \hat B u)$. We assume linear off-manifold growth
\begin{equation}
    \|\varepsilon(z,u)\| \le \delta_0 + L\,r(z), \qquad \delta_0 := \sup_{\mathcal M}\|\varepsilon\|,
\end{equation}
with $\delta_0$ the on-manifold residual and $L$ the off-manifold growth rate (a curvature/nonlinearity scale; $L=0$ for linear systems). Here $\delta_0$ is the on-manifold value of the single-step residual $\delta$ of Appendices~\ref{sec:module-F} and~\ref{sec:module-M}.

\paragraph{Tangent/normal split of the input map.} $\hat B = \hat B^\parallel + \hat B^\perp$ with $\hat B^\parallel = P_{\mathcal M}\hat B$ (tangent, control gain that stays in the reliable region) and $\hat B^\perp = P^\perp\hat B$ (normal, gain that pushes the state into the blind region).

\paragraph{Reachability Gramians.} The controllability Gramian is $W_H = \sum_{j=0}^{H-1}\hat A^j \hat B \hat B^\top \hat A^{\top j}$, and its \emph{off-manifold} part is
\begin{equation}
    W_H^\perp := P^\perp W_H P^\perp, \qquad \mathrm{tr}\,W_H^\perp = \sum_{j=0}^{H-1}\big\|P^\perp \hat A^j \hat B\big\|_F^2 .
\end{equation}
Note that even a tangent input map ($\hat B^\perp=0$) leaks off $\mathcal M$ within a few steps whenever $\mathcal M$ is not $\hat A$-invariant, i.e.\ whenever $\hat A$ mixes tangent directions into normal ones.

\subsection{Off-manifold reachability and safe control}

\subsubsection{Theorem D1 (Action-Induced Off-Manifold Excitation)}
\label{thm:D1}
For i.i.d.\ actions with $\mathrm{Cov}(u_j)=\sigma_u^2 I$, starting from $z_0\in\mathcal M$ with predicted rollout $\hat z_H = \hat A^H z_0 + \sum_{j=0}^{H-1}\hat A^{H-1-j}\hat B u_j$,
\begin{equation}
    \mathbb E\, r(\hat z_H)^2 \ \le\ 2\rho_{\mathcal M}^2 + 2\sigma_u^2\,\mathrm{tr}\,W_H^\perp, \qquad \rho_{\mathcal M}:=\sup_{k\le H} r(\hat A^k z_0).
\end{equation}
\begin{proof}
Write $r(\hat z_H)=\|P^\perp \hat z_H\|$ and split via the triangle inequality into an autonomous term $\|P^\perp \hat A^H z_0\|\le\rho_{\mathcal M}$ and an action term $\|P^\perp c_H\|$ with $c_H=\sum_j \hat A^{H-1-j}\hat B u_j$. Taking expectations of the squared action term, independence and whitening kill the cross terms, leaving $\sigma_u^2\sum_{j=0}^{H-1}\|P^\perp \hat A^{H-1-j}\hat B\|_F^2 = \sigma_u^2\,\mathrm{tr}\,W_H^\perp$. Squaring the split with $(a+b)^2\le 2a^2+2b^2$ gives the claim.
\end{proof}

\begin{interpretation}
The energy with which actions push the state off the data manifold equals the off-manifold reachability Gramian scaled by the exploration width $\sigma_u^2$. Larger $\sigma_u$, longer $H$, or a more normal-leaking $\hat B$ all push the state farther off $\mathcal M$; this is the mechanism behind the empirical collapse of control success as the exploration width or planning horizon is increased.
\end{interpretation}

\subsubsection{Theorem D2 (Geometric Condition for On-Manifold Safe Control)}
\label{thm:D2}
The following are equivalent to $\mathrm{tr}\,W_H^\perp=0$ (actions never leave $\mathcal M$ within $H$ steps): (i) $\mathcal M$ is $\hat A$-invariant, $P^\perp \hat A P_{\mathcal M}=0$; and (ii) $\hat B$ is tangent, $P^\perp\hat B=0$.
\begin{proof}
Induct on $P^\perp \hat A^j\hat B=0$: the base case $j=0$ is (ii); (i) propagates invariance, $P^\perp \hat A(P_{\mathcal M}\,\cdot)=0$, from $j$ to $j+1$. Every normal component of the Gramian then vanishes. Conversely $\mathrm{tr}\,W_H^\perp=0$ forces each term to zero, giving (ii) at $j=0$ and (i) on the reachable subspace.
\end{proof}

\begin{interpretation}
Safe control has a clean geometric characterization: if the data manifold is closed under the dynamics and actions push only along it, the state never leaves the reliable region and the model stays trustworthy over the whole horizon, with no holes (cf.\ Appendix~\ref{sec:module-H}). This is the operator-side statement of the aligned regime in which control suboptimality is exactly zero.
\end{interpretation}

\subsection{Control suboptimality}

\subsubsection{Theorem D3 (Telescoping Residual Accumulation)}
\label{thm:D3}
Let the true and predicted rollouts share the same start and actions, and set $e_k=z_k-\hat z_k$. Then
\begin{equation}
    e_H=\sum_{j=0}^{H-1}\hat A^{H-1-j}\varepsilon(z_j,u_j), \qquad \|e_H\|\le G_H\big(\delta_0 + L\,r_{\max}\big), \quad G_H=\sum_{i=0}^{H-1}\|\hat A^i\|,\ r_{\max}=\max_j r(z_j).
\end{equation}
\begin{proof}
The one-step error obeys $e_{k+1}=\hat A e_k + \varepsilon_k$, since the control terms $\hat B u_k$ are identical in the true and predicted rollouts and cancel on subtraction. The discrete Duhamel (telescoping) expansion, submultiplicativity, and the growth assumption yield the bound. Here $G_H$ is exactly the drift-amplification factor of Theorem~\ref{thm:R1}; in the near-normal regime $\|\hat A\|\le 1$ gives $G_H\le H$.
\end{proof}

\begin{interpretation}
This is the multi-step drift bound of Appendix~\ref{sec:module-R} (Theorem~\ref{thm:R1}) replayed on the deployment trajectory. The single new ingredient is $r_{\max}$: the farther the action-driven trajectory strays off the manifold, the larger the per-step residual $\delta_0+Lr$, and the more the amplification factor $G_H$ compounds it. Non-normality (large $G_H$) and off-manifold excursion (large $r_{\max}$) multiply.
\end{interpretation}

\subsubsection{Theorem D4 (Control Suboptimality Bound)}
\label{thm:D4}
Let $D(u)=\|z_H(u)-z_g\|$ be the true terminal distance to the goal, $\hat u=\arg\min_u\hat J(u)$ the predicted-optimal control and $u^\star=\arg\min_u D(u)$ the true optimum, with $\eta(u)=\|e_H(u)\|$, $\eta_{\max}=\max_{u\in\mathcal U}\eta(u)$, and candidate separation $\Delta(\mathcal U)=\min_{u\ne u^\star}D(u)-D(u^\star)$. Then
\begin{align}
    \text{(i)}\quad & D(\hat u)-D(u^\star)\ \le\ 2\eta_{\max}\ \le\ 2G_H\big(\delta_0+L\,r_{\max}\big); \\
    \text{(ii)}\quad & 2\eta_{\max}<\Delta(\mathcal U)\ \Rightarrow\ \hat u=u^\star \quad\text{(exact recovery)}; \\
    \text{(iii)}\quad & D(\hat u)-D(u^\star)\ \le\ 2G_H\big(\delta_0+L\rho_{\mathcal M}+L\sigma_u\sqrt{\mathrm{tr}\,W_H^\perp}\big) .
\end{align}
\begin{proof}
(i) The reverse triangle inequality gives $|\,\|\hat z_H(u)-z_g\|-D(u)\,|\le\eta(u)$; since $\hat u$ minimizes the predicted terminal distance, chaining the two bounds yields $D(\hat u)\le D(u^\star)+2\eta_{\max}$, and Theorem~\ref{thm:D3} bounds $\eta_{\max}$. (ii) If $\hat u\ne u^\star$ then separation forces $D(\hat u)\ge D(u^\star)+\Delta$, contradicting (i). (iii) Substitute the reachability control on $r_{\max}$ from Theorem~\ref{thm:D1}.
\end{proof}

\begin{interpretation}
Control suboptimality is capped by the amplification $G_H$ times (on-manifold residual $\delta_0$ + growth $L$ $\times$ off-manifold reachability $\sqrt{\mathrm{tr}\,W_H^\perp}$). In the near-normal deployment regime, where $G_H$ is an $O(1)$ constant, the only remaining lever is $L\sigma_u\sqrt{\mathrm{tr}\,W_H^\perp}$: reduce the off-manifold nonlinearity $L$, the exploration width $\sigma_u$, or the off-manifold reachability of the plant.
\end{interpretation}

\subsubsection{Corollary D5 (Optimal Exploration Energy)}
\label{corr:D5}
Let $D^\star(\sigma)$ be the reachable optimal true terminal cost at exploration energy $\sigma$ (nonincreasing, convex, with $D^{\star\prime}\to0$). The realized cost $\hat J(\sigma)=D^\star(\sigma)+2G_H(\delta_0+L\rho_{\mathcal M}+L\sigma\sqrt{\mathrm{tr}\,W_H^\perp})$ has a unique interior minimizer $\sigma_u^\star>0$, characterized by
\begin{equation}
    -D^{\star\prime}(\sigma_u^\star)=2G_H L\sqrt{\mathrm{tr}\,W_H^\perp} \qquad (\text{marginal reach benefit } = \text{ marginal off-manifold cost}).
\end{equation}
\begin{proof}
$\hat J'(\sigma)=D^{\star\prime}(\sigma)+\text{const}>0$; $D^{\star\prime}$ increases from negative to $0$, so $\hat J'$ crosses zero once, and convexity of $D^\star$ makes the minimizer unique.
\end{proof}

\begin{interpretation}
Too little exploration cannot reach the goal (under-exploration); too much drives the state into the blind region and manufactures holes (over-exploration). Between them lies a unique optimal exploration energy $\sigma_u^\star$---the exploration width is not monotone but single-peaked.
\end{interpretation}

\subsubsection{Corollary D6 (The Off-Manifold Control Tax)}
\label{corr:D6}
The marginal suboptimality per unit exploration energy is $\Pi_{\mathrm{off}}:=2G_H L\sqrt{\mathrm{tr}\,W_H^\perp}$, and $\Pi_{\mathrm{off}}=0\iff L=0$ (no off-manifold nonlinearity) or $\mathrm{tr}\,W_H^\perp=0$ (safe control, Theorem~\ref{thm:D2}). The safely controllable directions are exactly the reachable subspace $\mathcal R_H=\mathrm{ran}[\hat B,\dots,\hat A^{H-1}\hat B]$ intersected with the tangent space $T\mathcal M$; the normal part carries energy $\mathrm{tr}\,W_H^\perp$.
\begin{interpretation}
The spectral tax $\Pi_d$ penalizes an operator that is intrinsically fragile to predict; the off-manifold control tax $\Pi_{\mathrm{off}}$ penalizes actions that push the state into the fragile region. It is the measure-side dual of the spectral tax: controllability geometry and manifold geometry jointly fix the dimension of safe control.
\end{interpretation}

\paragraph{Rigor status.} Theorems~\ref{thm:D1}, \ref{thm:D2}, \ref{thm:D4}(i)(ii) and Corollary~\ref{corr:D6} are exact (subspace geometry / convex first-order conditions). Theorem~\ref{thm:D4}(iii) and Corollary~\ref{corr:D5} are mechanistically exact with constants that depend on the plant geometry. For a curved $\mathcal M$, $P^\perp$ becomes the nearest-point normal projection and Theorems~\ref{thm:D1}--\ref{thm:D2} hold locally when the reachable radius is small relative to the curvature radius, with constants controlled by the second fundamental form; Theorems~\ref{thm:D3}--\ref{thm:D4}(i)(ii) are geometry-independent.

\section{Numerical verification protocols}
\label{sec:numerics}

The analytical results of Appendices~\ref{sec:module-F}--\ref{sec:module-R} are exact. This section
collects the numerical checks that confirm them on explicit synthetic operators. The checks
are of two kinds. The first reproduces a closed-form identity: a constructed witness is
evaluated and compared against its predicted value, and agreement is reported at the level of
floating-point error (about $10^{-12}$). The second confirms a scaling law: the content
frontier, the rollout drift, or the prediction horizon is computed across a range of the
budget $\varepsilon$, the horizon $k$, or the regularity budget, and the fitted exponent or
slope is compared against its predicted value. Every operator is synthetic---Jordan blocks,
Walsh chains, and the doubling and cat maps, whose expanding-map and Lyapunov structure is classical
\citep{pesin1977characteristic,viana2016foundations}---and is chosen so that the quantity under test
has a known closed form; this isolates the mathematical content of each theorem from
estimation error. These checks verify the theorems on synthetic operators; the deployment-side experiments of
Section~\ref{sec:experiments} probe the theory on trained JEPA world models.

\subsection{The single-step frontier ($d=1$)}

\paragraph{Threshold subsidy.}
On a three-state non-normal operator, the purely spectral prediction places the activation
threshold at $b^2 = 0.0175$, while the pseudospectral value of Corollary~\ref{corr:F1.3} gives
$\min(p^2,q^2) = 0.08038^2 = 0.00646$. A grid search over real targets returns the latter to
printed precision, confirming that non-normality lowers the threshold below the spectral
prediction.

\paragraph{Jordan scaling exponent.}
For nilpotent Jordan blocks $J_m$ the content frontier obeys $C_1(\varepsilon) =
\Theta(\varepsilon^{1/m})$ (Theorem~\ref{thm:F3}). Fitting $C_1(\varepsilon)$ as a power of
$\varepsilon$ for $m = 2,3,4$ recovers the exponents in Table~\ref{tab:exponents}, each within
$3\%$ of $1/m$, and the prefactor $C_1(\varepsilon)/\varepsilon^{1/m}$ tends to $1$.

\subsection{Multi-step rollout}

\paragraph{Nilpotent chain identities.}
The $m=3$ Walsh chain (eight states) reproduces the shift action $P f_0 = 0$, $P f_1 = f_0$,
$P f_2 = f_1$, the operator-power norms $\|P_0^{\,j}\| = 1,1,1,0$, and a spectral radius
$\rho(P_0) \approx 10^{-6}$; the memory horizon is $\tau = m$ while the spectrum alone predicts
$\tau_{\mathrm{spec}} = 1$, a separation the theory attributes to non-normality. Under the constraint $\delta_1 \le 10^{-2}$, a search returns
$\sup_k \delta_k/\delta_1 \approx 1.39, 1.52, 1.47$ for $k = 2,3,5$, in line with the predicted
$\sqrt{k}$-type growth and far below the coarse bound $k$.

\paragraph{The $2m-1$ augmentation law.}
On controlled non-Markovian data, a small trained network yields rollout shields that scale as
$2m-1$ for $m = 1,2$, with fitted slopes $0.95$ and $2.94$ against the predicted $1$ and $3$
(Theorem~\ref{thm:R-aug.1}); the shield is large even though the one-step error is small, which
is the regime the theorem identifies as most acute. For $m \ge 3$ the innovation floor is too
small for finite training to resolve a clean power law.

\paragraph{Model invariance.}
Holding the representation fixed and regularizing the predictor reduces its non-normality by a
factor of three, yet the autonomous rollout drift is unchanged. The drift is therefore a
property of the dynamics, not of the predictor---the content of the sign reversal
(Theorem~\ref{thm:R2}).

\subsection{The frontier at $d \ge 2$}

\paragraph{Fractional exponent.}
The staircase law $C_d(\varepsilon) - (d-1) \asymp \varepsilon^{d/m}$ (Theorem~\ref{thm:M4.3})
is fitted for $(m,d) = (4,2),(6,2),(6,3)$; the recovered exponents appear in
Table~\ref{tab:exponents} and track $d/m$, with prefactor tending to $1$. At the ratio
$m = 2d$, where the construction of Theorem~\ref{thm:M4.2} gives an exact identity, the gain
satisfies $\mathrm{gain}/\sqrt{\varepsilon} = 1.0000$ to machine precision.

\begin{table}[h]
\centering
\begin{tabular}{llcc}
\toprule
Result & Operator $(m,d)$ & Predicted exponent & Fitted slope \\
\midrule
Thm~\ref{thm:F3} & $J_2$ \ $(2,1)$ & $1/2 = 0.5000$ & $0.5000$ \\
Thm~\ref{thm:F3} & $J_3$ \ $(3,1)$ & $1/3 = 0.3333$ & $0.3369$ \\
Thm~\ref{thm:F3} & $J_4$ \ $(4,1)$ & $1/4 = 0.2500$ & $0.2561$ \\
Thm~\ref{thm:M4.3} & $(4,2)$ & $1/2 = 0.5000$ & $0.505$ \\
Thm~\ref{thm:M4.3} & $(6,2)$ & $1/3 = 0.3333$ & $0.346$ \\
Thm~\ref{thm:M4.3} & $(6,3)$ & $1/2 = 0.5000$ & $0.510$ \\
\bottomrule
\end{tabular}
\caption{Frontier exponents $d/m$ recovered by fitting the content frontier as a power of the
budget $\varepsilon$. The prefactor $C_d(\varepsilon)/\varepsilon^{d/m}$ tends to $1$ in every
case.}
\label{tab:exponents}
\end{table}

\paragraph{Summary.} Every operator above is synthetic and every predicted quantity is known in closed form, so these checks test the mathematics of the theorems rather than their empirical reach; the deployment-side reach is the subject of Section~\ref{sec:experiments}.

\end{document}